\newtheorem{proposition}{\textcolor{black}{Proposition}}
\newtheorem{lemma}{Lemma}
\newtheorem{theorem}{Theorem}
\newtheorem{definition}{Definition}
\newtheorem{corollary}{Corollary}
\newtheorem*{remark}{Remark}
\title{Shared Spatial Memory Through Predictive Coding}
\author[1, 4, $\star$]{Zhengru~Fang}
\author[1, $\star$]{Yu~Guo}
\author[2]{Yuang~Zhang}
\author[1]{Haonan~An}
\author[3]{Wenbo~Ding}
\author[1]{Yuguang~Fang}
\affil[1]{Department of Computer Science, City University of Hong Kong, Hong Kong}
\affil[2]{Department of Automation, Tsinghua University, Beijing, China}
\affil[3]{Shenzhen International Graduate School, Tsinghua University, Shenzhen, China}
\affil[$\star$]{indicates equal contribution}
\begin{abstract}
The mammalian hippocampus harbors specialized neural populations, often termed social place cells, that encode the locations of conspecifics and support collective navigation. Although such representations are believed to arise from the same 
predictive machinery supporting individual spatial cognition, a computational principle that can reproduce this capacity in 
artificial multi-agent systems remains unclear. Here, we introduce a multi-agent predictive coding framework that 
casts coordination as the minimization of mutual predictive uncertainty, namely each agent reduces uncertainty about other 
agents' future states under limited communication. This objective induces an information bottleneck that adaptively 
determines \emph{who} communicates \emph{when} and \emph{what} to transmit, while self-supervised motion prediction promotes the emergence of 
grid-cell-like spatial codes. Notably, the same learning dynamics give rise to specialized neural units that encode 
partners' locations, serving as artificial analogues of hippocampal social place cells (SPCs). In-silico lesion experiments 
further show that these social representations are causally required for coordination. On the Memory-Maze benchmark, our 
approach degrades gracefully from 73.5\% to 64.4\% success as bandwidth shrinks from 128 to 4 bits/step, whereas a 
full-broadcast baseline collapses from 67.6\% to 28.6\%. Together, these results identify predictive uncertainty 
minimization as a biologically grounded blueprint for bandwidth-efficient multi-agent coordination.
\end{abstract}
\begin{document}

\flushbottom
\maketitle

\thispagestyle{empty}

\section*{Introduction}\label{sec:introduction}
\begin{figure*}[t!]
\centering
\includegraphics[width=\textwidth]{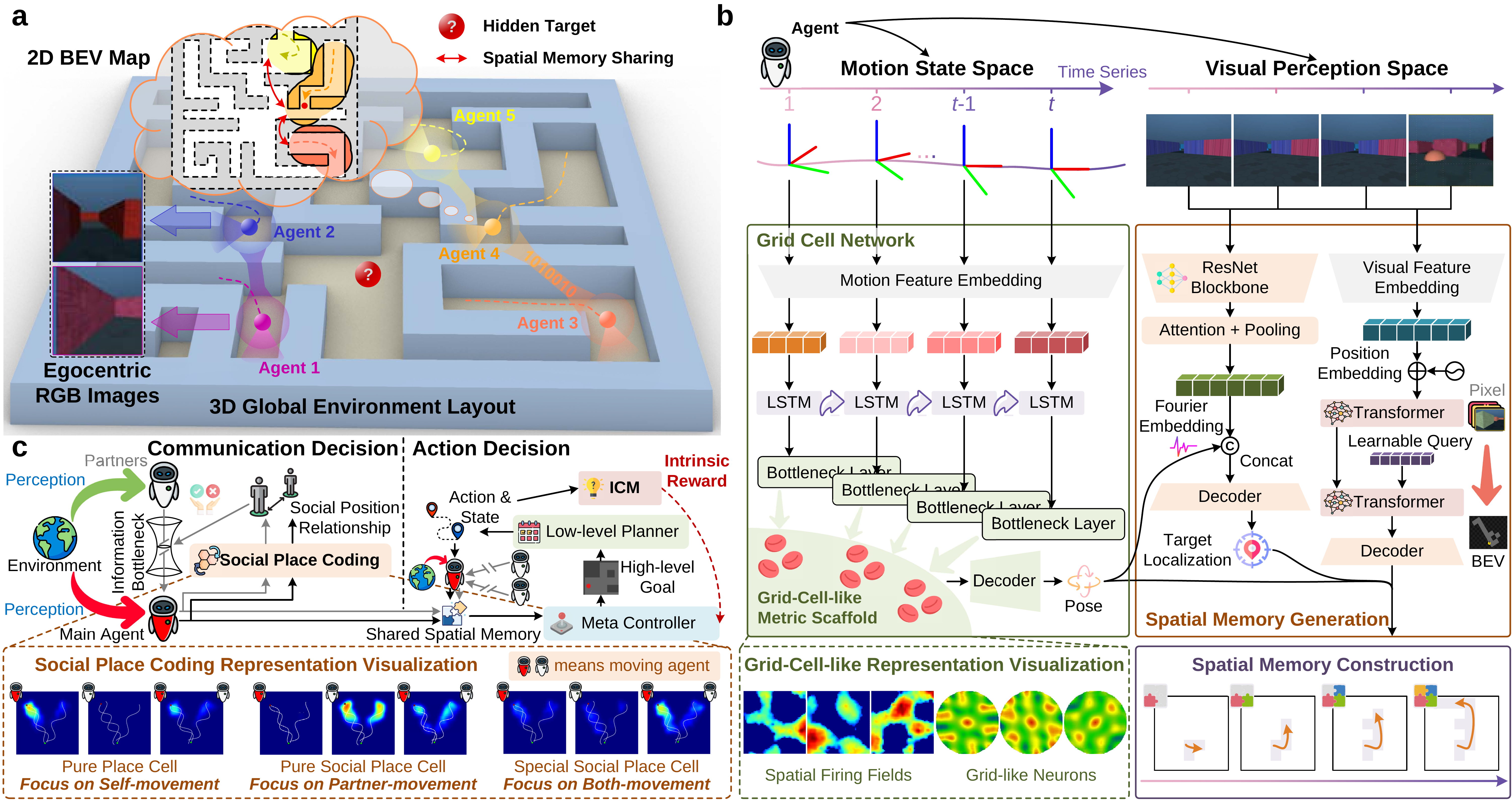}
\caption{\textbf{Overview of the predictive coding framework for sharing spatial memory.}
\textbf{a}, The multi-agent cooperative navigation task. Multiple agents, each with egocentric vision input, explore a 3D environment to find a hidden target. They coordinate by building and sharing a 2D bird's-eye-view (BEV) map via learned, emergent symbols. \textbf{b}, The single-agent spatial memory module. This module consists of two streams. The left stream, a Grid Cell Network, functions as a LSTM-based path integrator that processes the agent's motion state to estimate its pose. Its bottleneck layer spontaneously develops hexagonal activation patterns, mimicking biological grid cells. The right stream uses a Transformer-based network to generate a BEV map from visual inputs. The pose information from the path integrator is then used to accurately register the BEV map, constructing a coherent spatial memory of the maze layout. \textbf{c}, The agent's decision-making process via shared spatial memory. This process is divided into communication and action decisions. The communication decision is managed by an information bottleneck that adaptively adjusts data compression. Crucially, as this process must account for social peers, the network architecture gives rise to emergent social place cell-like activations. The action decision is handled by a hierarchical framework where a meta controller, trained with a multi-agent proximal policy optimization (MAPPO) algorithm and guided by an enhanced intrinsic curiosity module (ICM), directs a low-level planner to navigate toward regions that reduce the uncertainty of spatial memory.}
\label{fig:framework_overview}
\end{figure*}
Collective intelligence enables groups to solve complex tasks exceeding individual capabilities~\cite{soria2021predictive, yang2022autonomous, bai2025swarm, couzin2009collective, berdahl2018collective,duenez2023social}. From ant colonies to wolf packs, effective coordination depends on forming and acting on shared environmental understanding~\cite{basu2021orbitofrontal, wen2024one, farzanfar2023cognitive}. Specifically, in spatial navigation, animals align internal representations through sparse exchange of high-level cues, suggesting the presence of shared cognitive maps: distributed representations that encode resources, hazards, and conspecifics~\cite{whittington2022build,villafranca2021integrating,gornet2024automated,omer2018social,zhang2024multiplexed,Bray2018other_placecell}.

Neural algorithms underlying shared representations extend innate mechanisms supporting individual cognition. In mammals, the hippocampal-entorhinal system provides the substrate for cognitive maps, with place cells encoding locations and grid cells providing a metric scaffold~\cite{hafting2005microstructure,whittington2020tolman,ginosar2021locally,gardner2022toroidal,wagner2023entorhinal}. ``Social place cells''---neurons firing when a partner occupies particular locations---offer compelling evidence for neural substrates integrating self with peers~\cite{zhou2024vector,xu2022grid,liang2024distance}. These findings indicate that the brain possesses sophisticated machinery not only for building a model of its own world but also for representing the world of others, a prerequisite for any meaningful social coordination. However, a critical distinction exists between representing others within single brains versus coordinating across brains through severely bandwidth-constrained communication. This raises a fundamental question: What computational principle, grounded in how biological brains organize social cognition, would be sufficient for artificial agents to reproduce this capacity under severely bandwidth-constrained communication?

Existing multi-agent communication methods offer only partial answers because none treats communication as sending what a partner will need next~\cite{10438074}. Learned recipient-selection approaches~\cite{das2019tarmac,ding2020learning,du2021learning} determine \emph{who} receives a message, but do not optimize message content to reduce the receiver's \emph{future} uncertainty. Bandwidth-aware scheduling methods~\cite{9597491,sun2024dynamic,zhang2020succinct} determine \emph{when} to transmit based on local triggers or heuristics, without considering the receiver's predictive needs. Information-bottleneck approaches~\cite{wang2020learning,ding2023robust,Wang2020DecompComm} compress messages to remove redundancy, but optimize for immediate relevance rather than downstream predictive utility, and performance degrades sharply under tight bandwidth constraints~\cite{fang2025ton}. What all three families lack is an explicit model of the partner: without predicting what a partner will observe next, an agent cannot identify which information would most help that partner, which is precisely the strategy that makes biological coordination efficient.

We demonstrate that the optimal shared spatial memory emerges from minimizing mutual predictive uncertainty between agents. Our multi-agent predictive coding framework implements this through: \textbf{Level 1}, grid-cell-like spatial metrics for self-localization; \textbf{Level 2}, bandwidth-efficient communication and emergent social place cells; \textbf{Level 3}, hierarchical policies reducing collective uncertainty through coordinated exploration.

We ground our approach in predictive coding and information theory, constructing a shared spatial memory system operating under strict communication constraints. Agents with egocentric vision explore environments and communicate via emergent symbols (\textbf{Fig.~\ref{fig:framework_overview}a}). Each agent builds predictive bird's-eye-view (BEV) maps from observations (\textbf{Fig.~\ref{fig:framework_overview}b}), then exchanges insights through an information bottleneck minimizing partners' uncertainty and constructing efficient shared representations. Visualized in \textbf{Fig.~\ref{fig:framework_overview}c}, networks develop neurons tuned to self-location, partner location, or both, mirroring the mammalian hippocampus (\textbf{Fig.~\ref{fig:social_place_code}b}). The framework integrates two mechanisms. First, agents build predictive models generating BEV maps from vision, scaffolded by internal path integrators spontaneously producing grid-cell-like representations without supervision (\textbf{Fig.~\ref{fig:scaffold_and_mapping}}). Second, agents develop communication mechanisms (\textbf{Fig.~\ref{fig:communication}}), transmitting compressed symbols that reduce partners' uncertainty. This emerges from variational information bottleneck (VIB) objectives balancing communication cost and predictive utility~\cite{kawaguchi2023does,fang2025ton,taniguchi2024generative,taniguchi2023emergent}, embedded in hierarchical reinforcement learning with intrinsic curiosity (HRL-ICM) for strategic uncertainty reduction~\cite{yu2022surprising,pathak2017curiosity}.

Our HRL-ICM framework substantially exceeds baselines (\textit{No Communication}, \textit{Periodic Broadcast}, \textit{Full Broadcast}). On Memory-Maze~\cite{Pasukonis2023MemoryMaze}, it exhibits remarkable resilience: while \textit{Full Broadcast} collapses from 67.6\% to 28.6\% (58\% relative decline) when bandwidth is reduced from 128 to 4 bits/step, our method degrades minimally from 73.5\% to 64.4\% (12\% decline), maintaining superior performance under extreme constraints (\textbf{Fig.~\ref{fig:navigation}g}). Predicting partners' states drives emergence of specialized social cognition substrates. Neurons spontaneously segregate into populations selectively encoding teammates' locations—artificial social place cells mirroring mammalian hippocampus (\textbf{Fig.~\ref{fig:social_place_code}})~\cite{jaderberg2019human}. Causal analyses confirm these representations are functionally critical (\textbf{Fig.~\ref{fig:Comprehensive_performance}b}, \textbf{Supplementary Video 3}).

Unifying predictive coding and information bottleneck theory~\cite{whittington2022build,caucheteux2023evidence}, our work provides a theoretically grounded, biologically plausible basis for shared spatial memory. Our main contributions are as follows: \ding{182} A computational model showing that social place cell-like representations emerge as a necessary consequence of a social predictive coding objective, offering a mechanistic account of their biological origins. \ding{183} A multi-agent framework enabling efficient, semantically rich communication from first principles. \ding{184} Validation demonstrating state-of-the-art performance, scalability, and bandwidth robustness in embodied cooperative navigation. Together, these results establish predictive uncertainty minimization as a bridge between individual spatial cognition and collective intelligence.\label{sec1}

\begin{figure*}[t!]
    \centering
    \includegraphics[width=\textwidth]{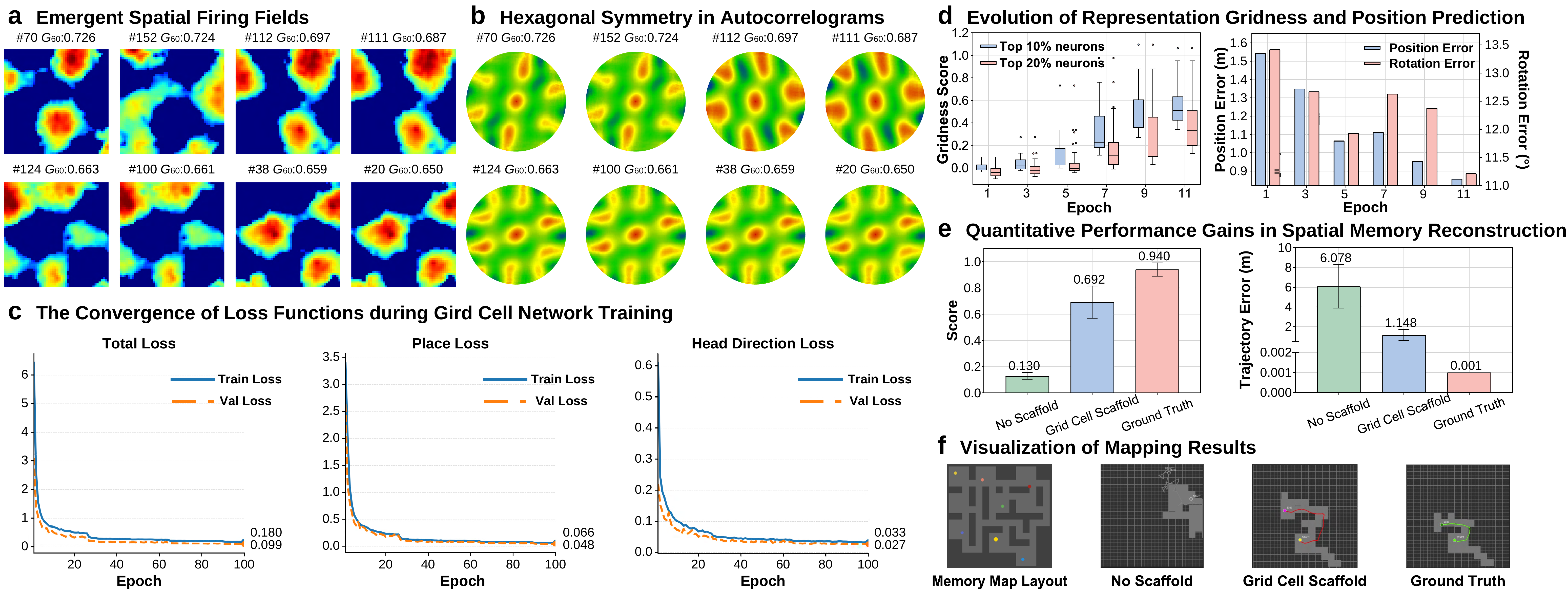}
    \caption{\textbf{Grid-cell-like representations enhance robust global BEV mapping.} 
    \textbf{a}, 2D spatial firing rate maps of learned grid-like representations. 
    Each panel shows activity of a single unit in the path integrator LSTM network, 
    labeled by artificial neuron index (\#) and gridness score ($G_{60}$, range: -2 to +2; 
    higher values indicate stronger hexagonal symmetry). Heat maps represent 
    normalized firing rates across the 2D environment.
    \textbf{b}, Spatial autocorrelograms (SACs) of representative grid-like units 
    reveal hexagonal symmetry. Each circular plot shows the autocorrelation structure 
    of the corresponding unit's firing pattern from panel \textbf{a}, with the six-fold 
    rotational symmetry characteristic of biological grid cells. Unit labels and 
    $G_{60}$ scores match panel \textbf{a}.
    \textbf{c}, The convergence of loss functions during grid cell network training.
    \textbf{d}, As training proceeds, top unit gridness ($G_{60}$) increases while 
    path integration error decreases, demonstrating the co-emergence of structured 
    representations and predictive accuracy.
    \textbf{e}, Ablation study shows the full model achieves lower trajectory error and higher prediction confidence than the variant without the grid scaffold.
    \textbf{f}, Comparison of BEV map reconstructed by different configurations.}
    \label{fig:scaffold_and_mapping}
\end{figure*}

\section*{Results}\label{sec:results}

\subsection*{Grid-cell-like metric scaffold emerges spontaneously from self-supervised motion prediction}\label{subsec:grid_cell_emergence}

A fundamental prerequisite for our predictive coding framework is an agent's ability to form a stable internal model of its own state and surroundings. As outlined in \textbf{Fig.~\ref{fig:framework_overview}b}, this is achieved by solving two coupled prediction problems: predicting the visual appearance of the world (BEV mapping) and predicting the agent's own trajectory through it (path integration). While previous work has demonstrated grid-cell-like representations can emerge in navigation tasks~\cite{banino2018vector,whittington2020tolman}, these studies primarily focused on supervised learning with explicit spatial labels. Here, we demonstrate that self-supervised self-motion prediction alone—without any explicit spatial supervision—naturally develops a grid-cell-like spatial coding scheme that provides an essential metric scaffold for robust visual perception.

The path integration module uses an LSTM network to predict its future pose based solely on past velocity commands. Under this predictive constraint, the network's hidden units spontaneously form periodic spatial firing patterns. Many individual units developed highly structured spatial firing fields arranged in triangular lattices (\textbf{Fig.~\ref{fig:scaffold_and_mapping}a}), labeled by neuron index and gridness score ($G_{60}$)—a quantitative measure of hexagonal symmetry ranging from -2 to +2. To verify this grid-like organization quantitatively, we computed spatial autocorrelograms (SACs) for each neuron's activation map. This analysis revealed clear hexagonal symmetry in firing patterns (\textbf{Fig.~\ref{fig:scaffold_and_mapping}b}), the defining physiological signature of grid cells found in mammalian entorhinal cortex~\cite{hafting2005microstructure}.

Importantly, this grid-like representation is not merely a byproduct but represents a computational solution that the network actively converges to when optimizing for stable self-motion prediction. To demonstrate this, we tracked the co-evolution of this neural code and predictive performance throughout training. As the network learned, gridness scores of the most prominent units steadily increased, while path integration error concurrently and dramatically decreased (\textbf{Fig.~\ref{fig:scaffold_and_mapping}c,d} and \textbf{Supplementary Fig.~\ref{fig:S1}}). This tight correlation demonstrates that the formation of stable, periodic neural coding is the very mechanism through which the network masters motion prediction, enabling it to accurately integrate movements over long distances and maintain coherent beliefs about its location.

Beyond characterizing the emergence of grid-like codes, we establish their functional necessity for visual prediction. By conditioning BEV generation on the latent state provided by the path integrator, agents can correctly register and align transient visual inputs into coherent allocentric frames. To isolate and quantify this functional role, we performed ablation studies comparing the full predictive model against variants where the grid-cell scaffold was disabled. The scaffold's contribution is clear: the full model achieved significantly lower trajectory error and higher prediction confidence (\textbf{Fig.~\ref{fig:scaffold_and_mapping}e}). This improvement in self-localization directly translates into superior visual prediction. BEV maps from scaffolded models are more complete and geometrically accurate, while no-scaffold baselines produce fragmented and distorted maps (\textbf{Fig.~\ref{fig:scaffold_and_mapping}f}). Thus, these findings demonstrate that the emergence of grid-cell-like coding in our predictive model serves a necessary functional role for constructing stable spatial memory, providing a computational foundation for coherent world modeling.

\subsection*{Structured communication mechanism emerges from the social predictive objective}\label{subsec:social_emergent_communication}

\begin{figure*}[t!]
    \centering
    \includegraphics[width=\textwidth]{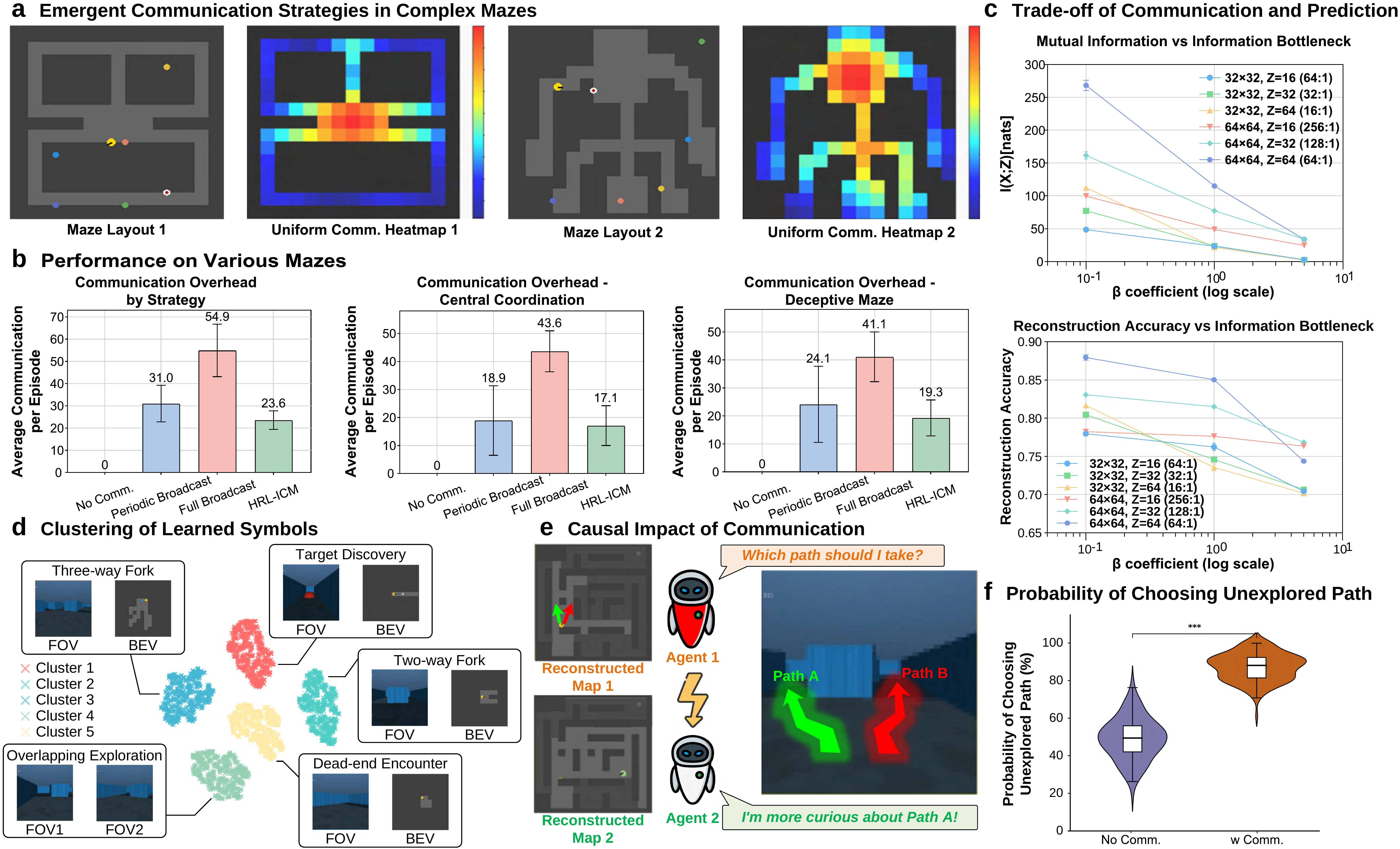}
\caption{\textbf{An efficient, structured, and intelligent communication mechanism emerges from a predictive objective.}
\textbf{a.} Intelligent communication strategies emerge, with message frequency (heatmaps) concentrated at critical decision points like coordination hubs or dead ends, demonstrating strategic triggering.
\textbf{b.} The emergent communication mechanism is highly bandwidth-efficient, consistently requiring the lowest communication overhead across diverse maze types when compared to full and periodic broadcast baselines.
\textbf{c.} The communication mechanism is theoretically controllable via the information bottleneck's $\beta$ coefficient, which enables a principled trade-off between message compression (compression ratio) and predictive utility (reconstruction accuracy).
\textbf{d.} An emergent symbolic vocabulary is grounded in strategic contexts. A t-SNE visualization reveals distinct symbol clusters corresponding to high-level navigational situations, such as encountering a ``Three-way Fork'' or discovering the ``Target''.
\textbf{e.} Communication causally influences decision-making. In a controlled scenario where one agent faces a choice between an unexplored path (A) and a known one (B), communication from its partner allows it to identify Path A as the more informative route.
\textbf{f.} The behavioral impact is statistically significant. Violin plots quantifying choices at two-way forks show a significantly higher probability of selecting the unexplored path with communication (``w Comm.'') compared to the no-communication baseline (``No Comm.''). Internal box plots indicate the median (center line) and interquartile range (white box); whiskers denote 1.5$\times$IQR. ($^{***}$ denotes $p < 0.001$; two-sided $t$-test, $n = 150$ per condition).
}
\label{fig:communication}
\end{figure*}

Having established that individual agents can build robust predictive models of their environment through grid-cell-like scaffolds, we next investigate how the framework extends this predictive principle to the social domain. A central challenge in multi-agent coordination is learning efficient communication under bandwidth constraints. 
Unlike frameworks exploring asymmetric, teacher–student knowledge 
transfer~\cite{wieczorek2024framework}, our setting addresses symmetric, 
peer-to-peer coordination. We compare our HRL-ICM framework against 
three baselines: \textit{No Communication}, where agents search 
independently; \textit{Periodic Broadcast}, where agents communicate 
at fixed intervals under bandwidth constraints; and \textit{Full 
Broadcast}, where agents exchange information without bandwidth 
limitations. Framing communication through the information bottleneck 
principle, we demonstrate that agents can learn to cooperate by developing a communication mechanism guided by a singular objective: transmit only information that maximally reduces a partner's future uncertainty. This simple requirement compels agents to collaboratively discover an emergent communication mechanism that is not only sparse and bandwidth-efficient, but also intelligent and semantically structured.

A key property of the emergent mechanism is its context-aware, task-oriented transmission strategy. Unlike prior work where agents communicate periodically or in response to predefined triggers, our agents learn to communicate strategically at particular moments and locations where a partner's internal model is most likely to be inaccurate. We visualize the spatial distribution of communication events and find that agents concentrate their transmissions at points of high predictive uncertainty (\textbf{Fig.~\ref{fig:communication}a}). For example, in mazes with a central hub, communication peaks in this critical coordination area where an agent's next move is most ambiguous. Conversely, in mazes with long, deceptive dead ends, agents learn to communicate most frequently from deep within these traps. This behavior directly solves the social prediction problem: a message from a dead end serves as a powerful ``prediction error'' signal to teammates, effectively correcting their erroneous implicit prediction that the path might be fruitful. This strategic triggering, consistently observed across diverse maze topologies (\textbf{Supplementary Fig.~\ref{fig:S3}}), demonstrates that agents learn an implicit model of their partners' beliefs, sharing information precisely when it can best resolve uncertainty and prevent predictive mistakes.

Building upon this context-aware triggering mechanism, the strategy of transmitting only the most surprising, uncertainty-reducing information naturally gives rise to a highly bandwidth-efficient communication mechanism. Across thousands of randomly generated layouts, our framework consistently operates with a fraction of the bandwidth required by periodic or full-broadcast approaches (\textbf{Fig.~\ref{fig:communication}b}). Furthermore, the mechanism is fundamentally structured and controllable through information bottleneck theory. The $\beta$ coefficient enables principled tuning between source compression and predictive utility. Increasing $\beta$ forces more compressed, abstract symbolic representations (\textbf{Fig.~\ref{fig:communication}c}), confirming that our framework provides theoretical control over communication effectiveness.

Most importantly, the uncertainty-driven compression scheme culminates in a meaningful symbolic vocabulary. A t-SNE visualization of the message latent space reveals distinct clusters corresponding to high-level strategic contexts: ``Three-way Fork'', ``Dead-end'', ``Target'' (\textbf{Fig.~\ref{fig:communication}d}). To test for a causal link between these symbols and agent behavior, we conduct controlled experiments where an agent at a fork could resolve its uncertainty only after receiving a message from its partner (\textbf{Fig.~\ref{fig:communication}e}). Quantifying choices across thousands of trials, we find that communication significantly guides agents to select the more informative, unexplored path over a known one (\textbf{Fig.~\ref{fig:communication}f}), providing compelling evidence that agents learn to transmit compressed, symbolic representations of prediction errors to collaboratively refine their shared world model and reduce collective uncertainty.

\subsection*{Predicting partner states forges an emergent social place code}

\begin{figure*}[t!]
    \centering
    \includegraphics[width=\textwidth]{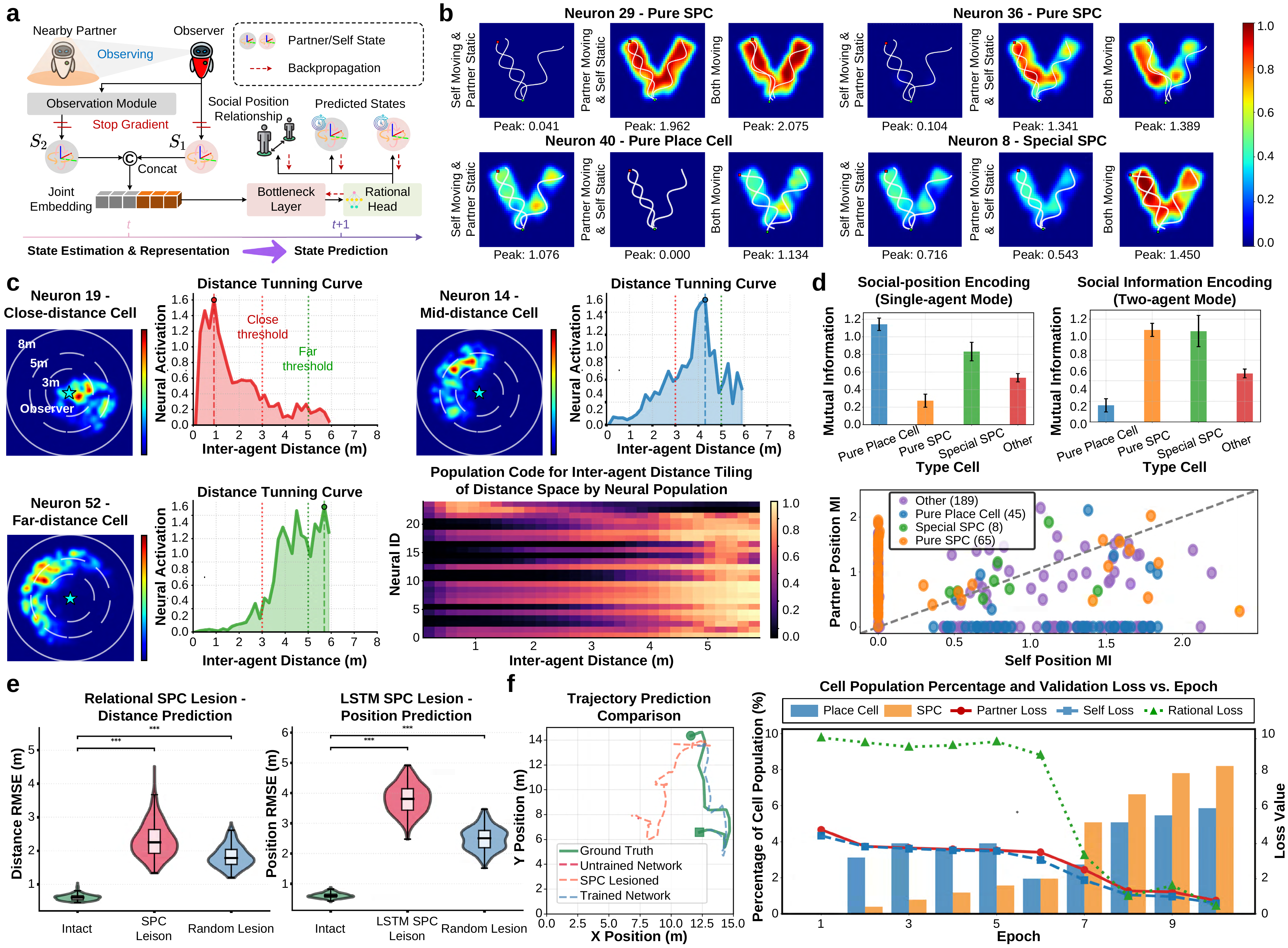}
    \caption{\textbf{Predictive learning forges a functionally specialized social place code.}
    \textbf{a}, Model architecture. Observer ($S_1$) and partner ($S_2$) states are processed through a bottleneck layer and relational head. The network is trained by back-propagating predictive error from self, partner, and social outputs.
    \textbf{b}, Functionally distinct neuron types. Three artificial neuron types in the relational head: Pure Place Cells (self-location encoding, Unit 40), Pure SPCs (partner-location encoding, Units 29, 36), and Special SPCs (mixed selectivity, Unit 8). Heatmaps show normalized firing rates. 
    \textbf{c}, Population code for inter-agent distance. Top panels show 2D maps and 1D tuning curves for neurons selective for close-, mid-, and far-distances. Bottom right, a heatmap of all distance-tuned neurons reveals a ``tiling'' of the distance space.
    \textbf{d}, Quantitative functional dissociation. Top, bar plots show high mutual information (MI) with self-position for Place Cells and with partner-position for SPCs. Bottom, a scatter plot of self MI vs. partner MI reveals specialized cell clusters.
    \textbf{e}, Causal necessity of SPCs demonstrated via in-silico lesioning. 
Left, targeted lesioning of relational-layer distance-selective SPCs 
($n=20$ units) specifically impairs distance prediction compared to 
intact model and random lesion controls (two-tailed unpaired $t$-test; 
$n=50$ test episodes per condition). Right, targeted lesioning of 
LSTM-layer SPCs impairs position prediction. Violin plots show full 
data distribution; *$p<0.05$, **$p<0.01$, ***$p<0.001$.
    \textbf{f}, Co-evolution of performance and specialization. Left, trajectory predictions are accurate for the trained network but poor for untrained or SPC-lesioned networks. Right, validation loss decreases over training epochs as the proportion of specialized Place Cells and SPCs increases.
    }
    \vspace{-1mm}
    \label{fig:social_place_code}
\end{figure*}

In the previous section, we demonstrated that agents learn efficient communication through predictive coding. However, a more fundamental question remains: What internal representations must agents develop to effectively model and predict the states of their partners? While prior work has identified place cells encoding self-location in artificial agents~\cite{banino2018vector}, the neural substrates for representing social partners in spatial contexts remain largely unexplored. Here, we show that a central challenge in collective intelligence—forming and maintaining representations of others—can be addressed through predictive learning.

We design a social processing module integrating self-state ($S_1$) with partner-state ($S_2$) to predict future outcomes for both agents (\textbf{Fig.~\ref{fig:social_place_code}a}). Single-unit analyses indicate spontaneous segregation into distinct, interpretable cell types (\textbf{Fig.~\ref{fig:social_place_code}b}). One substantial population behaved like classical place cells, exhibiting sharp and stable firing fields tuned exclusively to the agent's own location. These units are largely invariant to the partner's motion or position, thereby providing a stable allocentric representation of self. In contrast, a second major population fired as a function of the partner's location: these artificial social place cells (SPCs) showed strong spatial tuning to the partner's location within the observer's reference frame (\textbf{Fig.~\ref{fig:social_place_code}b}, Neurons 29 and 36), providing a substrate for tracking others. We also observe a population of mixed-selectivity units that conjunctively encode self- and partner-locations (\textbf{Fig.~\ref{fig:social_place_code}b}, Neuron 8). Representative galleries across four conditions (self-moving, partner-moving, both-moving, both-static) indicate that this specialization is expressed across units and task contingencies (\textbf{Supplementary Figs.~\ref{fig:S4}-\ref{fig:S6}}).

Beyond single-neuron effects, the specialized units form a population code for higher-order relational variables, most notably inter-agent distance. We identify subpopulations selectively tuned to near, mid-range, and far separations (\textbf{Fig.~\ref{fig:social_place_code}c}). Their graded tuning curves span the full range of separations encountered during exploration, yielding a tiling-like coverage of relational distance space (\textbf{Fig.~\ref{fig:social_place_code}c}, bottom right). This organization is consistent with the view that predictive objectives shape compact, task-relevant embeddings of spatial relations~\cite{zhou2024vector,liang2024distance}.

To move beyond descriptive characterization and establish functional relevance, we quantitatively dissociate these populations using mutual information (MI) between firing rate and spatial variables. Place cells carry high MI about self-position but negligible MI about partner-position, whereas SPCs show the inverse profile (\textbf{Fig.~\ref{fig:social_place_code}d}, top). A scatter of Self MI versus Partner MI reveals clearly separable clusters corresponding to self-tuned, partner-tuned, and mixed-selectivity units (\textbf{Fig.~\ref{fig:social_place_code}d}, bottom). These results indicate that specialization reflects a broader division of labor induced by the social predictive objective.

Moreover, to assess causal involvement, we perform in-silico lesions. Targeted lesioning of distance-tuned SPCs produced a marked, selective impairment in inter-agent distance prediction relative to pre-lesion performance and to size-matched random lesions (\textbf{Fig.~\ref{fig:social_place_code}e}). A broader lesion of the peer-processing module degrades general position prediction. These patterns indicate that SPCs are causally important for computing relational social geometry, whereas self-tuned units primarily support self-localization. Control ablations that preserve overall parameter count but disrupt SPC-selective pathways yielded similar deficits in distance estimation.

Finally, linking representation to learning dynamics, the fully trained network's ability to predict a partner's future trajectory depends on the integrity of SPCs: lesioning these units reduces predictive accuracy toward untrained levels (\textbf{Fig.~\ref{fig:social_place_code}f}, left). During training, validation loss decreases as the fraction of specialized units increases, with self-tuned and partner-tuned populations emerging in parallel (\textbf{Fig.~\ref{fig:social_place_code}f}, right). This co-evolution suggests that a functionally segregated social place code is not incidental but emerges as a principal mechanism by which the model solves partner-state prediction and coordination. Combined with prior evidence that predictive pressures can organize grid- and distance-related codes in artificial agents~\cite{zhou2024vector,xu2022grid,liang2024distance}, these findings support that minimizing mutual predictive uncertainty induces specialized representations for self and partners that jointly enable robust coordination under bandwidth and observability constraints.

\subsection*{Integrated framework achieves superior cooperative navigation performance}\label{subsec:integrated_performance}

\begin{figure*}[t!]
    \centering
    \includegraphics[width=\textwidth]{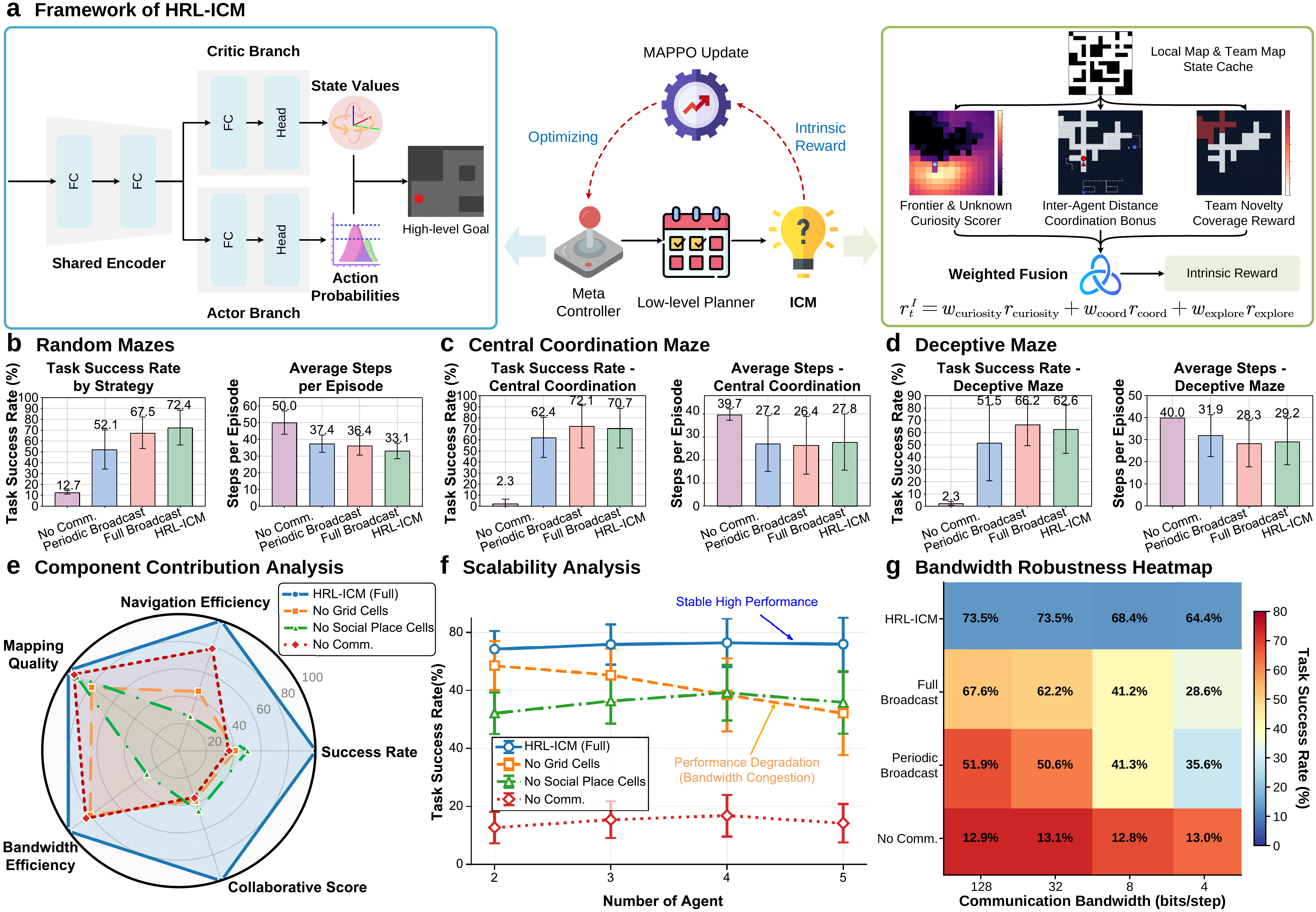}
    \caption{\textbf{HRL-ICM framework achieves superior and robust cooperative performance.}
    \textbf{a}, Architecture of the hierarchical reinforcement learning with intrinsic curiosity module (HRL-ICM). The ICM embodies the Level 3 predictive objective: it generates an intrinsic reward based on the agent's inability to predict the consequences of its actions. This ``prediction error'' signal guides the high-level Meta Controller to select goals that maximally reduce uncertainty, which are then executed by a Low-level Planner.
    \textbf{b}, Superior success rates and efficiency across 10,000 random mazes.
    \textbf{c}, High performance maintained in a central coordination maze.
    \textbf{d}, Robustness demonstrated in a deceptive maze with numerous dead ends.
    \textbf{e}, Ablation analysis confirms that each predictive component (grid-cells, social cells, communication) is critical for performance, with communication being indispensable.
    \textbf{f}, The framework scales effectively as agent count increases, outperforming baseline strategies that suffer from performance degradation.
    \textbf{g}, Exceptional bandwidth robustness is shown as our method's success rate degrades slightly when bandwidth shrinks, while the ``Full Broadcast'' baseline's performance collapses.
    }
    \label{fig:navigation}
\end{figure*}

The preceding sections demonstrate how a unified predictive objective forges specialized components for perception, communication, and social representation. Here, we assess whether these components synergistically combine to achieve superior cooperative performance. Unlike prior work that addresses individual components in isolation~\cite{banino2018vector,das2019tarmac}, we evaluate the fully integrated system across diverse navigation challenges. We assess the hierarchical reinforcement learning with intrinsic curiosity module (HRL-ICM) against strong baselines in Memory-Maze (\textbf{Fig.~\ref{fig:navigation}a}), implementing our three-level predictive hierarchy with policies guided by prediction-error-based intrinsic rewards.

We first assess our framework's effectiveness across a wide range of navigation problems. Over 10,000 procedurally generated mazes, HRL-ICM achieves highest success rate (72.4\%) with fewest steps (\textbf{Fig.~\ref{fig:navigation}b}). Moreover, performance holds in targeted scenarios: 72.0\% in central coordination mazes (\textbf{Fig.~\ref{fig:navigation}c}) and 66.0\% in deceptive mazes with long dead-ends (\textbf{Fig.~\ref{fig:navigation}d}), highlighting emergent communication's value for sharing negative information. Additionally, \textbf{Supplementary Video 1} directly visualizes the performance comparison in the memory maze.This resilience highlights the functional value of the emergent communication mechanism, which enables agents to share high-value negative information, a critical capability for efficient exploration.

{To quantify the contribution of each core component to this performance,} we conduct a systematic ablation study \mbox{(\textbf{Fig. \ref{fig:navigation}e})}. The complete framework (HRL-ICM Full) outperforms all ablated variants across every metric, including success rate, navigation efficiency, and mapping quality. The removal of any single component resulted in a significant and interpretable performance degradation. Disabling the communication channel led to a collapse in collaborative score and success rate, confirming that cooperation is indispensable. Excising the social place cell module specifically degrades the agents' collaborative score and efficiency, providing further causal evidence that the emergent social representations are functionally critical for effective coordination. Similarly, removing the grid-cell scaffold compromises mapping quality and navigation efficiency, underscoring the foundational importance of a stable internal metric. This analysis demonstrates that the framework's success is attributable to the synergistic integration of its predictive components. {The system's robustness is further evaluated against communication noise and environmental complexity (\textbf{Supplementary Fig. \ref{fig:S-noise}} and \textbf{Video 4}).}

{A critical test for any multi-agent system is scalability.} We evaluate how the framework performs as the number of agents increases from two to five (\textbf{Fig. \ref{fig:navigation}f}). Baseline strategies that rely on simple broadcasting (Full and Periodic) suffer a clear performance decline as more agents are added. {Their inability to handle the exponential growth in information flow leads to channel congestion and degraded coordination.} The learned, predictive communication mechanism from our framework mitigates this issue by ensuring that only the most vital, non-redundant information is transmitted. Besides, we test bandwidth robustness (\textbf{Fig.~\ref{fig:navigation}g}). Reducing bandwidth from 128 to 4 bits/step, HRL-ICM success degrades moderately (12\% relative decline) while Full Broadcast collapses (58\% decline). By communicating only essential prediction errors, agents sustain coordinated action in austere communication environments where conventional methods fail.

\subsection*{Framework analysis: Convergence, causality, and generalization}\label{subsec:interpretability}

\begin{figure*}[t!]
    \centering
    \includegraphics[width=\textwidth]{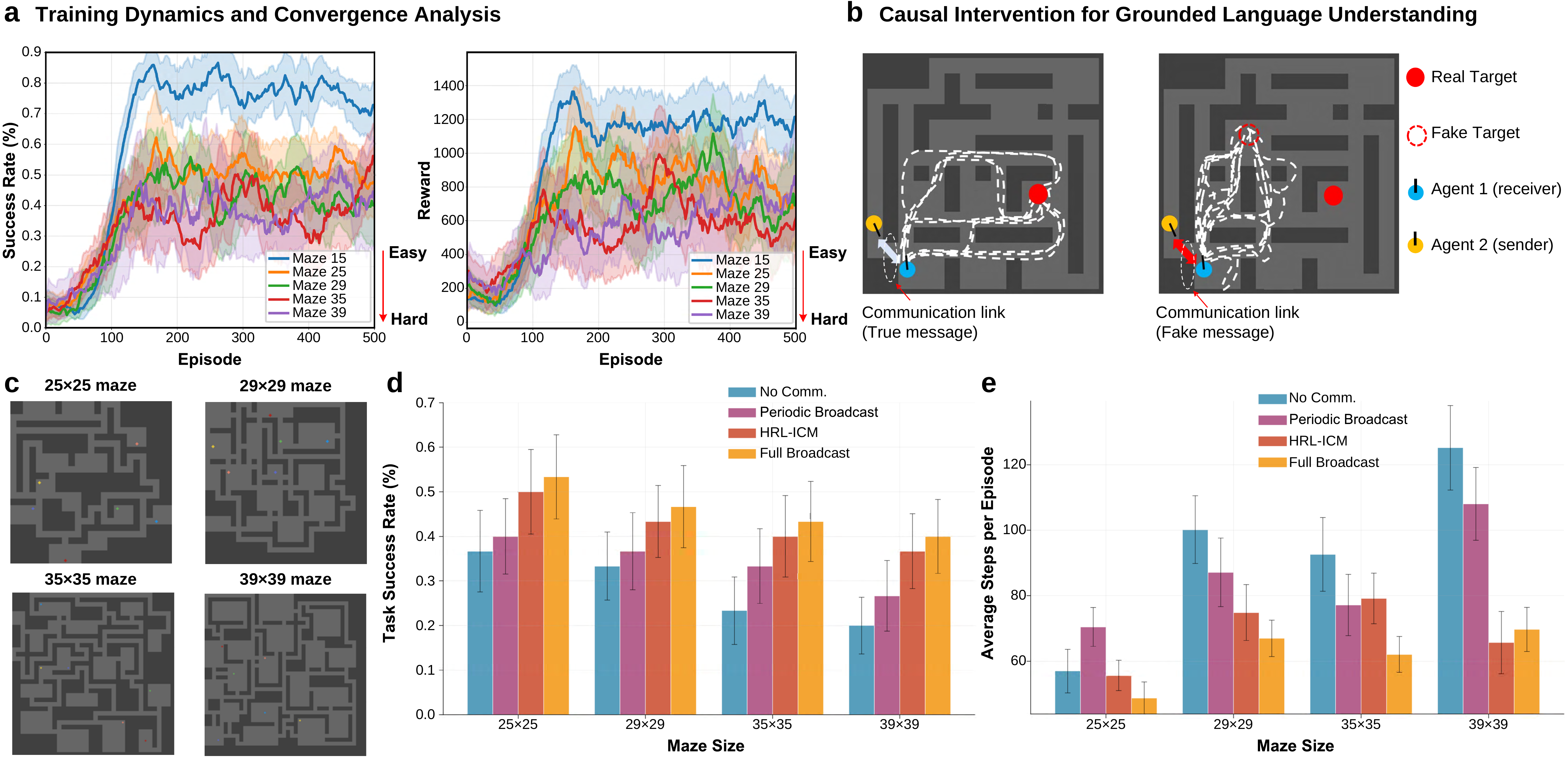}
\caption{
\textbf{Comprehensive performance analysis of the HRL-ICM framework, demonstrating stable learning, causal language understanding, and robust generalization.}
\textbf{a,} Training dynamics and convergence across mazes of varying difficulty. The plots show steady improvement and convergence for both average reward and task success rate, indicating stable end-to-end learning.
\textbf{b,} Causal intervention analysis reveals grounded language understanding. A receiver's trajectory is predictably altered by a manipulated ``fake'' message (right) versus a true message (left), confirming the emergent symbols causally drive behavior.
\textbf{c,} Example maze layouts of increasing complexity (from $25\times 25$ to $39\times 39$) used to test generalization.
\textbf{d,} Task success rate across different maze sizes. HRL-ICM consistently outperforms baselines, showing superior performance.
\textbf{e,} Average steps to completion. HRL-ICM demonstrates higher efficiency (fewer steps) than baselines, with its advantage growing in more complex environments.
}
\label{fig:Comprehensive_performance}
\end{figure*}
The results presented thus far establish the emergent properties and cooperative performance of our framework. To complete characterization, we address three critical questions: Does learning converge reliably? Do emergent symbols possess genuine semantic meaning? Can learned strategies generalize beyond training conditions?

First, we demonstrate convergence by tracking training dynamics across varying difficulty (\textbf{Fig.~\ref{fig:Comprehensive_performance}a}). From $15\times 15$ to $39\times 39$ configurations, success rates and rewards exhibit steady increases, converging to stable plateaus. This validates end-to-end training and confirms minimizing predictive uncertainty provides consistent learning signals for visuomotor control and inter-agent communication.

Second, to test semantic grounding, we designed causal intervention experiments (\textbf{Fig.~\ref{fig:Comprehensive_performance}b}). Senders who accurately identify recipients transmit messages that empower receivers. Replacing true messages with ``fake'' symbols previously associated with distractors causes receivers to predictably alter trajectories toward fake targets. This behavioral change demonstrates learned symbols function as causal drivers, not mere correlations, confirming agents develop shared, grounded understanding. Moreover, the causal intervention experiment is visualized in \textbf{Supplementary Video 3}.

Third, for generalization, we compare HRL-ICM against baselines across four complexity scales (\textbf{Fig.~\ref{fig:Comprehensive_performance}c-e}). Our framework achieves significantly higher success rates and requires fewer steps than baselines. Critically, as complexity increases, baseline performance degrades sharply while our framework exhibits graceful decline. This superior scalability stems from core principles: rather than brute-force information sharing, agents leverage learned internal models—grid-cell metric scaffolds and social place codes—forming robust shared memories. The emergent communication mechanism shares only critical, uncertainty-reducing information, an efficient strategy less susceptible to state-space combinatorial explosion. These findings validate our theoretically-grounded approach resolves bandwidth-coordination trade-offs, yielding generalizable, scalable multi-agent intelligence solutions.\label{sec2}


\section*{Discussion}

In this article, we have shown that a single computational objective—the minimization of mutual predictive uncertainty—can serve as a first principle for the emergence of collective intelligence. We demonstrated that this unified predictive drive gives rise to a three-tiered hierarchy of phenomena: a stable, grid-cell-like spatial metric at the individual level; an efficient communication mechanism at the inter-actional level; and a specialized neural substrate for social cognition, analogous to hippocampal social place cells, at the representational level. Our findings suggest that complex neural architectures supporting social navigation may not be distinct adaptations but are the computational consequence of this predictive learning framework, which formulates social intelligence as an extension of individual cognition.

Our work is grounded in established principles of individual cognition. We first confirm that a stable personal spatial representation, scaffolded by an emergent, grid-cell-like metric critical for robust navigation~\cite{banino2018vector}, is a necessary foundational component (\textbf{Fig.~\ref{fig:scaffold_and_mapping}}). While architectures such as the Tolman–Eichenbaum Machine provide a unifying account of how an \textit{individual} agent learns and generalizes spatial and relational knowledge~\cite{whittington2020tolman}, our work addresses the subsequent challenge: how multiple agents, each possessing such internal models, achieve collective intelligence. We posit that the drive to minimize mutual uncertainty provides a computational bridge from individual cognition to shared understanding.

From this foundation, we address emergent communication. Unlike frameworks exploring asymmetric, teacher–student knowledge transfer~\cite{wieczorek2024framework}, our setting resolves symmetric, peer-to-peer coordination. By framing communication through the information bottleneck principle, we show that agents can autonomously learn a communication mechanism that balances communication cost and predictive utility (\textbf{Fig.~\ref{fig:communication}c}), surpassing standard multi-agent reinforcement learning baselines, such as PPO and Dreamer, considered here~\cite{medany2024model}. A key consequence of this social predictive objective is the spontaneous formation of a functionally specialized neural substrate for social cognition. Inspired by the discovery of hippocampal social place cells in bats~\cite{omer2018social}, we provide a computational account consistent with their emergence under a social predictive objective. Whereas biological studies observe these neurons, our framework posits that the imperative to predict a partner’s future state encourages the development of functionally specialized units tuned to the location of others. Our \textit{in-silico} lesion experiments (\textbf{Fig.~\ref{fig:social_place_code}e, f}) indicate that these emergent ``social place cells'' are causally important for social prediction, offering a functional interpretation for the ``shared neural subspaces'' reported in interacting systems~\cite{zhang2025inter}.

The same principle mirrors the ``next-token prediction'' objective that underpins large language models (LLMs): while LLMs build a world model by predicting sequences of texts, our agents build a shared spatial model by predicting sequences of each other's sensory states. This parallel suggests that predictive learning is a general mechanism for constructing both individual and shared world models. The connection between understanding and effective compression, highlighted in recent work on lossless data compression~\cite{li2025lossless}, provides a theoretical link to the communication efficiency observed in our system. For multi-robot systems, this work offers a paradigm for designing communication-efficient swarms that learn to coordinate from first principles~\cite{medany2024model}.

While our framework establishes these principles, it opens several interesting avenues. The current model utilizes a discrete communication channel; future work could explore the emergence of more continuous or compositional communication structures~\cite{wieczorek2024framework}. Deploying our model-based learning approach on physical multi-robot systems to bridge the simulation-to-reality gap is a key next step~\cite{medany2024model}. More broadly, the phenomena of shared memory and social representation may arise not from intricate, pre-programmed rules, but from a single computational objective: the drive to predict the world of another.
\label{sec3}

\section*{Methods}\label{sec_methods}

\subsection*{Memory-Maze benchmark}\label{meth_benchmark}
All experiments are conducted within the Memory-Maze benchmark, a simulation environment specifically designed to rigorously evaluate agents' long-term spatial memory and cooperative capabilities under partial observability~\cite{Pasukonis2023MemoryMaze}. The framework procedurally generates three-dimensional mazes with randomized layouts for each episode, which prevents task-specific overfitting and ensures that learned policies generalize across a wide distribution of environments. To systematically assess the scalability and robustness of our proposed model, we utilize a range of maze complexities, configuring the environment with progressively larger layouts, specifically $15 \times 15$, $25 \times 25$, $29 \times 29$, $35 \times 35$, and $39 \times 39$ grids. This setup allows for a thorough investigation of model performance as the state space and the demand on spatial memory and communication grow significantly.

Each agent in the multi-agent system operates based on egocentric sensory inputs and is subject to realistic physical and communicational constraints. An agent's perception is primarily driven by a forward-facing RGB camera providing a 75-degree field of view (FOV), which serves as its visual input. In addition to vision, each agent has access to its own proprioceptive information, namely its linear and angular velocities. The action space is discrete, consisting of move-forward, turn-left, turn-right, and stay, enabling navigation through the maze corridors. A critical component of our experimental design is the constraint on inter-agent communication. Agents can only exchange information when they are within a pre-defined communication range, and the channel is subject to a configurable bandwidth bottleneck (e.g., 4 to 128 bits/step). This limitation mirrors real-world robotic applications and creates a strong selective pressure for the emergence of an efficient and targeted communication mechanism, which is a central focus of our work. The collective objective for the team of $N$ agents (where $N$ is typically 2-4 in our experiments) is to collaboratively explore the unknown maze to locate a single, hidden target.

\subsection*{Unified framework for predictive coding}\label{Unified_framework}
Throughout this paper, we use neuroscience-inspired terminology to describe artificial network components that exhibit functional properties analogous to biological neurons. Specifically, ``place cells'' refer to network units encoding spatial position with Gaussian receptive fields, ``head-direction cells'' encode heading orientation with von Mises tuning, and ``grid cells'' exhibit periodic hexagonal firing patterns. ``Social place cells'' (SPCs) are distinct units that encode partner locations rather than self-locations. These terms facilitate comparison with biological findings while denoting artificial neural network parameters that represent position and orientation information. Detailed mathematical definitions are provided in Supplementary Methods.

The cornerstone of our framework is the principle that intelligent agents build and share world models by continuously minimizing prediction error. We formalize the challenge of coordinated navigation as a multi-level prediction problem, where each agent's objective is to build a generative model that predicts its own sensory inputs, the states of its partners, and the global state of the environment. This unified predictive objective is realized through three synergistic mechanisms: \ding{182} an individual predictive model for robust perception and self-localization, \ding{183} a social predictive coding model for emergent communication, and \ding{184} a hierarchical policy guided by predictive uncertainty for strategic exploration.

\subsubsection*{Level 1: Individual predictive model for perception and self-localization}
An agent must first form a coherent internal model of its own state and surroundings to serve as a foundation for any higher-level reasoning or social interaction. This is achieved not through a single monolithic network, but by the synergistic interplay of two specialized yet deeply integrated sub-systems that solve simultaneous, coupled prediction tasks. Meanwhile, these components function analogously to a visual simultaneous localization and mapping (SLAM) system, where the agent concurrently builds a map of its environment (perception) while predicting its location within it (localization). This dual predictive process is essential because each task validates the other: a stable location estimate prevents the map from becoming distorted, while a coherent map provides the landmarks necessary for accurate localization. This reciprocal relationship enables the construction of a reliable individual world model, which is the prerequisite for forming a shared spatial memory.

\paragraph{Visual predictive coding for BEV mapping.}
As shown in \textbf{Fig. \ref{fig:framework_overview}b}, the primary perceptual task is framed as a visual prediction problem that addresses the challenge of inferring a stable world representation from a fleeting, ambiguous sensory stream. Specifically, the agent must learn a generative model of the physical world's local geometry and appearance. The agent does so by learning to translate a high-dimensional, egocentric image, \(O_{\text{Ego},t}\in\mathbb{R}^{3\times H_\text{in}\times W_\text{in}}\) at time $t$, into a structured, allocentric BEV map, \(\hat{O}_{\text{BEV},t}\in\mathbb{R}^{4\times H_\text{out}\times W_\text{out}}\). This is a fundamentally ill-posed inverse problem, and the network is trained end-to-end to minimize the prediction error between its generated map and the ground truth. This error is quantified by a composite loss function, \(\mathcal{L}_\text{BEV}\), which holistically evaluates the quality of the prediction across multiple modalities. Each component of this loss can be interpreted as imposing a different physical prior on the generative model, guiding it towards physically plausible solutions:
\begin{equation}
\begin{aligned}
\mathcal{L}_\text{BEV} &= w_\text{occ}\mathcal{L}_\text{occ} + w_\text{rgb}\mathcal{L}_\text{rgb} + w_\text{smooth}\mathcal{L}_\text{smooth},
\end{aligned}
\end{equation}
where \(w_\text{occ}\), \(w_\text{rgb}\), and \(w_\text{smooth}\) are weighting hyper-parameters. The occupancy loss, \(\mathcal{L}_\text{occ}\), is a binary cross-entropy term on the predicted alpha channel (\(\hat{\alpha}\)) that forces the network to accurately predict the binary state of the world (i.e., navigable or occupied). The appearance loss, \(\mathcal{L}_\text{rgb}\), which is a masked mean-squared error term, compels the model to predict the correct target color within navigable regions. Finally, the smoothness loss, \(\mathcal{L}_\text{smooth}\), regularizes the predictive model by penalizing sharp spatial gradients, incorporating a prior that physical environments are generally continuous. To solve this complex, cross-view prediction task, we employ a sophisticated Transformer-based encoder-decoder architecture. The detailed layer-by-layer specification of this architecture is provided in \textbf{Supplementary Tables \ref{tab:bev_encoder}} and \textbf{\ref{tab:bev_decoder}}, which leverages a Transformer's self-attention mechanism to overcome perspective distortion inherent in ground-level views. The encoder, \(E_\text{BEV}\), uses a ResNet-18 backbone to extract spatial features. A key architectural innovation lies in treating the image's vertical scanlines as a sequence. This allows the Transformer's self-attention mechanism to disambiguate visual features by considering the global context along the vertical axis of the image, which is crucial for the predictive model to accurately infer depth and overcome perspective distortion. The decoder, \(D_\text{BEV}\), then projects the resulting latent code into the final BEV map. The mathematical formulation and weighting of each loss component are detailed in \textbf{Supplementary Method \ref{sec: Spatial Memory Generation Module}}, which define the precise objective for learning photometrically and geometrically plausible BEV maps.

\paragraph{Predictive path integration for self-localization.}
As shown on the left side of \textbf{Fig. \ref{fig:framework_overview}b}, the module's objective is to minimize the prediction error \(\mathcal{L}_\text{path}\), a heavily regularized, weighted sum of the Kullback-Leibler (KL) divergence between the predicted and target cell activations related to the agent's pose:
\begin{equation}
\begin{aligned}
\mathcal{L}_\text{path} &= \mathcal{L}_\text{KL} + w_{\text{init}}\mathcal{L}_\text{init} + w_{\text{cont}}\mathcal{L}_\text{cont}.
\end{aligned}
\end{equation}
The design of this loss function is critical for maintaining a stable predictive process over long trajectories. The primary term, \(\mathcal{L}_\text{KL}\), applies a significantly higher weight to the prediction error in the initial trajectory frames, forcing the model to establish an accurate pose estimate early to prevent error accumulation. This is supplemented by an initial consistency term, \(\mathcal{L}_\text{init}\), and a continuity regularizer, \(\mathcal{L}_\text{cont}\). This structured learning pressure compels the LSTM's hidden units to discover the most efficient neural code for representing Euclidean space under translational motion. Remarkably, the optimal representation that emerges spontaneously to best solve this temporal self-prediction task is a grid-cell-like code, exhibiting the hexagonal symmetry found in the mammalian entorhinal cortex. This provides strong evidence that such complex, biologically plausible neural structures can arise not from explicit design, but as a fundamental and convergent solution to the problem of continuous self-prediction in a spatial environment. This emergent metric scaffold provides the essential, stable foundation upon which the local visual predictions can be reliably integrated, forming a coherent and robust individual spatial memory. Theoretically, this learning process allows the LSTM to function as an amortized Bayesian filter, effectively performing probabilistic state estimation, as detailed in \textbf{Supplementary Method \ref{sec: Biologically-Inspired Spatial Representation Networks}}. The architecture and training details are provided in \textbf{Supplementary Tables \ref{tab:grid_cell_network}} and \textbf{\ref{tab:grid_cell_training}}. 

\paragraph{Theoretical justification for predictive path integration.}

Here we provide a theoretical justification for how the self-prediction objective compels neural networks to spontaneously develop structured, grid-like spatial representations. The key insight is that the geometry of path integration—specifically, the requirement to maintain a consistent spatial metric under continuous motion—fundamentally constrains the form of efficient neural codes. We demonstrate that hexagonal grid patterns emerge not from architectural engineering, but as the unique solution to satisfying geometric consistency under representational efficiency constraints.

\textit{Probabilistic formulation.} We formalize path integration as a problem of sequential state inference. The agent's pose at time $t$ corresponds to the state vector $s_t = (\mathbf{r}_t, \theta_t) \in \mathbb{R}^2 \times \mathbb{S}^1$, where $\mathbf{r}_t = (r_{x,t}, r_{y,t})^\top$ denotes the position and $\theta_t$ denotes the heading angle. The agent's kinematics follow the dynamics:
\begin{align}
\theta_{t+1} &= \theta_t + \omega_t \Delta t \pmod{2\pi}, \nonumber \\
\mathbf{r}_{t+1} &= \mathbf{r}_t + R(\theta_t) \mathbf{v}_t \Delta t, \nonumber
\end{align}
where $u_t = (\mathbf{v}_t, \omega_t)$ is the control input comprising egocentric translational and angular velocities, and $R(\theta_t)$ is the rotation matrix (\textbf{Supplementary Definition~\ref{def:rotation-matrix}}) that transforms egocentric motor commands to allocentric position updates.

The path integrator network receives no direct supervision on the latent pose $s_t$ during trajectory execution. Instead, its sole learning signal derives from predicting the activity of virtual place and head-direction cells (artificial network units that encode spatial position and heading orientation, respectively, using biologically-inspired activation patterns; see \textbf{Supplementary Method~\ref{sec: Biologically-Inspired Spatial Representation Networks}} for detailed definitions) at the final timestep. These spatial cells exhibit tuning curves with log-potentials $\phi_\ell(s)$: place cells use Gaussian receptive fields $\phi_i(\mathbf{r}) = -\|\mathbf{r} - \boldsymbol\mu_i\|^2/(2\sigma_i^2)$ while head-direction cells employ von Mises tuning $\phi_j(\theta) = \kappa_j \cos(\theta - \mu_j)$. The probability of observing cell $\ell$ active in pose $s_t$ follows a softmax distribution over these potentials:
\begin{equation}
p(y_t = \ell \mid s_t) = \frac{\exp\{\phi_\ell(s_t)\}}{\sum_{m=1}^C \exp\{\phi_m(s_t)\}},
\nonumber
\end{equation}
where $C$ denotes the total number of cells and $y_t$ is the random variable for the active cell identity. The LSTM path integrator is trained to map a sequence of control inputs $\{u_t\}_{t=1}^T$ and an initial observation $y_0$ to a predictive distribution $\hat{p}(\cdot \mid y_0, \{u_t\})$ over cell activations at the final time $T$. The training objective minimizes the expected Kullback--Leibler (KL) divergence between the true activation distribution and the network's prediction:
\begin{equation}
\mathcal{L}(\boldsymbol\varTheta ) = \mathbb{E}_{y_0, \{u_t\}}
    \left[ \mathrm{KL}\big(p(\cdot \mid s_T) \,\|\, \hat{p}(\cdot \mid y_0, \{u_t\})\big) \right],
\end{equation}
where $y_0$ denotes the initial spatial cell observation encoding starting pose $s_0$,
${u_t}$ represents the sequence of motor commands, $s_T$ is the true final pose after path integration, $p(\cdot| s_T)$ is the target sensory distribution at the final pose, $\hat{p}(\cdot| y_0, {u_t})$ is the LSTM's predicted distribution, and $\boldsymbol\varTheta$ denotes the network parameters. Minimizing this prediction error over diverse trajectories compels the network to internalize the geometric structure of spatial navigation.

\textit{Architectural design for temporal integration.} The recurrent structure of an LSTM is particularly well-suited for path integration because its gated cell state mechanism enables stable accumulation of incremental motion signals over extended time horizons. The LSTM's cell state $\mathbf{c}_t$ serves as a continuous memory substrate that is updated at each timestep by integrating velocity inputs through its gating architecture:
\begin{equation}
\mathbf{c}_t = \mathbf{f}_t \odot \mathbf{c}_{t-1} + \mathbf{i}_t \odot \tilde{\mathbf{c}}_t,
\nonumber
\end{equation}
where $\mathbf{f}_t$ and $\mathbf{i}_t$ are the forget and input gates, and $\tilde{\mathbf{c}}_t$ is the candidate update. Critically, we impose a representational bottleneck by projecting the LSTM hidden state through a low-dimensional layer before prediction (\textbf{Supplementary Table~\ref{tab:grid_cell_network}}). This architectural choice is essential: the bottleneck forces the network to discover a compressed, information-efficient encoding of pose that can support accurate long-term prediction. It is this efficiency pressure, combined with the predictive objective, that catalyzes the emergence of structured spatial codes.

\textit{Emergence of hexagonal representations from geometric constraints.} Finally, we justify why hexagonal activity patterns spontaneously emerge as the optimal solution. The path integration task fundamentally requires representing accumulated two-dimensional displacements in a compressed latent space with limited dimensionality. As established in Supplementary Information through \textbf{Supplementary Proposition~\ref{prop:equivariance}} and \textbf{Theorem~\ref{thm:rotation-structure}}, any representation that faithfully performs path integration must satisfy translation equivariance: a physical displacement $\Delta \mathbf{r}$ must correspond to a consistent linear transformation in the latent space, independent of starting location. This geometric constraint, combined with the information efficiency requirement under the bottleneck architecture, necessitates a periodic encoding. \textbf{Supplementary Theorem~\ref{thm:minimal-isotropy}} proves that the minimal isotropic representation in two dimensions requires exactly three frequency directions uniformly distributed at $120^\circ$ angular separation, uniquely determined by the first-order isotropy (zero mean) and second-order isotropy (identity-proportional metric). As demonstrated in \textbf{Supplementary Corollary~\ref{cor:hexagonal}}, these three directions generate effective six-fold symmetry due to cosine function parity: their superposition creates an interference pattern whose peak activations form a hexagonal lattice with $60^\circ$ rotational symmetry. Consequently, minimizing prediction error under equivariance and efficiency constraints causes LSTM hidden units to self-organize into grid-cell-like hexagonal patterns, providing the consistent spatial metric needed for path integration and forming the computational foundation for downstream prediction and planning.

\subsubsection*{Level 2: Social predictive coding for emergent communication}
Having established a robust individual predictive model, the next challenge is to synchronize these models across agents. Central to this is a neural substrate designed to form a unified representation of the multi-agent system, which serves as the foundation for communication. We employ a dual-stream LSTM network that concurrently processes egocentric motion inputs (linear and angular velocities) from both the self-agent and its partner. The network is trained under a multi-faceted predictive objective: it must simultaneously predict its own future location, its partner's future location, and, critically, the future Euclidean distance between them via a dedicated relational regression head. This compound predictive pressure compels the network's shared latent representation to functionally specialize. 
This process gives rise to distinct neural populations, including units selectively tuned to the partner's location—an artificial analogue of social place cells (SPCs; distinct from the self-position-encoding place cells described above, these units specifically encode partner locations within the observer's reference frame), as detailed in our results (\textbf{Fig.~\ref{fig:social_place_code}}). This emergent social representation provides the rich, unified state, $S_{i,t}$, that informs the communication mechanism. Furthermore, the explicit distance prediction serves as a critical gating mechanism to determine \textbf{who and when} communication is feasible, as detailed in \textbf{Supplementary Method~\ref{sec: Social Place Cell}} (Social place cell module architecture in \textbf{Table~\ref{tab:spc_arch}}).

This architecture sets the stage for extending the predictive coding principle to the multi-agent domain, where we reformulate communication not as mere data transfer, but as a mechanism for collaborative prediction. The question shifts from ``What information should I send?'' to ``What piece of my knowledge will maximally reduce my partner's future uncertainty?'' This is the essence of social predictive coding: agents learn to share only information that is novel and decision-relevant from the receiver's perspective. To formalize this, we employ the information bottleneck (IB) principle. 
The goal is to learn a stochastic encoder $p(m_{i,t}\mid S_{i,t})$ that 
maps sender agent $i$'s state $S_{i,t}$ to a compressed message $m_{i,t}$. 
The optimal encoding must balance two competing objectives: maximizing the 
message's predictive utility for the receiver while minimizing communication 
cost. Following the rate-distortion framework, we formulate this as minimizing 
a loss function that balances distortion (predictive loss) and rate 
(compression cost):
\begin{equation}
\label{eq:ib_objective_revised}
\mathcal{L}_{\text{IB}} = \underbrace{-I(m_{i,t}; O_{j,t+1} \mid S_{j,t})}_{\text{Distortion}} 
+ \beta \underbrace{I(S_{i,t}; m_{i,t})}_{\text{Rate}},
\end{equation}
where $I(\cdot;\cdot)$ denotes mutual information, which is a measure of the mutual dependence between the two variables. $O_{j,t+1}$ is the future 
observation of the receiving agent $j$, $S_{j,t}$ is the receiver's current 
state (serving as side information), and $\beta > 0$ is a hyper-parameter 
balancing the trade-off. The \textbf{Distortion} term quantifies the negative 
of the information the message provides about the receiver's future conditioned 
on what the receiver has already known—minimizing distortion thus maximizes predictive 
utility. The \textbf{Rate} term quantifies the information the message retains 
about the sender's state—minimizing rate enforces compression. The parameter 
$\beta$ controls this trade-off: larger $\beta$ prioritizes compression, while 
smaller $\beta$ prioritizes predictive accuracy.

Directly optimizing \textbf{Eq.~(\ref{eq:ib_objective_revised})} is intractable because computing mutual information requires integrating over high-dimensional and unknown data distributions. We therefore derive a tractable surrogate objective by constructing variational bounds for both the rate and distortion terms. We introduce a parametric encoder $q_{\varphi}(z|S_{i,t})$ that maps the sender's state to a latent variable $z$ (representing the message $m=g(z)$, where $g(\cdot)$ is message decoder.), and a parametric decoder $p_{\vartheta}(O_{j,t+1}|z, S_{j,t})$ that predicts the receiver's future observation. The full derivation of the variational bounds is provided in \textbf{Supplementary Method~\ref{sec: Derivation of the Variational Bound for the Predictive Information Bottleneck}}.

First, we derive a tractable upper bound for the communication rate, $I(S_{i,t}; m_{i,t})$. By introducing a fixed prior distribution $p(z)$ over the latent message space, the rate can be bounded by the Kullback-Leibler (KL) divergence between the approximate posterior and the prior according to \textbf{Supplementary Lemma \ref{lem:rate-bound}}:
\begin{equation}
\label{eq:rate_bound_revised}
I(S_{i,t}; m_{i,t}) \le \mathbb{E}_{p(S_{i,t})}\Big[D_\text{KL}\big(q_{\varphi}(z\mid S_{i,t})\,\|\,p(z)\big)\Big].
\end{equation}
Minimizing this KL divergence thus serves to minimize an upper bound on the true communication rate, effectively enforcing compression. 

Second, we derive a tractable upper bound for the distortion term, 
$-I(m_{i,t}; O_{j,t+1} \mid S_{j,t})$. Using the entropy decomposition, 
the conditional mutual information can be written as $I(m_{i,t}; O_{j,t+1} 
\mid S_{j,t}) = H(O_{j,t+1} \mid S_{j,t}) - H(O_{j,t+1} \mid m_{i,t}, S_{j,t})$. 
Since the baseline entropy $H(O_{j,t+1} \mid S_{j,t})$ is independent of model 
parameters, minimizing distortion $-I(m_{i,t}; O_{j,t+1} \mid S_{j,t})$ is 
equivalent to minimizing the conditional entropy $H(O_{j,t+1} \mid m_{i,t}, 
S_{j,t})$. We upper-bound this entropy using a parametric decoder 
$p_{\vartheta}(O_{j,t+1}|z, S_{j,t})$, yielding:
\begin{equation}
\label{eq:distortion_bound_revised}
-I(m_{i,t}; O_{j,t+1} \mid S_{j,t}) \le -H(O_{j,t+1} \mid S_{j,t}) 
+ \mathbb{E}_{p(S_{i,t}, S_{j,t}, O_{j,t+1})} \Big[ \mathbb{E}_{q_{\varphi}(z\mid S_{i,t})}
\big[-\log p_{\vartheta}(O_{j,t+1}\mid z, S_{j,t})\big] \Big].
\end{equation}

Since $H(O_{j,t+1} \mid S_{j,t})$ is a constant, minimizing this bound is equivalent 
to minimizing the expected negative log-likelihood (reconstruction error). 
Combining these bounds yields the variational information bottleneck (VIB) 
objective, which is a tractable surrogate for the intractable IB loss. By substituting 
the upper bound for rate and the upper bound for distortion (omitting the 
constant $H(O_{j,t+1} \mid S_{j,t})$), we obtain a new loss function:

\begin{equation}
\label{eq:vib_loss_revised}
\begin{aligned}
\mathcal{L}_{\text{VIB}}(\varphi,\vartheta) = \underbrace{\mathbb{E}_{p(S_{i,t}, S_{j,t}, O_{j,t+1})}
\Big[\mathbb{E}_{q_{\varphi}(z\mid S_{i,t})}\big[-\log p_{\vartheta}(O_{j,t+1}\mid z, 
S_{j,t})\big]\Big]}_{\text{Distortion (Reconstruction Loss)}} 
+ \beta \underbrace{\mathbb{E}_{p(S_{i,t})}\Big[D_{\text{KL}}\big(q_{\varphi}(z\mid S_{i,t}) 
\,\|\, p(z)\big)\Big]}_{\text{Rate (KL Regularizer)}}
\end{aligned}
\end{equation}
where expectations are taken over the joint data distribution. The first term 
minimizes the reconstruction error, maximizing the message's predictive utility. 
The second term minimizes the KL divergence from the prior, enforcing compression. 
We operationalize this using a convolutional variational autoencoder (VAE)
architecture, where the entire system is trained end-to-end to minimize
$\mathcal{L}_{\text{VIB}}$. The encoder implements a hierarchical downsampling
structure that maps a $64 \times 64$ spatial memory map to a compressed latent
message $z$, while the decoder symmetrically reconstructs the partner's predicted
occupancy map from this compressed representation (detailed architecture specifications
in \textbf{Supplementary Tables~\ref{tab:comm_encoder}} and \textbf{\ref{tab:comm_decoder}}).

Each term in this loss directly implements a component of the social predictive
coding framework. The first term (reconstruction loss) corresponds to distortion,
minimizing prediction error and thereby maximizing the message's utility for the
receiver. The second term (KL divergence) corresponds to rate, enforcing compression
and driving the emergence of efficient symbolic mechanisms. The hyperparameter
$\beta$ enables systematic exploration of the rate-distortion trade-off, revealing
how bandwidth constraints shape emergent communication structure (training configuration
in \textbf{Supplementary Table~\ref{tab:comm_training}}). 

\subsubsection*{Level 3: Predictive uncertainty as a guide for strategic exploration}

The predictive coding framework culminates at the level of strategic decision-making. Having established models to predict its environment (Level 1) and its partners' states (Level 2), the agent must now decide how to act in order to improve these predictive models over time. This changes an agent from a passive observer into an active learner, a concept known as active inference. The optimal exploration strategy, from a predictive coding perspective, is to seek out experiences that maximally reduce the uncertainty of an agent's internal generative model. In a large, partially-observable environment, this requires a principled approach to balance exploration and exploitation, a challenge we address with a hierarchical reinforcement learning (HRL) framework explicitly guided by predictive uncertainty.

This framework decomposes the complex navigation problem into two levels. The low-level controller is a deterministic planner (A* search) that executes concrete navigational sub-goals based on an agent's current understanding of the shared world map. This offloads the complexities of pathfinding, allowing the high-level policy (meta-controller) to focus exclusively on the strategic question: ``\textit{Where should I go next to learn the most?}'' The meta-controller is implemented as an actor-critic network and trained using multi-agent proximal policy optimization (MAPPO), a robust algorithm for cooperative settings. To fulfill this, each agent first partitions its occupancy grid into a $4 \times 4$ regional summary, generating a 48-dimensional feature vector that includes the exploration ratio, walkability, and agent occupancy for each region. These features are fed into a shared actor-critic network with two fully connected layers (256 units with ReLU activation) to extract a common embedding. This embedding then branches into an actor head, which outputs a masked categorical distribution over the 16 regions for goal selection, and a critic head, which predicts scalar state values (\textbf{Supplementary Table~\ref{tab:meta_controller_arch}}). The crucial insight is that the reward signal driving this high-level policy is not based on sparse, external rewards (like finding the target), but is instead generated intrinsically from an agent's own state of uncertainty.

This is realized through an enhanced intrinsic curiosity module (ICM), which functions as the direct implementation of the active inference principle. Rather than relying on a learned forward dynamics model, our ICM directly estimates epistemic uncertainty by analyzing the geometry of the known-unknown boundary in an agent's local map, augmented with social spatial information from the SPC module (Level 2). The composite intrinsic reward signal, $r_t^\text{int}$, is a weighted sum of three components, each reflecting a different facet of uncertainty reduction:
\begin{equation}
r_t^\text{int} = w_{\text{curiosity}} r_{\text{curiosity}} + w_{\text{coord}} r_{\text{coord}} + w_{\text{explore}} r_{\text{explore}}.
\end{equation}
Specifically, these components are implemented as follows: (1) The curiosity reward, $r_{\text{curiosity}}$, is a frontier-based score that directly rewards the selection of high-level goals in regions bordering unknown territory, where the shared predictive map is most uncertain. (2) The coordination reward, $r_{\text{coord}}$, promotes spatial division of labor by leveraging the inter-agent distance estimates $\hat{d}_{ij,t}$ provided by the SPC module. This component discourages redundant exploration by rewarding goal selections that maintain appropriate separation from teammates, directly operationalizing the principle that distributed exploration maximizes information gain. Critically, this establishes a computational dependency: the SPC module's learned distance-tuned neurons (\textbf{Fig.~\ref{fig:social_place_code}c}) provide the essential geometric information that enables the ICM to generate coordination signals. (3) The exploration reward, $r_{\text{explore}}$, grants credit for discovering map cells that are unknown to the entire team, targeting states that are, by definition, maximally unpredictable. More details about ICM reward formulations and communication mechanisms are provided in \textbf{Supplementary Method \ref{sec: HRL-ICM Framework}}.

By training the MAPPO policy to maximize this rich, uncertainty-driven intrinsic reward, the HRL-ICM framework learns a sophisticated, coordinated exploration strategy from first principles. It does not rely on hand-crafted heuristics for exploration but instead learns to perform a form of Bayesian experimental design, constantly choosing actions to gather the most informative data. The entire system is trained end-to-end with the SPC module that provides continuous geometric awareness of partner states, the ICM that translates this into strategic exploration incentives, and the meta-controller that selects goals based on these incentives (hyperparameters and training configuration in \textbf{Supplementary Table~\ref{tab:hrl_training_config}}). Thus, from the lowest level of perception to the highest level of strategic planning, the entire architecture is unified under the overarching goal of building and refining a predictive model of the world by actively seeking out and resolving uncertainty.
\label{sec4}




\section*{Data Availability}\label{sec5}

The datasets used in this study are based on procedurally generated environments within the Memory-Maze benchmark, which has been made publicly available at \url{https://github.com/jurgisp/memory-maze}. All environment configurations, agent spawn locations, and goal placements used in our experiments are provided in the supplementary repository. Additional raw data (e.g., agent trajectories, BEV reconstructions, and communication transcripts) are available from the corresponding author upon reasonable request.

\section*{Code Availability}\label{sec6}

The full source code for the predictive coding framework, including training scripts, network architectures, and experiment configurations, will be released at \url{https://github.com/fangzr/SSM-PC} upon publication. To facilitate reproducibility, the repository also includes pretrained models, instructions for reproducing the Memory-Maze benchmark results, and detailed documentation. A permanent versioned archive will be deposited in Zenodo prior to final acceptance.

\section*{Author Contributions}
Z.F. and Y.G. conceived the predictive coding framework. Z.F., Y.G., Y.Z., and H.A. performed all experiments and data analysis. Y.G. and H.A. contributed to the conception of the HRL-ICM framework and data collection. W.D. and Y.F. supervised the entire project. W.D. and Y.F. wrote and reviewed the manuscript. All authors discussed the results and provided critical feedback on the manuscript. Y.F. provided funding.

\bibliographystyle{unsrtnat}
\bibliography{ref}

\newpage

\appendix

\newcommand{\suppmatheading}{%
  \begin{center}
  {\LARGE\bfseries Supplementary Information}\par
  \vspace{1.6em}
  {\large\bfseries Shared Spatial Memory Through Predictive Coding}\par
  \vspace{1em}
  {\normalsize
  Zhengru Fang$^{1,4,\star}$, 
  Yu Guo$^{1,\star}$, 
  Jingjing Wang$^{2,\star}$, 
  Yuang Zhang$^{3}$, 
  Haonan An$^{1}$, \\
  Yinhai Wang$^{3}$, 
  Wenbo Ding$^{4}$, 
  Yuguang Fang$^{1}$
  }\par
  \vspace{0.5em}
  {\footnotesize
  $^1$City University of Hong Kong, 
  $^2$Beihang University, \\
  $^3$University of Washington, 
  $^4$Tsinghua University, 
  $^\star$Equal contribution
  }\par
  \vspace{1em}
  \end{center}
}

\renewcommand{\thefigure}{S\arabic{figure}}
\renewcommand{\thetable}{S\arabic{table}}
\renewcommand{\theequation}{S\arabic{equation}}

\setcounter{figure}{0}
\setcounter{table}{0}
\setcounter{equation}{0}

\renewcommand{\thesection}{S\arabic{section}}
\renewcommand{\thesubsection}{S\arabic{section}.\arabic{subsection}}
\renewcommand{\thesubsubsection}{S\arabic{section}.\arabic{subsection}.\arabic{subsubsection}}

\setcounter{tocdepth}{3}
\setcounter{secnumdepth}{3}

\suppmatheading

\vspace{0.5em}



{\noindent The supplementary material includes:\\[0.25em]
\textbf{Sections} S1 to S2\\
\textbf{Figures} S1 to S11\\
\textbf{Tables} S1 to S13\\
\textbf{Supplementary Videos} S1 to S5\\
}
\titlecontents{section}[1.5em]
{\addvspace{1em}\large\bfseries}
{\contentslabel{2em}}
{\hspace*{-2em}}
{\titlerule*[0.3pc]{.}\contentspage}
[\addvspace{0.5em}]
\titlecontents{subsection}[3em]
{\addvspace{0.6em}\normalsize}
{\contentslabel{3em}}
{\hspace*{-2.5em}}
{\titlerule*[0.3pc]{.}\contentspage}
[\addvspace{0.3em}]
\titlecontents{subsubsection}[5em]
{\addvspace{0.4em}\small}
{\contentslabel{4em}}
{\hspace*{-3em}}
{\titlerule*[0.3pc]{.}\contentspage}
[\addvspace{0.2em}]
\startcontents[sections]
\printcontents[sections]{}{1}{\setcounter{tocdepth}{2}}

\newpage

\section{System Overview}

This section provides a comprehensive overview of our system architecture, training pipeline, and experimental configuration. The following detailed supplementary methods (Supplementary Method~\ref{sec: Biologically-Inspired Spatial Representation Networks}--\ref{sec: HRL-ICM Framework}) describe individual components; here we explain how these components integrate into a unified framework for collaborative spatial navigation.

\subsection{System Architecture Overview}

Our framework implements a modular architecture where specialized perceptual and cognitive components are composed into an end-to-end reinforcement learning system. The design philosophy follows a two-stage approach: first learning robust perceptual primitives independently, then integrating them for collaborative decision-making.

\paragraph{Pre-trained perceptual components.}
Three core modules are trained independently before integration into the full multi-agent system:

\begin{itemize}
    \item \textbf{Grid Cell Network} (detailed in Supplementary Method~\ref{sec: Biologically-Inspired Spatial Representation Networks}): A recurrent neural network trained via self-supervised path integration to maintain a dynamic estimate of self-location from egocentric velocity commands. Through predictive learning alone, this module spontaneously develops hexagonally-periodic spatial representations analogous to biological grid cells, providing a stable metric foundation for navigation.

    \item \textbf{Spatial Memory Generation Module} (detailed in Supplementary Method~\ref{sec: Spatial Memory Generation Module}): A transformer-based encoder-decoder architecture that translates first-person RGB observations into structured bird's-eye-view (BEV) spatial maps. Trained end-to-end on an egocentric-to-allocentric translation task, this module enables agents to construct persistent spatial memories from partial visual observations.

    \item \textbf{Emergent Communication Mechanism} (detailed in Supplementary Method~\ref{sec: Emergent Communication Mechanism}): A variational autoencoder (VAE) implementing the information bottleneck principle for bandwidth-constrained message passing. Trained to compress spatial maps while preserving predictive utility for partner agents, this module learns an efficient symbolic communication protocol without explicit supervision on message semantics.
\end{itemize}

\paragraph{Social representation substrate.}
Building upon the individual grid cell network, a specialized \textbf{Social Place Cell Module} (detailed in Supplementary Method~\ref{sec: Social Place Cell}, \textbf{Table~\ref{tab:spc_arch}}) extends spatial coding to the multi-agent domain. This dual-stream LSTM architecture processes motion information from both self and partner agents, learning to predict not only individual positions but also inter-agent distance relationships. Through multi-task predictive training, this module develops functionally specialized neural populations—including units selectively tuned to partner locations (social place cells)—that provide the geometric awareness necessary for coordination.

\paragraph{End-to-end reinforcement learning framework.}
The \textbf{HRL-ICM Framework} (detailed in Supplementary Method~\ref{sec: HRL-ICM Framework}) integrates all pre-trained components into a hierarchical policy optimized for collaborative exploration and target search. A meta-controller network selects high-level exploration goals based on epistemic uncertainty, while an intrinsic curiosity module (ICM) generates dense reward signals from map coverage and inter-agent coordination. The social place cell module's distance estimates enable the ICM to compute coordination rewards that encourage spatial division of labor, while the communication mechanism allows selective information sharing when bandwidth-efficient messages can reduce collective uncertainty. This entire system is trained end-to-end using Multi-Agent Proximal Policy Optimization (MAPPO) on the Memory-Maze benchmark.

The modular design enables interpretability (each component's emergent representations can be analyzed independently), sample efficiency (pre-trained components provide strong initialization), and scalability (individual modules can be upgraded without full system retraining).

\subsection{Training Pipeline}

The complete training procedure follows a three-stage curriculum designed to establish stable perceptual foundations before optimizing collaborative behavior:

\begin{enumerate}
    \item \textbf{Stage 1 -- Individual Component Pre-training (Independent Training)}:
    \begin{itemize}
        \item \textit{Grid Cell Network}: Trained on synthetic trajectory datasets with randomized velocity sequences in $15 \times 15$ meter environments. The network learns to predict place and head-direction cell activations after integrating motion commands over 100-step sequences. Training uses Adam optimizer (learning rate $1 \times 10^{-3}$, batch size 64) for approximately 50 epochs until gridness scores stabilize (typically $> 0.3$).

        \item \textit{BEV Prediction Network}: Trained on paired egocentric RGB / allocentric BEV map datasets collected from agent trajectories in Memory-Maze environments. The transformer encoder-decoder is optimized to minimize a composite loss balancing occupancy prediction (binary cross-entropy), appearance reconstruction (masked MSE), and spatial smoothness. Training uses Adam optimizer (learning rate $1 \times 10^{-4}$, batch size 64) until validation IoU exceeds 0.85.

        \item \textit{Communication VAE}: Trained on spatial map reconstruction tasks with information bottleneck objective. Paired sender/receiver map samples are generated from dual-agent exploration trajectories. The encoder-decoder architecture is optimized to balance reconstruction accuracy against KL divergence from a Gaussian prior ($\beta = 1.0$). Training uses Adam optimizer (learning rate $1 \times 10^{-3}$, batch size 32) for 50 epochs, achieving 32:1 compression with $< 5\%$ reconstruction error.
    \end{itemize}

    \item \textbf{Stage 2 -- Social Place Cell Module Training (Dual-Agent Path Integration)}:
    \begin{itemize}
        \item The social place cell module is initialized with pre-trained grid cell network weights for the self-stream, then extended with a partner-stream LSTM.

        \item Training data consists of synchronized dual-agent trajectories where both agents' poses and velocities are recorded. The partner velocity is estimated from visual observations of the partner's motion when visible.

        \item The module is trained to predict final-timestep place/head-direction activations for both agents plus their Euclidean distance via multi-task loss: $\mathcal{L}_{\text{SPC}} = \mathcal{L}_{\text{self}} + \mathcal{L}_{\text{partner}} + \mathcal{L}_{\text{distance}}$.

        \item Training uses Adam optimizer (learning rate $1 \times 10^{-3}$, batch size 64) for 30 epochs. Success criterion: distance prediction MAE $< 1.0$ grid cells and emergence of distinct social place cell populations (verified via neural selectivity analysis).
    \end{itemize}

    \item \textbf{Stage 3 -- HRL-ICM End-to-End Training (Multi-Agent Reinforcement Learning)}:
    \begin{itemize}
        \item Pre-trained components (grid cell, BEV prediction, communication VAE, social place cell module) are integrated into the full HRL-ICM system.

        \item \textit{Initial Freezing Phase} (first $10^4$ environment steps): All pre-trained components have frozen parameters to allow meta-controller and ICM to stabilize against fixed perceptual representations. Only the meta-controller actor-critic network and communication gating policy are updated via MAPPO.

        \item \textit{Joint Fine-tuning Phase} (remaining training): All components are unfrozen and jointly optimized end-to-end. Meta-controller updates continue via MAPPO (learning rate $3 \times 10^{-4}$), while perceptual modules receive gradients from both task rewards and their original supervised objectives (weighted $0.1 \times$ to prevent catastrophic forgetting).

        \item Training environments: Procedurally generated Memory-Maze instances with increasing complexity (maze sizes $15 \times 15$ to $39 \times 39$, 2--5 agents). Training continues for $5 \times 10^6$ environment steps or until success rate on $29 \times 29$ mazes exceeds 70\%.

        \item Curriculum: Maze size increases every $10^6$ steps; communication bandwidth decreases from 128 bits/step to target bandwidth (4--16 bits/step) over first $2 \times 10^6$ steps to encourage robust communication strategies.
    \end{itemize}
\end{enumerate}

This staged training strategy ensures that higher-level reasoning builds upon stable low-level representations, preventing the RL policy from corrupting perceptual modules during exploration. The modular architecture also enables ablation studies where individual components can be selectively frozen or replaced.

\subsection{Experimental Configuration}

The complete experimental setup encompasses hardware infrastructure, simulation environment parameters, and training hyperparameters across all system components.

\paragraph{Evaluation protocol.}
All reported results are averaged over 100 evaluation episodes with fixed random seeds for reproducibility. For bandwidth scaling experiments (Fig.~\ref{fig:communication}c--d), agents are tested across latent dimensions $z_\text{dim} \in \{16, 32, 64, 128, 256\}$ corresponding to $\{4, 8, 16, 32, 64\}$ bits/step (assuming 4-bit quantization). For generalization tests, models trained on $29 \times 29$ mazes are evaluated on unseen $39 \times 39$ mazes without further training. Statistical significance is assessed via two-sample $t$-tests with Bonferroni correction for multiple comparisons.

\paragraph{Forward reference.}
Detailed specifications for each component's network architecture, loss functions, and training procedures are provided in the following supplementary methods sections: Grid Cell Network (Supplementary Method~\ref{sec: Biologically-Inspired Spatial Representation Networks}, \textbf{Tables~\ref{tab:grid_cell_network}, \ref{tab:grid_cell_training}}), Spatial Memory Generation (Supplementary Method~\ref{sec: Spatial Memory Generation Module}, \textbf{Tables~\ref{tab:resnet_detector}, \ref{tab:bev_encoder}, \ref{tab:bev_decoder}}), Communication Mechanism (Supplementary Method~\ref{sec: Emergent Communication Mechanism}, \textbf{Tables~\ref{tab:comm_encoder}, \ref{tab:comm_decoder}, \ref{tab:comm_training}}), and HRL-ICM Framework (Supplementary Method~\ref{sec: HRL-ICM Framework}, \textbf{Tables~\ref{tab:spc_arch}, \ref{tab:meta_controller_arch}, \ref{tab:hrl_training_config}}).

\section{Supplementary Methods}

\subsection{Grid cell network: Biological-inspired spatial representation}
\label{sec: Biologically-Inspired Spatial Representation Networks}
A fundamental cognitive capability for navigation is the ability to maintain a dynamic estimate of self-location and orientation from idiothetic cues—a process known as path integration. In mammals, this function is robustly implemented by the hippocampal-entorhinal system, which provides a canonical example of a stable internal metric for space. Inspired by this biological solution~\cite{banino2018vector_supp}, we developed a computational model (termed grid cell network) designed to test the hypothesis that the characteristic neural codes for space, such as grid cells, can emerge from a general predictive learning objective, without being explicitly engineered into the system's architecture. Grid cell network, a recurrent neural network, is tasked not only with path integration, but also with a more fundamental objective: to predict its own future sensory state given a sequence of self-motion cues. We demonstrate that to solve this continuous self-prediction problem efficiently~\cite{cueva2020emergence_supp}, the network is compelled to develop a highly structured internal representational scheme. This scheme, we show, is a convergent solution that recapitulates key properties of biological spatial representations, most notably the spontaneous formation of hexagonally symmetric grid-like firing patterns.

\subsubsection{Network architecture}

The core of our grid cell network is a recurrent neural network (RNN) formulated as a predictive state-space model. Its objective is to learn the transition dynamics of an agent's pose by continuously predicting its future location and orientation. The sequential and cumulative nature of path integration presents a significant challenge involving long-term temporal dependencies and the integration of noisy velocity signals. To address this, we selected a long short-term memory (LSTM) network as the central recurrent component~\cite{graves2012long_supp}. The LSTM's gating mechanisms are exceptionally well-suited to learning to retain or discard information over extended time horizons, providing a robust substrate for integrating velocity commands while mitigating the vanishing gradient problems that would plague a simpler RNN architecture in this task. The network's full architecture is specified in \textbf{Fig. \ref{fig:figS-gcn}} and \textbf{Table \ref{tab:grid_cell_network}}.

\begin{figure}[h]
\centering
\includegraphics[width=1\linewidth]{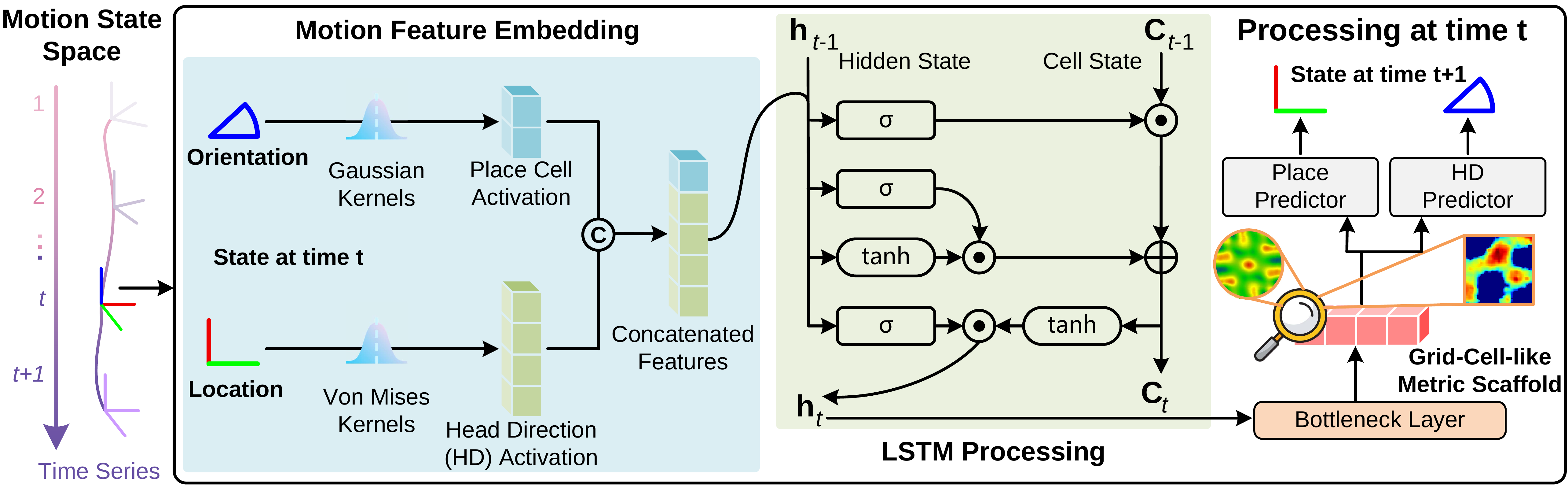}
\caption{\textbf{Architecture of grid cell network.}}
\label{fig:figS-gcn}
\end{figure}

The computational flow is designed to test a key principle of biological navigation: the anchoring of a dynamic, self-motion-based estimate of position to stable, absolute sensory cues. The process begins by encoding the agent's initial pose ($s_0$) into a high-dimensional, distributed representation using ensembles of virtual place and head direction (HD) cells~\cite{cueva2020emergence_supp}. This initial representation, grounded in an absolute reference frame, is projected through learnable linear layers to initialize the LSTM's hidden state ($\mathbf{h}_0$) and cell state ($\mathbf{c}_0$). This initialization strategy is not a mere technical convenience; it is a critical design choice that simulates the neural mechanism of ``remapping'' or ``resetting'' observed when an animal enters a new environment. By grounding the integrator's initial state in a veridical sensory observation, we prevent the unbounded accumulation of drift that is the primary failure mode of pure idiothetic integration.

From this anchored starting point, the LSTM iteratively updates its internal state by processing the sequence of velocity inputs. At each timestep, the LSTM's hidden state—its internal representation of the agent's current pose—is passed through a linear bottleneck layer before prediction. This bottleneck is a central element of our experimental design. By constraining the dimensionality of the representation passed to the predictors, we impose a powerful efficiency constraint on the network. This forces the network to discover a compressed, abstract, and maximally informative latent code for pose. The central scientific question this architecture addresses is: What is the geometry of this emergent representation? Is it an unstructured code, or does it converge to a structured, periodic representation, such as the hexagonal lattice of grid cells, which is hypothesized to be an information-theoretically optimal code for 2D space? Finally, two linear predictor heads decode this latent representation back into the ``sensory'' space of place and head direction cell activations. The entire network is trained end-to-end by minimizing the discrepancy between these predictions and the ground-truth sensory states, thereby closing the predictive loop and forcing the internal dynamics of the model to learn the dynamics of the external world.

\begin{table}[t]
\centering
\caption{\textbf{Details of grid cell network.} $B$ and $Seq$ mean the batch size and sequence length, respectively.}
\setlength{\tabcolsep}{22pt}
\label{tab:grid_cell_network}
\begin{tabular}{llll}
\toprule
\textbf{Component} & \textbf{Input Size} & \textbf{Output Size} & \textbf{Parameters} \\
\midrule
\multicolumn{4}{c}{Velocity Processing} \\
\midrule
Velocity Input & $B \times Seq \times$ 3 & $B \times Seq \times$ 3 & - \\
\midrule
\multicolumn{4}{c}{Initial State Computation} \\
\midrule
Place Cell Activation & $B \times$ 2 & $B \times$ 256 & Gaussian Kernels \\
Head Direction Activation & $B \times$ 1 & $B \times$ 32 & Von Mises Kernels \\
Concatenated Features & $B \times$ 288 & $B \times$ 288 & - \\
\midrule
\multicolumn{4}{c}{LSTM Processing} \\
\midrule
Initial Hidden State ($\mathbf{h}_0$) & $B \times$ 288 & 1 $\times B \times$ 128 & Linear(288, 128) \\
Initial Cell State ($\mathbf{c}_0$) & $B \times$ 288 & 1 $\times B \times$ 128 & Linear(288, 128) \\
LSTM Layer & $B \times Seq \times$ 3 & $B \times Seq \times$ 128 & 1 layer, 128 units \\
\midrule
\multicolumn{4}{c}{Bottleneck and Prediction} \\
\midrule
Bottleneck Layer & $B \times Seq \times$ 128 & $B \times Seq \times$ 256 & Linear + Dropout(0.5) \\
Place Predictor & $B \times Seq \times$ 256 & $B \times Seq \times$ 256 & Linear(256, 256) \\
HD Predictor & $B \times Seq \times$ 256 & $B \times Seq \times$ 32 & Linear(256, 32) \\
\bottomrule
\end{tabular}
\end{table}

\subsubsection{Place and head direction cell ensembles as a sensory ground truth}

To provide the network with a stable, biologically-plausible sensory ground truth for its predictive task, we constructed virtual ensembles of place cells and head direction cells. These cell types are not merely convenient choices; they are well-established models for neural representations of location and orientation, respectively. Their tuning profiles provide a smooth, distributed, and overcomplete basis for representing the agent's state, a feature that confers robustness to noise and local errors.

\paragraph{Place cell ensemble.}
Location is encoded by an ensemble of place cells with Gaussian receptive fields distributed uniformly across the environment. This creates a population code where each location elicits a unique pattern of activation. The activation of each cell $i$ is a normalized Gaussian function of the agent's position $\mathbf{x}$:
\begin{equation}
a_i(\mathbf{x}) = \frac{\exp\left(-\frac{||\mathbf{x} - \boldsymbol{\mu}_i||^2}{2\sigma_i^2}\right)}{\sum_{j=1}^{N} \exp\left(-\frac{||\mathbf{x} - \boldsymbol{\mu}_j||^2}{2\sigma_j^2}\right)},
\end{equation}
where $\boldsymbol{\mu}_i$ is the preferred location of cell $i$. This normalization ensures the population vector of activations forms a probabilistic distribution over the state space.

\paragraph{Head direction cell ensemble.}
Orientation is encoded by an ensemble of head direction cells whose firing is tuned to the agent's heading angle. We model these using von Mises distributions, the circular analogue of the Gaussian distribution, which accurately captures the tuning curves of biological head direction cells. The activation of each cell $j$ is given by:
\begin{equation}
b_j(\theta) = \frac{\exp(\kappa_j \cos(\theta - \phi_j))}{\sum_{k=1}^{M} \exp(\kappa_k \cos(\theta - \phi_k))},
\end{equation}
where $\theta$ is the heading angle, $\phi_j$ is the cell's preferred direction, and $\kappa_j$ controls the tuning width, allowing for a diversity of specificities within the population.


\subsubsection{Objective function for predictive spatial learning}
\label{sec: Objective Function for Predictive Spatial Learning}

To drive the emergence of structured representations, we design a specialized, multi-component loss function that guides the learning process. This objective function is not arbitrary but is carefully structured to enforce key constraints on the predictive task, which can be defined as:
\begin{equation}
\mathcal{L}_\text{total} = \sum_{t=0}^{T-1} w_t \left( \mathcal{L}_\text{place}^{(t)} + \mathcal{L}_\text{HD}^{(t)} \right) + w_{\text{init}} \mathcal{L}_\text{init} + w_{\text{cont}} \mathcal{L}_\text{cont}.
\end{equation}

\paragraph{Probabilistic loss and temporal weighting.}
We frame the prediction task in probabilistic terms by using the Kullback-Leibler (KL) divergence between the network's predicted distribution of cell activations and the ground-truth distribution. This treats learning as a process of minimizing the informational ``surprise'' between the model's belief and reality, a more principled approach than simple regression for dealing with distributed neural codes.
\begin{align}
\mathcal{L}_\text{place}^{(t)} &= \text{KL}\left( \text{softmax}(\hat{\mathbf{p}}_t) \| \mathbf{p}_t^\text{target} \right), \\
\mathcal{L}_\text{HD}^{(t)} &= \text{KL}\left( \text{softmax}(\hat{\mathbf{h}}_t) \| \mathbf{h}_t^\text{target} \right).
\end{align}

Furthermore, path integration errors are cumulative; small initial errors can propagate and corrupt the entire trajectory estimate. To counteract this, we implement a temporally-weighted loss scheme. The weights $w_t$ place a strong emphasis on accuracy in the initial phase of a trajectory, effectively creating a curriculum that forces the network to first master the fundamental single-step dynamics.
\begin{equation}
w_t = \begin{cases}
2 \cdot w_\text{init} & \text{if } t = 0 \\
w_\text{init} \cdot \rho^{t-1} & \text{if } 1 \leq t < 5 \\
1.0 & \text{if } t \geq 5
\end{cases} \quad \text{with } w_\text{init} = 5.0, \rho = 0.8.
\end{equation}

\paragraph{Regularization for spatial and temporal coherence.}
Two additional regularization terms enforce plausible constraints on the spatial representation. The initial consistency loss, $\mathcal{L}_\text{init}$, explicitly penalizes any mismatch between the network's initial prediction and the initial sensory ground truth. This reinforces the ``anchoring'' mechanism described previously.
\begin{equation}
\mathcal{L}_\text{init} = 2.0 \cdot \text{KL}\left( \text{softmax}(\hat{\mathbf{p}}_0) \| \mathbf{p}_0^\text{target} \right).
\end{equation}

The continuity loss, $\mathcal{L}_\text{cont}$, encourages smoothness in the network's predictions over consecutive timesteps. This is a biologically plausible prior, as an agent's belief about its location should not change drastically. This term regularizes the learned dynamics, preventing jittery state estimates and promoting the learning of a continuous manifold representation.
\begin{equation}
\mathcal{L}_\text{cont} = \frac{1}{5} \sum_{t=1}^{5} (1 - 0.15(t-1)) \cdot \text{KL}\left( \text{softmax}(\hat{\mathbf{p}}_t) \| \text{softmax}(\hat{\mathbf{p}}_{t-1}) \right).
\end{equation}

More details about grid cell network training are listed in \textbf{Table \ref{tab:grid_cell_training}}.

\begin{table}[t]
\centering
\caption{\textbf{Training configuration of grid cell network.}}
\setlength{\tabcolsep}{35pt}
\label{tab:grid_cell_training}
\begin{tabular}{lll}
\toprule
\textbf{Parameter Category} & \textbf{Parameter} & \textbf{Value} \\
\midrule
\multirow{6}{*}{Loss Weights} & Initial Frame Weight ($w_\text{init}$) & 5.0 \\
 & Decay Factor ($\rho$) & 0.8 \\
 & Initial Consistency Weight ($w_{\text{init}}$) & 2.0 \\
 & Continuity Weight ($w_{\text{cont}}$) & 0.1 \\
 & Sequence Length & 100 time steps \\
 & Temporal Focus Window & First 5 time steps \\
\midrule
\multirow{4}{*}{Network Parameters} & LSTM Hidden Size & 128 \\
 & Bottleneck Size & 256 \\
 & Dropout Rate & 0.5 \\
 & Input Velocity Dimension & 3 (2D + angular) \\
\midrule
\multirow{3}{*}{Training Setup} & Batch Size & 64 \\
 & Learning Rate & $1 \times 10^{-3}$ \\
 & Environment Size & $15 \times 15$ m \\
\bottomrule
\end{tabular}
\end{table}

\subsubsection{Quantitative analysis of emergent spatial representations}
\label{sec: Quantitative analysis of emergent spatial representations}
To objectively determine whether the network's learned representations exhibit the characteristic firing patterns of grid cells, we employ a standardized quantitative analysis pipeline adapted directly from the field of neurophysiology. This procedure allows us to rigorously identify the presence of hexagonal periodicity in the activity of individual units within the network's bottleneck layer, providing the crucial empirical link between our computational model and the biological phenomenon it seeks to explain.

\paragraph{Rate map.}
For any given unit, we first compute an occupancy-normalised firing rate map, which represents its average activation as a function of the agent's 2D position. For a unit with activation sequence $\{a_t\}_{t=1}^{T}$ along a trajectory $\{(x_t,y_t)\}_{t=1}^{T}$, the arena is discretised into bins $(i,j)$. The rate map $R(i,j)$ is calculated by dividing the total activation within a bin, $\mathcal{S}(i,j)$, by the time spent in that bin, $\mathcal{O}(i,j)$:
\begin{equation}
\label{eq:rate-map}
R(i,j)=
\begin{cases}
\mathcal{S}(i,j)\big/\mathcal{O}(i,j), & \mathcal{O}(i,j)>0,\\[2pt]
\mathrm{NaN}, & \text{otherwise.}
\end{cases}
\end{equation}

This map is typically smoothed with a Gaussian kernel to mitigate sampling noise. Let the mask of visited bins be $\mathcal{V}=\{(i,j): \mathcal{O}(i,j)>0\}$.

\paragraph{Spatial autocorrelogram (SAC).}
To reveal periodic structure in the rate map, we compute its 2D spatial autocorrelation. The autocorrelogram, $\mathrm{SAC}(\boldsymbol{d})$, measures the Pearson correlation between the rate map $R$ and a spatially shifted version of itself for every possible offset $\boldsymbol{d}=(d_x,d_y)$.
\begin{equation}
\label{eq:sac}
\mathrm{SAC}(\boldsymbol{d})=
\frac{\sum_{(i,j)\in \mathcal{V} _{\boldsymbol{d}}}{\!}(R(i,j)-\mu _{\boldsymbol{d}})\,(R(i+d_x,j+d_y)-\mu _{\boldsymbol{d}}\prime)}{\sqrt{\sum_{(i,j)\in \mathcal{V} _{\boldsymbol{d}}}{\!}(R(i,j)-\mu _{\boldsymbol{d}})^2}\;\sqrt{\sum_{(i,j)\in \mathcal{V} _{\boldsymbol{d}}}{\!}(R(i+d_x,j+d_y)-\mu _{\boldsymbol{d}}^{\prime})^2}},
\end{equation}
where $\mathcal{V}_{\boldsymbol{d}}$ is the set of overlapping valid bins for a given shift, and $\mu, \mu', \sigma, \sigma'$ are the respective sample means and standard deviations. A hexagonally periodic firing pattern manifests as a central peak at $\boldsymbol{d}=\mathbf{0}$ surrounded by a ring of six additional peaks, forming a hexagonal lattice in the correlogram.

\paragraph{Gridness score calculation.}
To quantify the degree of hexagonal symmetry, we compute a ``gridness score''. This involves isolating the ring of peaks in the SAC surrounding the central peak using an annular mask, $\mathcal{A}(r_{\min},r_{\max})$. We then measure the rotational symmetry of the pattern within this annulus by calculating the Pearson correlation, $C_\theta$, between the annulus and a version of itself rotated by an angle $\theta$.
\begin{equation}
\label{eq:ring-corr}
C_\theta(r_{\min},r_{\max})
=\frac{\displaystyle\sum_{\boldsymbol{d}\in\mathcal{A}(r_{\min},r_{\max})}
\big(S(\boldsymbol{d})-\bar s\big)\,\big(S_\theta(\boldsymbol{d})-\bar s\big)}
{\displaystyle\sum_{\boldsymbol{d}\in\mathcal{A}(r_{\min},r_{\max})}\big(S(\boldsymbol{d})-\bar s\big)^2},
\end{equation}
where $S$ is the SAC image and $\bar s$ its mean value over the annulus. $S_\theta(\boldsymbol{d}) = S(\mathbf{R}_\theta \boldsymbol{d})$, where the matrix $\mathbf{R}_\theta$ is a rotation matrix: $\mathbf{R}_\theta = \begin{bmatrix} \cos\theta & -\sin\theta \\ \sin\theta & \cos\theta \end{bmatrix}$. The final gridness score contrasts the correlations at angles consistent with a hexagonal lattice ($60^\circ, 120^\circ$) with those at inconsistent angles ($30^\circ, 90^\circ, 150^\circ$).
\begin{equation}
\label{eq:gridness-60}
G_{60}(r_{\min},r_{\max})
=\frac{C_{60}+C_{120}}{2}-\frac{C_{30}+C_{90}+C_{150}}{3},
\end{equation}
where $C_{\theta}$ omits $(r_{\min},r_{\max})$, defined by \textbf{Eq. (\ref{eq:ring-corr})}.
A high positive score indicates strong hexagonal symmetry. For robustness, this score is maximized over a range of possible annulus radii $(r_{\min},r_{\max})$ for each unit. An analogous score, $G_{90}$, can be computed to test for four-fold (square) symmetry.
\begin{equation}
\label{eq:gridness-90}
G_{90}(r_{\min},r_{\max})=C_{90}-\tfrac{1}{2}\big(C_{45}+C_{135}\big).
\end{equation}

This rigorous, standardized methodology allows for a direct, quantitative comparison between the representations learned by our model and those observed in electrophysiological recordings.

\subsection{Spatial memory generation module}
\label{sec: Spatial Memory Generation Module}

The generation of a stable, allocentric spatial memory from a stream of egocentric visual inputs is a cornerstone of an agent's individual world model. This process addresses a fundamentally ill-posed inverse problem: inferring a structured, top-down world representation from ambiguous, high-dimensional localized sensory data. Our architecture is designed as a hierarchical processing pipeline consisting of three primary stages: (1) object-centric feature extraction from the raw first-person view; (2) cross-view encoding of visual features into a latent representation; and (3) decoding into a structured bird's-eye-view (BEV) spatial map~\cite{philion2020lift_supp,pan2020cross_supp}. This pipeline leverages the complementary strengths of convolutional and transformer-based networks, and is trained end-to-end under a composite predictive objective that imposes a set of physical priors on the generative model, guiding it toward geometrically and photometrically plausible solutions. We describe each processing stage in detail below.

\begin{table}[t]
\centering
\caption{\textbf{Details of ResNet-based target detection network.}}
\setlength{\tabcolsep}{13pt}
\label{tab:resnet_detector}
\begin{tabular}{lllll}
\toprule
\textbf{Layer} & \textbf{Input Size} & \textbf{Output Size} & \textbf{Parameters} & \textbf{Activation} \\
\midrule
\multicolumn{5}{c}{Spatial Feature Extractor} \\
\midrule
Input Image & $3 \times 64 \times 64$ & $3 \times 64 \times 64$ & - & - \\
ResNet18 Conv1 & $3 \times 64 \times 64$ & $64 \times 32 \times 32$ & $k=7, s=2, p=3$ & ReLU \\
BatchNorm + MaxPool & $64 \times 32 \times 32$ & $64 \times 16 \times 16$ & $k=3, s=2, p=1$ & - \\
ResNet18 Layer1 & $64 \times 16 \times 16$ & $64 \times 16 \times 16$ & 2 residual blocks & ReLU \\
ResNet18 Layer2 & $64 \times 16 \times 16$ & $128 \times 8 \times 8$ & 2 residual blocks & ReLU \\
ResNet18 Layer3 & $128 \times 8 \times 8$ & $256 \times 8 \times 8$ & 2 residual blocks & ReLU \\
Conv2d Compression & $256 \times 8 \times 8$ & $128 \times 8 \times 8$ & $k=1$ & ReLU \\
BatchNorm2d & $128 \times 8 \times 8$ & $128 \times 8 \times 8$ & - & - \\
Conv2d Refinement & $128 \times 8 \times 8$ & $64 \times 8 \times 8$ & $k=3, p=1$ & ReLU \\
BatchNorm2d & $64 \times 8 \times 8$ & $64 \times 8 \times 8$ & - & - \\
\midrule
\multicolumn{5}{c}{Position Predictor ($N_{\text{obj}} = 6$)} \\
\midrule
Shared Attention Conv & $64 \times 8 \times 8$ & $32 \times 8 \times 8$ & $k=1$ & ReLU \\
BatchNorm2d & $32 \times 8 \times 8$ & $32 \times 8 \times 8$ & - & - \\
Attention Logits & $32 \times 8 \times 8$ & $6 \times 8 \times 8$ & $k=1$ & - \\
Spatial Softmax & $6 \times 8 \times 8$ & $6 \times 8 \times 8$ & Per-object norm & - \\
Weighted Pooling & $(64 \times 8 \times 8) \times 6$ & $6 \times 64$ & Attention-weighted & - \\
Concat w/ Orientation & $6 \times 64$ & $6 \times 66$ & Angle: $[\cos\theta, \sin\theta]$ & - \\
Position MLP-1 & $6 \times 66$ & $6 \times 64$ & Linear + LayerNorm & ReLU \\
Dropout & $6 \times 64$ & $6 \times 64$ & $p=0.2$ & - \\
Position Output & $6 \times 64$ & $6 \times 2$ & Linear(64, 2) & - \\
\midrule
\multicolumn{5}{c}{Visibility Predictor} \\
\midrule
Global Avg Pool & $64 \times 8 \times 8$ & $64$ & Spatial reduction & - \\
Concat w/ Orientation & $64$ & $66$ & Angle: $[\cos\theta, \sin\theta]$ & - \\
Visibility MLP-1 & $66$ & $64$ & Linear + LayerNorm & ReLU \\
Dropout & $64$ & $64$ & $p=0.2$ & - \\
Visibility Output & $64$ & $6$ & Linear(64, 6) & Sigmoid \\
\midrule
\multicolumn{5}{c}{Output Composition} \\
\midrule
Masked Positions & $(6 \times 2) \times (6 \times 1)$ & $6 \times 2$ & Element-wise product & - \\
\bottomrule
\end{tabular}
\end{table}

\begin{figure}[t]
\centering
\includegraphics[width=1\linewidth]{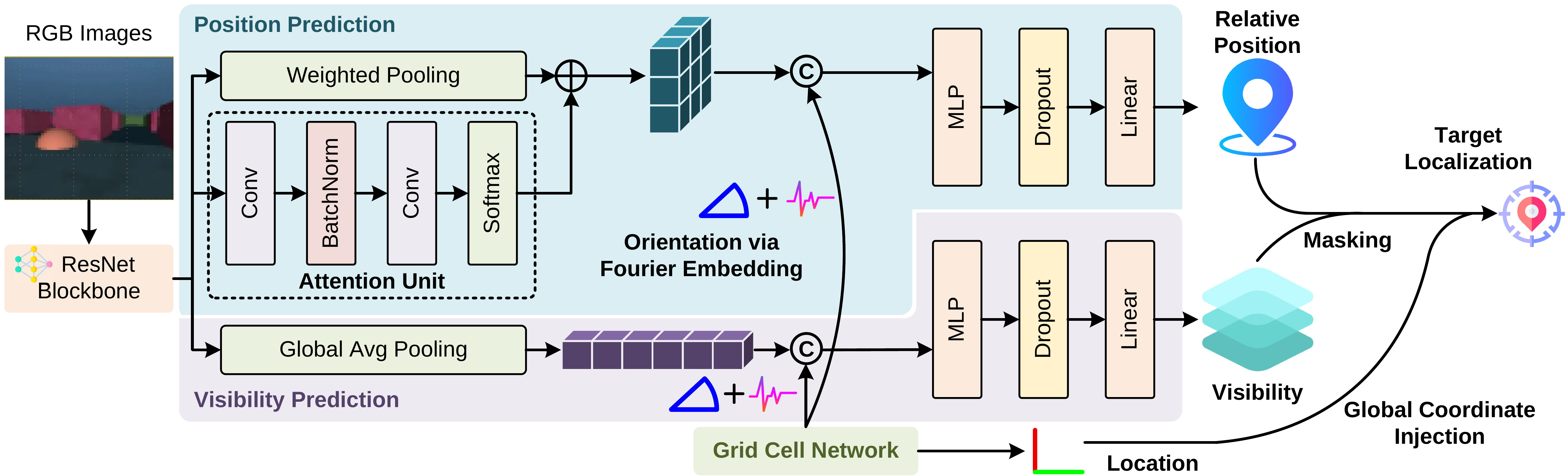}
\caption{\textbf{Architecture of target localization network.}}
\label{fig:figS-tln}
\end{figure}

\begin{figure}[h]
\centering
\includegraphics[width=1\linewidth]{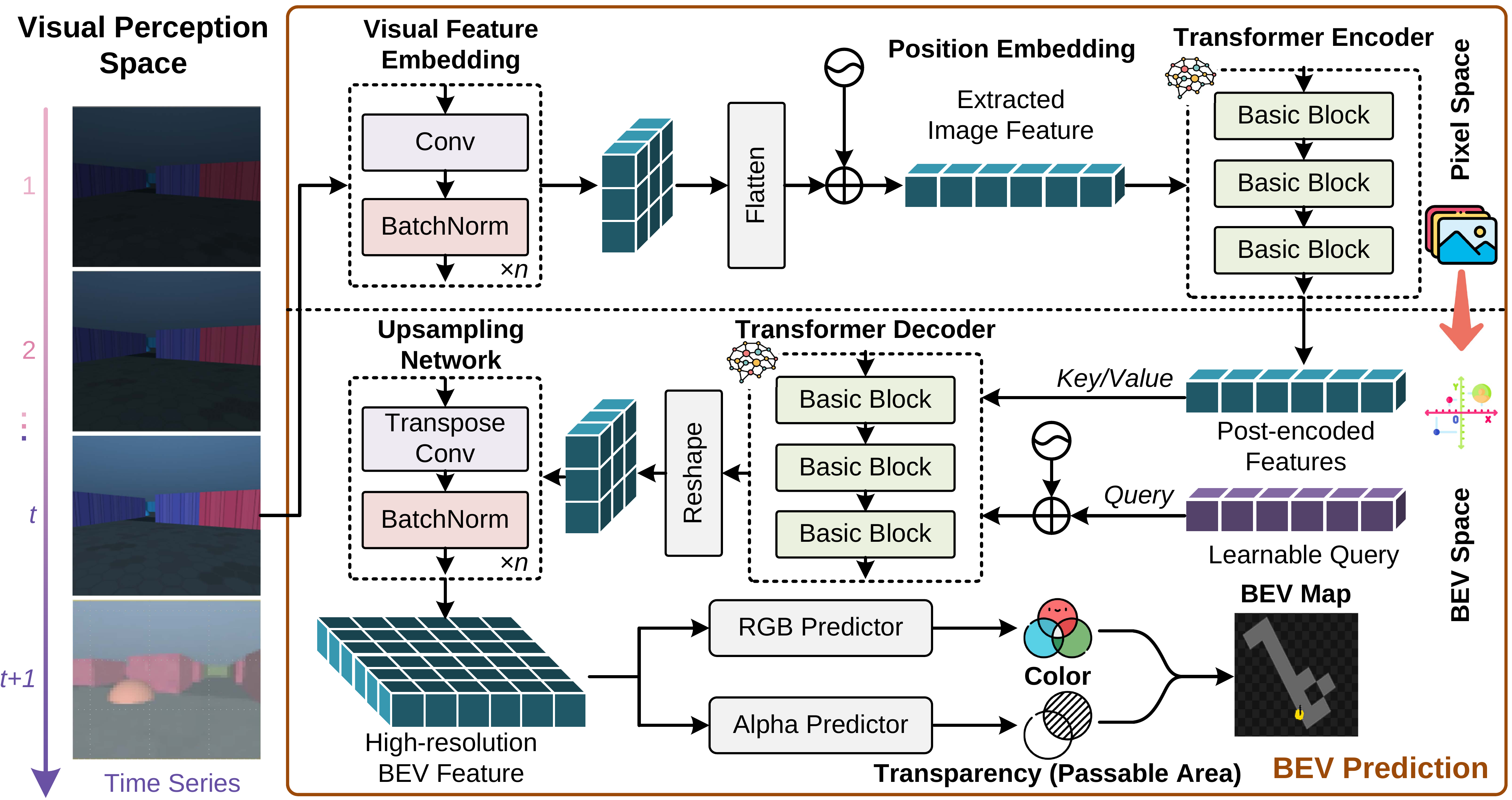}
\caption{\textbf{Architecture of BEV prediction network.}}
\label{fig:figS-bevpn}
\end{figure}

\subsubsection{Target localization network}
\label{sec:resnet_target_detector}

Accurate detection and localization of task-relevant objects within the egocentric visual field is a prerequisite for effective spatial memory construction. This module addresses a fundamental challenge in embodied perception: extracting sparse, object-centric representations from dense, pixel-level observations under significant geometric ambiguity. The target localization network must simultaneously solve three interrelated inference problems—identifying which objects are visible, estimating their relative positions in the agent's reference frame, and filtering out visual distractors—all from a single monocular image acquired from a dynamically moving viewpoint.

Our design is grounded in a biologically-inspired hierarchical processing principle observed in the ventral visual stream of the primate brain, where object recognition emerges through a cascade of increasingly abstract feature representations. Early visual cortex extracts local edge orientations and textures, intermediate areas construct viewpoint-invariant part-based representations, and higher areas in the inferotemporal cortex achieve categorical object identity. Analogously, our architecture employs a convolutional backbone to progressively abstract the raw visual input into a hierarchy of increasingly semantic features, culminating in explicit position and visibility estimates for each potential target object. This hierarchical abstraction is critical for achieving robustness to variations in lighting, viewing angle, and partial occlusions that plague end-to-end direct regression approaches.

The network architecture, specified in \textbf{Fig.~\ref{fig:figS-tln}} and \textbf{Table~\ref{tab:resnet_detector}}, is composed of three functionally distinct stages. The \textit{Spatial Feature Extractor} leverages a truncated ResNet-18 backbone, retaining only the initial convolutional stem and the first three residual blocks (up to \texttt{layer3}). This design choice reflects a deliberate trade-off: deeper layers of ResNet are optimized for semantic categorization tasks, whereas our task requires preserving fine-grained spatial structure for accurate localization. The truncated backbone outputs a $256$-channel feature map at $1/8$ spatial resolution, which is subsequently refined by a compact spatial processing module consisting of two $1 \times 1$ and $3 \times 3$ convolutional layers that compress the representation to $64$ channels while enhancing locality-sensitive features through grouped convolutions and batch normalization.

The \textit{Position Predictor} implements a spatial attention mechanism to extract object-specific representations from the shared feature map. Rather than employing independent processing streams for each object—which would scale poorly and ignore inter-object spatial relationships—we utilize a shared convolutional attention generator that produces a multi-channel attention map, with each channel corresponding to one potential target. This design enforces a structural prior that object locations are spatially disentangled and can be decoded through weighted spatial pooling. For each object $k \in \{1, \ldots, N_{\text{obj}}\}$, the attention map $\mathbf{A}_k \in \mathbbm{R}^{H \times W}$ is normalized via spatial softmax, yielding a probability distribution over image locations. The attended feature vector $\mathbf{f}_k = \sum_{i,j} \mathbf{A}_k(i,j) \cdot \mathbf{F}(i,j)$ is then concatenated with a $2$-dimensional encoding of the agent's head orientation (represented as $[\cos \theta, \sin \theta]$) and passed through a compact MLP to produce the object's relative position $(x_k, y_k)$ in the agent's local coordinate frame. The inclusion of orientation information is critical, as the same visual observation corresponds to different relative positions depending on the agent's heading direction.

The \textit{Visibility Predictor} operates on global context, aggregating the spatial feature map via spatial average pooling to produce a holistic scene descriptor. This global representation is fused with the orientation encoding and processed by a lightweight MLP with layer normalization and dropout (rate $0.2$) to predict a per-object visibility score. The sigmoid activation enforces a probabilistic interpretation, allowing the network to express graded uncertainty about object presence. During inference, the predicted positions are masked by the visibility scores, effectively gating the position estimates to suppress hallucinated detections for occluded or out-of-view objects. This architectural separation of position estimation (which operates on spatially-localized features) and visibility prediction (which requires holistic scene understanding) reflects the computational principle of divide-and-conquer, enabling each sub-module to specialize on complementary aspects of the detection task.

The network is trained using a multi-task loss function that jointly optimizes position accuracy and visibility classification. For position estimation, we employ a masked mean squared error (MSE) loss that is computed only for ground-truth visible objects, preventing the network from being penalized for arbitrary predictions on occluded targets. For visibility prediction, we use binary cross-entropy (BCE) loss. The composite objective is formulated as:
\begin{equation}
\mathcal{L}_{\text{detector}} = \frac{1}{N_{\text{obj}}} \sum_{k=1}^{N_{\text{obj}}} \left[ v_k^{\text{gt}} \cdot \|\mathbf{p}_k - \mathbf{p}_k^{\text{gt}}\|_2^2 + \lambda_{\text{vis}} \cdot \text{BCE}(\hat{v}_k, v_k^{\text{gt}}) \right],
\end{equation}
where $v_k^{\text{gt}} \in \{0,1\}$ is the ground-truth visibility label, $\mathbf{p}_k \in \mathbbm{R}^2$ is the predicted position, $\mathbf{p}_k^{\text{gt}}$ is the ground-truth position, $\hat{v}_k$ is the predicted visibility score, and $\lambda_{\text{vis}} = 1.0$ balances the two objectives. This formulation ensures that the position predictor receives meaningful gradients only for visible objects, while the visibility predictor learns a robust binary classifier across all object states. The use of a pre-trained ResNet-18 backbone (trained on ImageNet) provides strong initialization for low-level feature extraction, significantly accelerating convergence and improving sample efficiency in the target detection task. The entire module is trained with the Adam optimizer using a learning rate of $1 \times 10^{-4}$ and weight decay of $1 \times 10^{-5}$ for regularization.

\subsubsection{BEV prediction network}
\begin{table}[thbp]
\centering
\caption{\textbf{Encoder architecture of BEV prediction.} $B$ means the batch size.}
\setlength{\tabcolsep}{13pt}
\label{tab:bev_encoder}
\begin{tabular}{lllll}
\toprule
\textbf{Layer} & \textbf{Input Size} & \textbf{Output Size} & \textbf{Parameters} & \textbf{Activation} \\
\midrule
\multicolumn{5}{c}{Visual Feature Embedding} \\
\midrule
Conv2d-1 & $3 \times 64 \times 64$ & $32 \times 32 \times 32$ & $k=3, s=2, p=1$ & ReLU \\
BatchNorm2d-1 & $32 \times 32 \times 32$ & $32 \times 32 \times 32$ & - & - \\
Conv2d-2 & $32 \times 32 \times 32$ & $64 \times 16 \times 16$ & $k=3, s=2, p=1$ & ReLU \\
BatchNorm2d-2 & $64 \times 16 \times 16$ & $64 \times 16 \times 16$ & - & - \\
Conv2d-3 & $64 \times 16 \times 16$ & $128 \times 8 \times 8$ & $k=3, s=2, p=1$ & ReLU \\
BatchNorm2d-3 & $128 \times 8 \times 8$ & $128 \times 8 \times 8$ & - & - \\
Conv2d-4 & $128 \times 8 \times 8$ & $256 \times 4 \times 4$ & $k=3, s=2, p=1$ & ReLU \\
BatchNorm2d-4 & $256 \times 4 \times 4$ & $256 \times 4 \times 4$ & - & - \\
\midrule
\multicolumn{5}{c}{Transformer Processing} \\
\midrule
Reshape & $256 \times 4 \times 4$ & $4 \times (B \times 4) \times 256$ & - & - \\
Position Embedding & $4 \times (B \times 4) \times 256$ & $4 \times (B \times 4) \times 256$ & $d_\text{model}=256$ & - \\
Transformer Encoder & $4 \times (B \times 4) \times 256$ & $4 \times (B \times 4) \times 256$ & 3 layers, 8 heads & - \\
\bottomrule
\end{tabular}
\end{table}

\begin{table}[ht]
\centering
\caption{\textbf{Decoder architecture of BEV prediction.} $B$ means the batch size.}
\setlength{\tabcolsep}{10pt}
\label{tab:bev_decoder}
\begin{tabular}{lllll}
\toprule
\textbf{Layer} & \textbf{Input Size} & \textbf{Output Size} & \textbf{Parameters} & \textbf{Activation} \\
\midrule
\multicolumn{5}{c}{Transformer Decoder} \\
\midrule
Query Init & - & $64 \times (B \times 4) \times 256$ & $8 \times 8$ spatial & - \\
PositionalEncoding & $64 \times (B \times 4) \times 256$ & $64 \times (B \times 4) \times 256$ & $d_\text{model}=256$ & - \\
TransformerDecoder & Memory + Query & $(B \times 4) \times 256 \times 8 \times 8$ & 3 layers, 8 heads & - \\
Linear Projection & $(B \times 4) \times 256$ & $256 \times 256$ & - & - \\
\midrule
\multicolumn{5}{c}{Upsampling Network} \\
\midrule
ConvTranspose2d-1 & $256 \times 8 \times 8$ & $128 \times 16 \times 16$ & $k=4, s=2, p=1$ & ReLU \\
BatchNorm2d-1 & $128 \times 16 \times 16$ & $128 \times 16 \times 16$ & - & - \\
ConvTranspose2d-2 & $128 \times 16 \times 16$ & $64 \times 32 \times 32$ & $k=4, s=2, p=1$ & ReLU \\
BatchNorm2d-2 & $64 \times 32 \times 32$ & $64 \times 32 \times 32$ & - & - \\
ConvTranspose2d-3 & $64 \times 32 \times 32$ & $32 \times 64 \times 64$ & $k=4, s=2, p=1$ & ReLU \\
BatchNorm2d-3 & $32 \times 64 \times 64$ & $32 \times 64 \times 64$ & - & - \\
ConvTranspose2d-4 & $32 \times 64 \times 64$ & $16 \times 128 \times 128$ & $k=4, s=2, p=1$ & ReLU \\
BatchNorm2d-4 & $16 \times 128 \times 128$ & $16 \times 128 \times 128$ & - & - \\
ConvTranspose2d-5 & $16 \times 128 \times 128$ & $8 \times 256 \times 256$ & $k=4, s=2, p=1$ & ReLU \\
BatchNorm2d-5 & $8 \times 256 \times 256$ & $8 \times 256 \times 256$ & - & - \\
Conv2d-Final & $8 \times 256 \times 256$ & $4 \times 256 \times 256$ & $k=3, p=1$ & - \\
\midrule
\multicolumn{5}{c}{Output Processing} \\
\midrule
RGB Channels & $3 \times 256 \times 256$ & $3 \times 250 \times 250$ & Bilinear Resize & Sigmoid \\
Alpha Channel & $1 \times 256 \times 256$ & $1 \times 250 \times 250$ & Bilinear Resize & Sigmoid \\
\bottomrule
\end{tabular}
\end{table}

\paragraph{Encoder.}
The encoder's primary function is to transform a raw, first-person-view image into a compact and contextually-rich latent representation suitable for cross-view translation. This transformation must overcome severe geometric distortions inherent in ground-level perspectives, where depth cues are ambiguous and distant objects appear compressed. To address this challenge, our encoder employs a two-stage processing pipeline that combines hierarchical convolutional feature extraction with global contextual reasoning via transformer-based attention.

The first stage employs a standard convolutional neural network (CNN) backbone to extract a hierarchy of spatial features from an input image. This initial processing stage, detailed in \textbf{Fig.~\ref{fig:figS-bevpn}} and \textbf{Table \ref{tab:bev_encoder}}, consists of four progressive downsampling blocks, each halving the spatial resolution while doubling the channel capacity. This pyramid structure effectively captures low-level visual world patterns such as edges, textures, and local shapes in early layers, while deeper layers encode increasingly abstract semantic features such as object boundaries and surface orientations. Each convolutional layer is followed by batch normalization to stabilize training dynamics and ReLU activation to introduce non-linearity.

The critical architectural innovation lies in the subsequent processing stage, where we treat the vertical scanlines of the resulting $256 \times 4 \times 4$ feature map as a sequence. This reformulation enables the application of a Transformer encoder, whose self-attention mechanism can integrate information across the entire vertical axis of the image. This global context is essential for accurately inferring depth relationships and disambiguating occluded regions—tasks at which purely convolutional architectures often struggle due to their limited receptive fields. The transformer processes 4 vertical scanlines, each represented as a sequence of 4 spatial tokens with 256-dimensional feature vectors. Sinusoidal positional encodings are added to preserve spatial ordering, and 3 transformer layers with 8 attention heads refine the representation by modeling long-range dependencies. This encoder design enables the network to effectively bridge the perspective gap between egocentric observation and allocentric representation.

\paragraph{Decoder.}
The decoder is tasked with projecting the abstract latent code generated by the encoder into a structured, allocentric bird's-eye-view (BEV) map. This inverse mapping must reconstruct fine-grained spatial details from a heavily compressed representation, requiring both spatial upsampling and semantic refinement. Our decoder architecture, specified in  \textbf{Fig.~\ref{fig:figS-bevpn}} and \textbf{Table \ref{tab:bev_decoder}}, implements this transformation through a two-stage process combining transformer-based cross-attention with hierarchical convolutional upsampling.

The first stage employs a Transformer decoder to establish spatial correspondences between the encoded visual features and the target BEV grid. We initialize a set of learnable query vectors arranged in an $8 \times 8$ spatial grid, with each query corresponding to a specific location in the coarse-resolution BEV map. These queries are augmented with sinusoidal positional encodings to inject spatial awareness. The transformer decoder then refines these queries through 3 layers of cross-attention, where each query attends to the encoder's output memory to extract relevant visual evidence for its corresponding spatial location. This attention mechanism effectively implements a learned, content-dependent resampling operation that warps the egocentric visual features into the allocentric frame. The output is a $256$-channel feature map at $8 \times 8$ resolution, representing a semantically-rich but spatially-coarse BEV representation.

The second stage progressively upsamples this coarse representation to the final $250 \times 250$ resolution through a cascade of five transposed convolutional layers. Each upsampling block doubles the spatial resolution while halving the channel depth, gradually translating abstract semantic features into concrete pixel-level predictions. Batch normalization and ReLU activations are applied after each layer to maintain stable gradients and non-linear expressiveness. The final $4$-channel output is partitioned into RGB appearance channels and a single alpha channel representing occupancy probability. Both outputs are resized to $250 \times 250$ via bilinear interpolation and passed through sigmoid activations to enforce valid probability ranges. This hierarchical decoder design ensures that the network can reconstruct both the geometric layout (via the alpha channel) and the visual appearance (via RGB channels) of the environment from a unified latent representation.

\begin{table}[t]
\centering
\caption{\textbf{Details of transformer component in BEV prediction.}}
\label{tab:transformer_specs}
\setlength{\tabcolsep}{40pt}
\begin{tabular}{lll}
\toprule
\textbf{Component} & \textbf{Parameter} & \textbf{Value} \\
\midrule
\multirow{5}{*}{Transformer Encoder} & Model Dimension ($d_\text{model}$) & 256 \\
 & Number of Heads & 8 \\
 & Feed-forward Dimension & 2048 \\
 & Number of Layers & 3 \\
 & Dropout Rate & 0.1 \\
\midrule
\multirow{5}{*}{Transformer Decoder} & Model Dimension ($d_\text{model}$) & 256 \\
 & Number of Heads & 8 \\
 & Feed-forward Dimension & 2048 \\
 & Number of Layers & 3 \\
 & Dropout Rate & 0.1 \\
\midrule
\multirow{2}{*}{Positional Encoding} & Max Sequence Length & 5000 \\
 & Encoding Type & Sinusoidal \\
\bottomrule
\end{tabular}
\end{table}

\paragraph{Transformer component.}
The capacity and performance of the transformer-based components are critical to the model's ability to reason about long-range spatial dependencies. The specific hyperparameters, detailed in \textbf{Table \ref{tab:transformer_specs}}, were chosen to balance representational power with computational efficiency. A model dimension ($d_\text{model}$) of 256 provides a sufficiently high-dimensional space for embedding complex visual features. The use of 8 attention heads allows the model to simultaneously focus on different aspects of the visual input—for instance, attending to distant landmarks with one head while focusing on nearby wall textures with another. The 3-layer depth for both the encoder and decoder was empirically determined to be deep enough to learn complex cross-view transformations without incurring excessive computational cost or overfitting. These parameters collectively equip the module with the necessary capacity to learn the non-trivial mapping from a narrow, first-person perspective to a comprehensive, top-down allocentric map.

\begin{table}[t]
\centering
\caption{\textbf{Details of loss function in BEV prediction.}}
\label{tab:loss_weights}
\setlength{\tabcolsep}{25pt}
\begin{tabular}{lll}
\toprule
\textbf{Loss Component} & \textbf{Mathematical Form} & \textbf{Weight} \\
\midrule
Alpha Loss & $\mathcal{L}_{\text{occ}} = -\sum_{i,j} [y_{ij} \log(\hat{y}_{ij}) + (1-y_{ij}) \log(1-\hat{y}_{ij})]$ & $w_\alpha = 1.0$ \\
RGB Loss & $\mathcal{L}_\text{rgb} = \frac{1}{|M|} \sum_{(i,j) \in M} ||\mathbf{c}_{ij} - \hat{\mathbf{c}}_{ij}||_2^2$ & $w_\text{rgb} = 0.5$ \\
Smoothness Loss & $\mathcal{L}_\text{smooth} = \frac{1}{2}(\mathbbm{E}[|\nabla_x \alpha|] + \mathbbm{E}[|\nabla_y \alpha|])$ & $w_\text{smooth} = 0.1$ \\
\bottomrule
\end{tabular}
\end{table}

\paragraph{Loss function details.}
To guide the learning process toward physically plausible world models, the module is optimized by minimizing a composite loss function, $\mathcal{L}_\text{BEV}$, which holistically evaluates the quality of the BEV prediction. This objective function is a weighted sum of three distinct terms, each imposing a different physical prior on the model's predictions. The primary term, the occupancy loss ($\mathcal{L}_\text{occ}$), uses binary cross-entropy to enforce geometric consistency, compelling the network to make a clear distinction between navigable space and solid obstacles. The appearance loss ($\mathcal{L}_\text{rgb}$) employs a masked mean-squared error to ensure photorealistic accuracy, forcing the model to predict the correct surface textures but only within regions identified as navigable. Finally, a smoothness loss ($\mathcal{L}_\text{smooth}$) acts as a regularizer, penalizing sharp, unnatural gradients in the predicted occupancy map and incorporating the prior that physical environments are generally continuous. The mathematical formulation and weighting of each component are detailed in \textbf{Table \ref{tab:loss_weights}}.

\begin{table}[t]
\centering
\caption{\textbf{Communication encoder architecture.} This network implements the compression stage of the VIB, mapping a spatial map to a stochastic latent variable $z$.}
\label{tab:comm_encoder}
\setlength{\tabcolsep}{15pt}
\begin{tabular}{lllll}
\toprule
\textbf{Layer} & \textbf{Input Size} & \textbf{Output Size} & \textbf{Parameters} & \textbf{Activation} \\
\midrule
\multicolumn{5}{c}{Spatial Memory Input} \\
\midrule
Input Map & $1 \times 64 \times 64$ & $1 \times 64 \times 64$ & - & - \\
\midrule
\multicolumn{5}{c}{Convolutional Downsampling} \\
\midrule
Conv2d-1 & $1 \times 64 \times 64$ & $32 \times 32 \times 32$ & $k=4, s=2, p=1$ & ReLU \\
BatchNorm2d-1 & $32 \times 32 \times 32$ & $32 \times 32 \times 32$ & - & - \\
Conv2d-2 & $32 \times 32 \times 32$ & $64 \times 16 \times 16$ & $k=4, s=2, p=1$ & ReLU \\
BatchNorm2d-2 & $64 \times 16 \times 16$ & $64 \times 16 \times 16$ & - & - \\
Conv2d-3 & $64 \times 16 \times 16$ & $128 \times 8 \times 8$ & $k=4, s=2, p=1$ & ReLU \\
BatchNorm2d-3 & $128 \times 8 \times 8$ & $128 \times 8 \times 8$ & - & - \\
Conv2d-4 & $128 \times 8 \times 8$ & $256 \times 4 \times 4$ & $k=4, s=2, p=1$ & ReLU \\
BatchNorm2d-4 & $256 \times 4 \times 4$ & $256 \times 4 \times 4$ & - & - \\
\midrule
\multicolumn{5}{c}{VAE Latent Space Projection} \\
\midrule
Flatten & $256 \times 4 \times 4$ & 4096 & - & - \\
Mean Projection ($\mu$) & 4096 & $z_{dim}$ & Linear(4096, $z_{dim}$) & - \\
LogVar Projection ($\log\sigma^2$) & 4096 & $z_{dim}$ & Linear(4096, $z_{dim}$) & - \\
Reparameterization & $z_{dim}$ & $z_{dim}$ & $z = \mu + \epsilon \cdot \sigma$ & - \\
\bottomrule
\end{tabular}
\end{table}

\subsection{Emergent communication mechanism architecture}
\label{sec: Emergent Communication Mechanism}

To operationalize the principle of Social Predictive Coding, we design a communication architecture that directly implements the variational information bottleneck (VIB) framework~\cite{alemi2017deep_supp,tishby2000information_supp}. The central hypothesis is that an efficient communication mechanism need not be manually designed but can emerge when agents are optimized to transmit only the information that is maximally reductive of their partner's future uncertainty. Consequently, the network architecture is not merely a tool for data compression, but  a principled mechanism for learning a compact, structured, and task-relevant symbolic language from the ground up. This process is built upon a neural substrate capable of generating a rich, unified social representation, which serves as the input to the communication module. We first detail this substrate, followed by the communication architecture itself.

\subsubsection{Social representation substrate: Emergence of social place cells}
Before an agent can decide what information to transmit, it must first form a comprehensive internal representation of the whole multi-agent system~\cite{eccles2019biases_supp}. This state, denoted as $S_{i,t}$ in our method, serves as the foundation for the entire communication mechanism. To this end, we design a specialized neural architecture, the \texttt{Social Place Coding}, to learn this rich social representation, as shown in the main text (\textbf{Fig. \ref{fig:social_place_code}a}).

The network's backbone is a dual-stream path integration module built upon a single recurrent LSTM core. This LSTM concurrently processes egocentric motion inputs (linear and angular velocities) from both the self-agent and its partner. A key design choice is the use of an asymmetric input representation: the self-agent's velocity vector is concatenated with a learned \texttt{ego\_token}, while the partner's velocity is concatenated with a zero vector. This allows the shared LSTM to distinguish between self and other motion streams while processing them with the same set of weights, encouraging the development of a unified representational space.

The hidden states from both processing streams are then fused via element-wise addition to form a \texttt{joint\_representation}. This unified state is then passed through a bottleneck layer. This shared latent representation is compelled to functionally specialize under a multi-faceted predictive objective. The network is trained not only to predict its own future location (place and head-direction cell activations) and its partner's future location, but also, critically, the future Euclidean distance between them via a dedicated \texttt{relational\_head}  (the relative positioning). This compound predictive pressure ensures that the \texttt{joint\_representation} encodes not just individual trajectories but also the dynamic spatial relationship between the agents. It is this relational predictive task that catalyzes the emergence of specialized neural populations, including social place cells (SPCs) that are selectively tuned to the partner's location. This emergent social representation provides the rich, disentangled state $S_{i,t}$ that is subsequently fed into the VIB communication encoder, forming a crucial bridge between social cognition and emergent communication.

\subsubsection{Encoder architecture: Implementing the VIB compression term}

The encoder's role is to transform the sender's high-dimensional state, an occupancy map $S_{i,t}$ of size $64 \times 64$, into a compressed latent message $z$. This directly corresponds to learning the stochastic mapping $q_{\varphi}(z \mid S_{i,t})$ in our VIB formulation. Our choice of a deep convolutional structure is critical for this task. Unlike fully-connected networks that would discard spatial topology, convolutional layers impose a strong and relevant inductive bias—namely, locality and translation equivariance. This ensures that the learned features capture the geometric nature of the agent's environment.

The architecture, detailed in \textbf{Table~\ref{tab:comm_encoder}}, employs a symmetric downsampling hierarchy. Each block halves the spatial resolution while doubling the channel capacity, creating a pyramid of progressively more abstract and semantically rich feature maps. This hierarchical processing allows the network to capture not just fine-grained local details (e.g., narrow corridors) in its initial layers, but also the global layout and topological structure of the environment in its deeper layers. This multi-scale feature extraction is essential for generating a message that is both comprehensive and compact.

The information bottleneck itself is realized at the VAE's latent space. The flattened $4096$-dimensional feature vector is projected onto the parameters of a diagonal Gaussian distribution, $\mu$ and $\log\sigma^2$. The stochastic message $z$ is then sampled using the reparameterization trick ($z = \mu + \epsilon \sigma, \epsilon \sim \mathcal{N}(0,I)$), which permits gradient flow during backpropagation. This stochastic encoding is the key to rate-limiting the communication channel. The KL-divergence term in the loss function, $\mathcal{L}_{\mathrm{KL}}$, penalizes the encoded distribution $q_{\varphi}(z \mid S_{i,t})$ for deviating from the uninformative prior $p(z)$. This pressure constrains the mutual information $I(S_{i,t}; z)$, forcing the encoder to discard non-essential information and retain only the most salient features of the input map.

\subsubsection{Decoder architecture: Implementing the VIB predictive utility term}

The decoder's function is to quantify the predictive utility of the message $z$. It operationalizes the predictive model $p_{\vartheta}(O_{j,t+1} \mid z, S_{j,t})$ from our VIB framework, where its primary goal is to reconstruct the receiving agent's future state from the compressed message. The architecture, detailed in \textbf{Table~\ref{tab:comm_decoder}}, symmetrically mirrors the encoder. It first projects the latent code $z$ back to a high-dimensional feature space ($4096$ dimensions) and reshapes it into a spatial tensor ($256 \times 4 \times 4$).

A sequence of transposed convolutional layers then hierarchically upsamples this representation, progressively doubling the spatial resolution while halving the channel depth. This process effectively inverts the abstraction performed by the encoder, translating the compact, semantic message back into a concrete spatial map. The final layer applies a sigmoid activation function to produce pixel-wise occupancy probabilities, yielding a reconstructed map $\hat{y}$ that represents the agent's prediction of its partner's view. The quality of this reconstruction serves as the measure of the message's utility.

\begin{table}[t]
\centering
\caption{\textbf{Communication decoder architecture.} This network instantiates the predictive component of the VIB, reconstructing a spatial map from the latent message $z$.}
\label{tab:comm_decoder}
\setlength{\tabcolsep}{18pt}
\begin{tabular}{lllll}
\toprule
\textbf{Layer} & \textbf{Input Size} & \textbf{Output Size} & \textbf{Parameters} & \textbf{Activation} \\
\midrule
\multicolumn{5}{c}{Latent Space Processing} \\
\midrule
Latent Input & $z_\text{dim}$ & $z_\text{dim}$ & - & - \\
Projection & $z_\text{dim}$ & 4096 & Linear($z_\text{dim}$, 4096) & ReLU \\
Reshape & 4096 & $256 \times 4 \times 4$ & - & - \\
\midrule
\multicolumn{5}{c}{Transposed Convolutional Upsampling} \\
\midrule
ConvTranspose2d-1 & $256 \times 4 \times 4$ & $128 \times 8 \times 8$ & $k=4, s=2, p=1$ & ReLU \\
BatchNorm2d-1 & $128 \times 8 \times 8$ & $128 \times 8 \times 8$ & - & - \\
ConvTranspose2d-2 & $128 \times 8 \times 8$ & $64 \times 16 \times 16$ & $k=4, s=2, p=1$ & ReLU \\
BatchNorm2d-2 & $64 \times 16 \times 16$ & $64 \times 16 \times 16$ & - & - \\
ConvTranspose2d-3 & $64 \times 16 \times 16$ & $32 \times 32 \times 32$ & $k=4, s=2, p=1$ & ReLU \\
BatchNorm2d-3 & $32 \times 32 \times 32$ & $32 \times 32 \times 32$ & - & - \\
ConvTranspose2d-4 & $32 \times 32 \times 32$ & $1 \times 64 \times 64$ & $k=4, s=2, p=1$ & Sigmoid \\
\midrule
\multicolumn{5}{c}{Spatial Map Output} \\
\midrule
Reconstructed Map & $1 \times 64 \times 64$ & $1 \times 64 \times 64$ & Continuous $[0,1]$ & - \\
\bottomrule
\end{tabular}
\end{table}

\begin{table}[t]
\centering
\caption{\textbf{Training configuration and performance metrics of communication mechanism.}}
\setlength{\tabcolsep}{40pt}
\label{tab:comm_training}
\begin{tabular}{lll}
\toprule
\textbf{Parameter Category} & \textbf{Parameter} & \textbf{Value} \\
\midrule
\multirow{4}{*}{Architecture} & Input Map Size & $64 \times 64$ pixels \\
& Latent Dimensions ($z_\text{dim}$) & 128 (Optimal) \\
& Compression Ratio & $32{:}1$ \\
& Reconstruction Error & $< 4.1\%$ \\
\midrule
\multirow{4}{*}{Training} & Optimizer & Adam \\
& Batch Size & 32 \\
& Learning Rate & $1 \times 10^{-3}$ \\
& Training Epochs & 50 \\
\midrule
\multirow{2}{*}{VIB Objective} & Reconstruction Weight & 1.0 \\
& KL Divergence Weight ($\beta$) & 1.0 \\
\bottomrule
\end{tabular}
\end{table}

\subsubsection{VIB objective: Driving the emergence of efficient communication}

The encoder and decoder are trained jointly by minimizing the VIB objective function, which is functionally equivalent to the VAE's evidence lower bound (ELBO). This loss function elegantly captures the fundamental rate-distortion trade-off at the heart of our theory:
\begin{equation}
\mathcal{L}_{\mathrm{VIB}}=\underbrace{\mathcal{L}_{\mathrm{reconstruction}}}_{\text{Predictive Utility}} + \underbrace{\beta \cdot \mathcal{L}_{\mathrm{KL}}.}_{\text{Communication Cost}}
\end{equation}

\paragraph{Reconstruction loss: Maximizing predictive utility.}
The reconstruction loss, a pixel-wise binary cross-entropy, drives the decoder to produce accurate predictions, thereby rewarding messages that contain high predictive utility:
\begin{equation}
\mathcal{L}_{\mathrm{reconstruction}} = -\sum_{i,j}\!\left[y_{ij}\log(\hat y_{ij})+(1-y_{ij})\log(1-\hat y_{ij})\right].
\end{equation}
This term directly instantiates the distortion component of the VIB framework, ensuring that compressed messages $z$ retain sufficient information to enable accurate reconstruction of the receiver's spatial map.

\paragraph{KL divergence loss: Enforcing communication efficiency.}
Simultaneously, the KL-divergence loss regularizes the encoder, penalizing deviation from the simple Gaussian prior and thus minimizing the information capacity of the communication channel. This term operationalizes the rate constraint in the VIB formulation. Specifically, the KL term measures the information-theoretic divergence between the learned variational posterior $q_{\varphi}(z \mid S_{i,t})$ produced by the encoder and the standard Gaussian prior $p(z) = \mathcal{N}(\mathbf{0}, \mathbf{I})$. By definition, the Kullback-Leibler divergence is:
\begin{equation}
\label{eq:kl_definition}
\mathcal{L}_{\mathrm{KL}} = D_\mathrm{KL}\left(q_{\varphi}(z|S_{i,t}) \| p(z)\right) = \mathbb{E}_{q_{\varphi}(z|S_{i,t})}\left[\log q_{\varphi}(z|S_{i,t}) - \log p(z)\right].
\end{equation}
This expectation quantifies the average ``surprise'' or information cost of encoding the agent's state $S_{i,t}$ relative to the uninformative prior. Minimizing this divergence ensures that the latent code $z$ remains statistically indistinguishable from the prior unless the information is essential for prediction.

\textit{Closed-form solution under Gaussian assumptions.} Our encoder architecture parametrizes the variational posterior as a diagonal Gaussian distribution:
\begin{equation}
q_{\varphi}(z|S_{i,t}) = \mathcal{N}\left(\boldsymbol{\mu}(S_{i,t}), \, \text{diag}(\boldsymbol{\sigma}^2(S_{i,t}))\right),
\end{equation}
where $\boldsymbol{\mu} = [\mu_1, \ldots, \mu_{z_\text{dim}}]^\top$ and $\boldsymbol{\sigma}^2 = [\sigma_1^2, \ldots, \sigma_{z_\text{dim}}^2]^\top$ are outputs of the encoder's mean and log-variance projection layers, respectively. The diagonal covariance structure reflects the architectural assumption that latent dimensions are conditionally independent given the input state. Under this Gaussian parametrization with a standard normal prior $p(z) = \mathcal{N}(\mathbf{0}, \mathbf{I})$, the KL divergence factorizes across dimensions:
\begin{equation}
D_\mathrm{KL}\left(q_{\varphi}(z|S_{i,t}) \| p(z)\right) = \sum_{k=1}^{z_\text{dim}} D_\mathrm{KL}\left(\mathcal{N}(\mu_k, \sigma_k^2) \| \mathcal{N}(0, 1)\right).
\end{equation}

For any two univariate Gaussian distributions $\mathcal{N}(\mu_k, \sigma_k^2)$ and $\mathcal{N}(0, 1)$, the KL divergence has the well-known closed form:
\begin{equation}
\label{eq:kl_gaussian}
D_\mathrm{KL}\left(\mathcal{N}(\mu_k, \sigma_k^2) \| \mathcal{N}(0, 1)\right) = \frac{1}{2}\left(\sigma_k^2 + \mu_k^2 - 1 - \log\sigma_k^2\right).
\end{equation}
This expression is derived by evaluating the expectation in \textbf{Eq.~\eqref{eq:kl_definition}} under the Gaussian density functions and simplifying the resulting integral.

Summing over all latent dimensions, we obtain the final training objective:
\begin{equation}
\label{eq:kl_final}
\mathcal{L}_{\mathrm{KL}} = \frac{1}{2}\sum_{k=1}^{z_\text{dim}}\left(\mu_k^2 + \sigma_k^2 - 1 - \log\sigma_k^2\right),
\end{equation}
where ($\mu_k^2$) penalizes the encoder for shifting the posterior distribution's mean away from zero. This term encourages the latent code to be centered, preventing the encoder from arbitrarily offsetting the representation space. The variance regularization ($\sigma_k^2 - 1$) penalizes deviation of the posterior variance from unity. When $\sigma_k^2 > 1$, the distribution is overly diffuse, indicating the encoder is uncertain; when $\sigma_k^2 < 1$, the distribution is overly concentrated, indicating overconfidence. This term encourages calibrated uncertainty. Besides, variance collapse prevention ($-\log\sigma_k^2$) acts as a negative log-determinant term that becomes large (highly penalizing) as $\sigma_k^2 \to 0$. This prevents the posterior from collapsing to a delta function, which would eliminate stochasticity and reduce the model to deterministic encoding. The stochastic bottleneck is essential for learning representations robust to input variations.


\paragraph{The rate-distortion trade-off.}

The hyperparameter $\beta$ serves as the Lagrange multiplier from the original information bottleneck formulation, allowing explicit control over this trade-off. A larger $\beta$ intensifies the pressure to compress, forcing the system to discover a more abstract and efficient communication mechanism. It is this pressure that catalyzes the emergence of a specialized symbolic system. When the channel capacity is limited, the agents learn that the most valuable, uncertainty-reducing information often pertains to the locations and trajectories of other agents. The VIB objective thus guides the latent space to develop a disentangled structure where specific dimensions become selectively tuned to this critical social-spatial information. This process explains the spontaneous emergence of social place cell (SPC)-like representations within the communication code—not as a pre-programmed feature, but as the optimal solution to the problem of collaborative prediction under bandwidth constraints. As shown in our training configuration (\textbf{Table~\ref{tab:comm_training}}), this architecture achieves a 32:1 compression ratio with under 5\% reconstruction error, demonstrating the efficacy of this emergent communication.

\subsection{HRL-ICM framework}
\label{sec: HRL-ICM Framework}

To operationalize the strategic exploration policy described in the main text, we implement a hierarchical reinforcement learning framework augmented with an intrinsic curiosity module (HRL-ICM). This architecture decomposes the navigation task into strategic goal selection (handled by a learned meta-controller) and tactical path execution (delegated to a deterministic A* planner). The framework is trained using multi-agent proximal policy optimization (MAPPO), a robust variant of the PPO algorithm adapted for cooperative multi-agent~\cite{yu2022multi_supp}. By separating strategic and tactical reasoning, the system can learn long-horizon exploration strategies without the burden of low-level motor control, enabling efficient credit assignment and scalable coordination across multiple agents.

Critically, the framework's ability to coordinate exploration across agents depends on accurate estimation of inter-agent spatial relationships. This is achieved through the social place cell (SPC) module described in the main text, which provides the essential relational geometry required for the ICM to compute coordination rewards and for the communication gating policy to make informed transmission decisions. We first describe the integration of this social spatial representation module before detailing the meta-controller and ICM components.

\subsubsection{Social place cell module for partner state estimation}\label{sec: Social Place Cell}

The social place cell module serves as the perceptual foundation for multi-agent coordination by continuously estimating partner locations and computing inter-agent distances. This module extends the grid-cell-based path integration architecture described in {\textbf{Method \ref{sec: Biologically-Inspired Spatial Representation Networks}}} to the social domain through a dual-stream processing design.

The architecture consists of two parallel LSTM encoders that concurrently process motion information from both the observing agent (self) and its partner. The self-stream receives the agent's proprioceptive velocity commands $\mathbf{v}_{\text{self},t} = (v_x, v_y, \omega)$ as direct sensory input. The partner-stream processes estimated partner velocities $\hat{\mathbf{v}}_{\text{partner},t}$, which are inferred from visual observations of the partner's motion across multiple frames. Specifically, when a partner is visible within the agent's field of view, its displacement and orientation change over a short temporal window (typically 3-5 frames) are used to estimate its instantaneous translational and angular velocities. This visual motion estimation provides a noisy but informative signal about the partner's navigation state.

Both streams are initialized at episode start using the respective agents' known starting poses, encoded through ensembles of virtual place cells and head-direction cells identical to those used in the individual path integration module (detailed in \textbf{Method \ref{sec: Biologically-Inspired Spatial Representation Networks}}). The two LSTM hidden states, $\mathbf{h}_{\text{self},t}$ and $\mathbf{h}_{\text{partner},t}$, are then fused via element-wise summation to form a unified joint representation $\mathbf{h}_{\text{joint},t} = \mathbf{h}_{\text{self},t} + \mathbf{h}_{\text{partner},t}$. This shared representation is passed through a bottleneck layer (256 units, 50\% dropout) that enforces a compressed, information-efficient encoding of the two-agent system state.

The network is trained under a multi-task predictive objective with three supervision signals, each computed at the final timestep $T$ of a trajectory segment:
\begin{equation}
\mathcal{L}_{\text{SPC}} = \mathcal{L}_{\text{self}} + \mathcal{L}_{\text{partner}} + w_{\text{dist}} \mathcal{L}_{\text{distance}},
\end{equation}
where $\mathcal{L}_{\text{self}}$ and $\mathcal{L}_{\text{partner}}$ are KL divergences between predicted and target place/head-direction cell activations for the self and partner agents, respectively (identical in form to the individual path integration loss), and $\mathcal{L}_{\text{distance}}$ is a mean-squared error on the Euclidean distance between agents, $\|\mathbf{r}_{\text{self},T} - \mathbf{r}_{\text{partner},T}\|_2$. The distance prediction is implemented via a dedicated regression head (linear layer) that projects the bottleneck representation to a scalar distance estimate. We set $w_{\text{dist}} = 1.0$ to balance the three objectives.

This compound predictive objective compels the bottleneck representation to develop functionally specialized subpopulations. As demonstrated in the main text (\textbf{Fig.~\ref{fig:social_place_code}}), analysis of the learned representations reveals distinct neuron types: pure place cells tuned exclusively to self-position, social place cells (SPCs) selective for partner position, and mixed-selectivity units encoding conjunctions of self- and partner-locations. Critically, a subset of units forms a population code for inter-agent distance, exhibiting graded tuning curves that tile the distance space from close proximity to far separation. This distance-tuned population is causally necessary for accurate distance estimation, as confirmed by targeted \textit{in-silico} lesion experiments (\textbf{Fig.~\ref{fig:social_place_code}e}).

For integration into the HRL-ICM framework, the SPC module operates continuously during exploration. At each timestep, the module outputs: (1) an updated estimate of the partner's position $\hat{\mathbf{r}}_{\text{partner},t}$ (decoded from the partner place cell activations), and (2) a predicted inter-agent distance $\hat{d}_t$ (from the relational regression head). These outputs are consumed by downstream components: the distance estimate $\hat{d}_t$ directly informs the ICM's coordination reward (detailed below), while the partner position estimate enables the communication gating policy to assess whether agents are within communication range. The SPC module thus closes the loop between perception and coordination, transforming visual observations of partners into structured spatial representations that guide strategic decision-making. The architecture specifications are provided in \textbf{Table~\ref{tab:spc_arch}}.

\begin{table}[t]
\centering
\caption{\textbf{Details of social place cell module.}}
\label{tab:spc_arch}
\setlength{\tabcolsep}{25pt}
\begin{tabular}{llll}
\toprule
\textbf{Component} & \textbf{Input Dim} & \textbf{Output Dim} & \textbf{Description} \\
\midrule
\multicolumn{4}{c}{Input Processing} \\
\midrule
Self Velocity & 3 & 3 & $(v_x, v_y, \omega)$ \\
Partner Velocity (est.) & 3 & 3 & Visual motion estimation \\
Initial Pose Encoding & 2+1 & 288 & Place (256) + HD (32) cells \\
\midrule
\multicolumn{4}{c}{Dual-Stream LSTM} \\
\midrule
Self LSTM & 3 & 128 & Hidden state $\mathbf{h}_{\text{self}}$ \\
Partner LSTM & 3 & 128 & Hidden state $\mathbf{h}_{\text{partner}}$ \\
Fusion (sum) & $128 \times 2$ & 128 & $\mathbf{h}_{\text{joint}}$ \\
\midrule
\multicolumn{4}{c}{Bottleneck \& Prediction Heads} \\
\midrule
Bottleneck Layer & 128 & 256 & Dropout 0.5 \\
Self Place Predictor & 256 & 256 & KL loss \\
Self HD Predictor & 256 & 32 & KL loss \\
Partner Place Predictor & 256 & 256 & KL loss \\
Partner HD Predictor & 256 & 32 & KL loss \\
Distance Regression & 256 & 1 & MSE loss \\
\bottomrule
\end{tabular}
\end{table}

\subsubsection{Meta-controller architecture}

The meta-controller is implemented as a shared actor-critic network that maps high-level spatial abstractions to goal selections. To construct a tractable state representation from the high-dimensional occupancy map, each agent first partitions its local $H \times W$ grid into a coarse $g \times g$ regional summary (typically $g=4$, yielding 16 macro-regions). For each region $k \in \{1, \ldots, g^2\}$, the agent computes three summary statistics: the \textit{exploration ratio} (proportion of cells with known occupancy), the \textit{walkability ratio} (proportion of known cells that are navigable), and a binary \textit{agent presence indicator}. These features are concatenated into a $3g^2$-dimensional state vector that serves as input to the policy network.

The network architecture follows a standard actor-critic design with shared feature extraction. Two fully-connected layers (each with 256 hidden units and ReLU activation) process the regional feature vector to produce a shared embedding. This embedding then branches into two specialized heads. The \textit{actor head} projects the embedding through an additional hidden layer (256 units, ReLU) before outputting a $g^2$-dimensional logit vector, which is normalized via softmax to yield a categorical distribution over candidate goal regions. The \textit{critic head}, operating in parallel, projects the shared embedding through its own hidden layer to produce a scalar state-value estimate. All network weights are initialized using orthogonal initialization with a gain of 0.01, a choice that promotes stable early-stage training by preventing gradient explosion. The architecture is formally specified in \textbf{Table~\ref{tab:meta_controller_arch}}.

\begin{table}[t]
\centering
\caption{\textbf{Details of meta-controller actor-critic.}}
\label{tab:meta_controller_arch}
\setlength{\tabcolsep}{37pt}
\begin{tabular}{llll}
\toprule
\textbf{Component} & \textbf{Input Dim} & \textbf{Output Dim} & \textbf{Activation} \\
\midrule
\multicolumn{4}{c}{Shared Feature Extraction} \\
\midrule
FC-1 & $48$ & $256$ & ReLU \\
FC-2 & $256$ & $256$ & ReLU \\
\midrule
\multicolumn{4}{c}{Actor Branch} \\
\midrule
Actor FC & $256$ & $256$ & ReLU \\
Action Logits & $256$ & $16$ & Softmax \\
\midrule
\multicolumn{4}{c}{Critic Branch} \\
\midrule
Critic FC & $256$ & $256$ & ReLU \\
Value Output & $256$ & $1$ & Linear \\
\bottomrule
\end{tabular}
\end{table}

\subsubsection{Intrinsic curiosity module}

The intrinsic curiosity module (ICM) translates the abstract principle of uncertainty-driven exploration into a concrete, dense reward signal that guides the meta-controller's learning~\cite{sun2025curiosity_supp,jaderberg2016reinforcement_supp}. Rather than relying on a learned forward dynamics model, which can be sample-inefficient in high-dimensional discrete spaces, our ICM directly estimates epistemic uncertainty by analyzing the geometry of the known-unknown boundary in the agent's local map, augmented with social spatial information from the SPC module. The module generates a composite intrinsic reward $r_t^{\text{int}}$ as a weighted sum of three interpretable components:
\begin{equation}
r_t^{\text{int}} = w_{\text{curiosity}} \, r_{\text{curiosity}} + w_{\text{coord}} \, r_{\text{coord}} + w_{\text{explore}} \, r_{\text{explore}},
\end{equation}
where $(w_{\text{curiosity}}, w_{\text{coord}}, w_{\text{explore}}) = (1.0, 0.5, 0.3)$ are fixed hyperparameters that balance the contribution of each term.
The \textit{curiosity reward} $r_{\text{curiosity}}$ encourages agents to select goal regions that border unknown territory. It computes a spatial ``curiosity map'' by identifying frontier pixels—known navigable cells adjacent to unexplored areas—and weighting them by their proximity to the agent and the local density of unknown neighbors. The reward is then the normalized curiosity value of the selected goal region, effectively incentivizing movement toward the boundary of the agent's knowledge.

The \textit{coordination reward} $r_{\text{coord}}$ promotes spatial division of labor by leveraging the inter-agent distance estimates provided by the SPC module. This component discourages redundant exploration by rewarding goal selections that maintain appropriate separation from teammates. For each agent $i$, the SPC module continuously estimates the distances $\{\hat{d}_{ij,t}\}_{j \neq i}$ to all partner agents. When agent $i$ selects a goal region centered at $\mathbf{g}_i$, the coordination reward is computed as:
\begin{equation}
r_{\text{coord}}^i = \sum_{j \neq i} \min\left(\frac{\hat{d}_{ij,t}}{d_{\text{norm}}}, 1.0\right) \cdot \mathbbm{1}[\hat{d}_{ij,t} \geq d_{\text{min}}],
\end{equation}
where $d_{\text{norm}} = 10.0$ grid cells is a normalization constant, $d_{\text{min}} = 3.0$ grid cells is a minimum desired separation threshold, and $\mathbbm{1}[\cdot]$ is an indicator function that provides a bonus only when agents maintain at least the minimum distance. This design creates a repulsive force between agents proportional to their estimated separation, directly operationalizing the principle that distributed exploration maximizes information gain. Critically, this component relies entirely on the SPC module's distance predictions $\hat{d}_{ij,t}$—without accurate distance estimation, agents cannot effectively coordinate their exploration strategies. This establishes a direct computational dependency: the SPC module's learned distance-tuned neurons (described in \textbf{Fig.~\ref{fig:social_place_code}c}) provide the essential geometric information that enables the ICM to generate coordination signals.

The \textit{exploration reward} $r_{\text{explore}}$ provides a direct bonus for discovering cells that were previously unknown to the entire team, quantified by counting newly revealed map cells in a local neighborhood around the agent and applying an exponential distance decay with decay constant $\alpha = 0.1$:
\begin{equation}
r_{\text{explore}} = \sum_{(x,y) \in \mathcal{N}(\mathbf{r}_i, r_{\text{local}})} \mathbbm{1}[M^{\text{shared}}_{x,y,t-1} = 0 \land M^{i}_{x,y,t} > 0] \cdot \exp(-\alpha \|\mathbf{r}_i - (x,y)\|_2),
\end{equation}
where $\mathcal{N}(\mathbf{r}_i, r_{\text{local}})$ is a local neighborhood of radius $r_{\text{local}} = 2$ grid cells around the agent's position $\mathbf{r}_i$, $M^{\text{shared}}$ denotes the team's collective map, and $M^{i}$ is agent $i$'s local map. This three-component design ensures that agents are simultaneously attracted to informative frontiers, repelled from teammates to avoid overlap (mediated by SPC distance estimates), and directly rewarded for expanding the collective map.

\subsubsection{Communication mechanism}

Inter-agent communication is realized through a bandwidth-limited message-passing system that operates in parallel with the high-level decision loop, with communication feasibility determined by the SPC module's distance estimates. Each agent is endowed with a finite budget of \textit{communication tokens} (typically 10 per episode), which depletes with each message transmission and regenerates slowly over time (refill rate $\rho_{\text{refill}} = 1/60$ per step). The decision to communicate involves two stages: first, the SPC module determines which partners are within communication range (defined as $\hat{d}_{ij,t} \leq d_{\text{comm}})$, establishing the set of feasible communication targets; second, a learned gating policy decides whether to actually transmit a message to these reachable partners.

The gating policy is modeled as a logistic classifier that takes as input a 9-dimensional feature vector encoding: (1) the agent's current exploration progress (mean exploration ratio across regions), (2) remaining token budget (normalized by initial budget), (3) local map confidence (average confidence in visible region), (4) spatial connectivity (number of adjacent navigable cells), (5-8) a one-hot encoding of location type (junction, corridor, dead-end, open-area), and (9) a bias term. Critically, this feature vector does \textit{not} explicitly include partner distance—the distance constraint is enforced at the architectural level by the SPC module's range-gating, ensuring that communication is only physically possible when $\hat{d}_{ij,t} \leq d_{\text{comm}}$. The policy outputs a binary communication decision via a sigmoid activation: $p_{\text{comm}} = \sigma(\mathbf{w}^\top \mathbf{f}_t + b)$, where $\mathbf{w} \in \mathbbm{R}^9$ are learned weights, $\mathbf{f}_t$ is the feature vector, and $b$ is a learned bias. When an agent elects to communicate and has available tokens, it broadcasts its local occupancy map and current position to all teammates within the SPC-determined communication range.

This design establishes a clear functional hierarchy: the SPC module's distance-tuned neurons provide the low-level geometric constraint that defines when communication is physically feasible (mimicking limited-range radio communication), while the learned gating policy operates within these constraints to decide when communication is strategically valuable. This separation ensures that the system respects realistic communication limitations while still learning an intelligent transmission strategy. The causal necessity of accurate distance estimation is evident: if the SPC module's distance predictions $\hat{d}_{ij,t}$ are inaccurate (as occurs after SPC lesioning, \textbf{Fig.~\ref{fig:social_place_code}e}), agents will incorrectly estimate which partners are reachable, leading to failed communication attempts or missed opportunities for coordination.

Upon receiving a message from a partner confirmed to be within range (via bidirectional SPC distance checks), the recipient performs an intelligent map fusion operation. Rather than naively overwriting its local map, the agent maintains auxiliary \textit{confidence} and \textit{timestamp} matrices that track the reliability and recency of each cell's occupancy estimate. When integrating received information, conflicting observations are resolved via a multi-criteria decision rule: more recent information is preferred over stale data, higher-confidence estimates override lower-confidence ones, and in cases of equal confidence, wall observations are given precedence over free-space observations as a safety heuristic. This confidence-weighted fusion mechanism ensures that the shared spatial memory remains coherent despite asynchronous and potentially noisy observations, while the token-based gating prevents communication saturation and encourages agents to transmit selectively at moments of high informational value.

\subsubsection{Training configuration and optimization}

The entire HRL-ICM system is trained end-to-end using the MAPPO algorithm with the following configuration. The meta-controller makes a high-level goal selection every $K=20$ environment steps, during which the low-level A* planner executes primitive movement actions (move-forward, turn-left, turn-right, stay) to navigate toward the chosen region. Extrinsic rewards include a large bonus $(+500)$ for task success (locating the hidden goal), a small step penalty $(-0.01)$ to encourage efficiency, and a collision penalty $(-3)$. The total reward at each decision point is the sum of extrinsic and intrinsic components. Advantages for policy gradient updates are computed using Generalized Advantage Estimation (GAE) with discount factor $\gamma = 0.99$ and trace parameter $\lambda = 0.95$. The policy is optimized using the Adam optimizer with a learning rate of $3 \times 10^{-4}$, gradient clipping at norm 1.0, and an entropy regularization coefficient of $\eta = 0.01$ to maintain exploration. Training is conducted over multiple episodes on procedurally generated mazes of varying sizes ($15 \times 15$ to $39 \times 39$), ensuring that the learned policy generalizes across diverse spatial layouts rather than overfitting to a fixed map. The key hyperparameters are summarized in \textbf{Table~\ref{tab:hrl_training_config}}.

\begin{table}[t]
\centering
\caption{\textbf{Training configuration of HRL-ICM.}}
\label{tab:hrl_training_config}
\setlength{\tabcolsep}{37pt}
\begin{tabular}{lll}
\toprule
\textbf{Category} & \textbf{Parameter} & \textbf{Value} \\
\midrule
\multirow{4}{*}{Meta-Controller} & Regional Grid Size ($g \times g$) & $4 \times 4$ \\
 & Hidden Dimension & 256 \\
 & Input Feature Dimension & 48 \\
 & Output Actions (Regions) & 16 \\
\midrule
\multirow{5}{*}{SPC Module} & LSTM Hidden Size & 128 \\
 & Bottleneck Dimension & 256 \\
 & Distance Loss Weight ($w_{\text{dist}}$) & 1.0 \\
 & Distance Normalization ($d_{\text{norm}}$) & 10.0 cells \\
 & Min. Coordination Distance ($d_{\text{min}}$) & 3.0 cells \\
\midrule
\multirow{5}{*}{ICM Weights} & Curiosity Weight ($w_{\text{curiosity}}$) & 1.0 \\
 & Coordination Weight ($w_{\text{coord}}$) & 0.5 \\
 & Exploration Weight ($w_{\text{explore}}$) & 0.3 \\
 & Exploration Radius ($r_{\text{local}}$) & 2 cells \\
 & Distance Decay ($\alpha$) & 0.1 \\
\midrule
\multirow{4}{*}{Communication} & Token Budget & 10 \\
 & Refill Rate ($\rho_{\text{refill}}$) & $1/60$ per step \\
 & Gating Feature Dimension & 9 \\
 & Communication Range ($d_{\text{comm}}$) & 5.0 cells \\
\midrule
\multirow{7}{*}{MAPPO Training} & Algorithm & MAPPO \\
 & High-Level Decision Interval ($K$) & 20 steps \\
 & Learning Rate & $3 \times 10^{-4}$ \\
 & Discount Factor ($\gamma$) & 0.99 \\
 & GAE Parameter ($\lambda$) & 0.95 \\
 & Entropy Coefficient ($\eta$) & 0.01 \\
 & Gradient Clip Norm & 1.0 \\
\midrule
\multirow{3}{*}{Rewards} & Task Success & +500 \\
 & Step Penalty & $-0.01$ \\
 & Collision Penalty & $-3$ \\
\bottomrule
\end{tabular}
\end{table}

The hierarchical training procedure ensures that all components are jointly optimized. The SPC module is pre-trained on trajectory prediction tasks to establish stable distance estimation before being integrated into the full HRL-ICM loop. During full system training, the SPC parameters are kept frozen for the first $10^4$ environment steps to allow the meta-controller and ICM to stabilize, after which all components are fine-tuned end-to-end. This staged training strategy prevents the SPC module from adapting to spurious reward signals and ensures that its distance predictions remain grounded in the geometric structure of agent trajectories. The integration of these components creates a complete computational loop: the SPC module provides geometric awareness of partner states, the ICM translates this into strategic exploration incentives, the meta-controller selects goals based on these incentives, and the communication mechanism leverages distance estimates to coordinate information sharing—all unified under the MAPPO training objective.

\subsection{Theoretical framework: Path integration and grid cell emergence}

This section establishes a rigorous mathematical framework demonstrating how recurrent networks trained on predictive coding objectives naturally discover path integration dynamics and hexagonally-periodic spatial representations. We develop a principled theoretical argument showing that these solutions emerge from fundamental geometric constraints imposed by the task structure through three stages: (1) establishing preliminaries; (2) proving equivariance-driven path integration; and (3) deriving hexagonal symmetry from isotropy requirements.

\subsubsection{Preliminaries: Mathematical foundations}
\label{sec: preliminaries}

We formalize the mathematical structure underlying two-dimensional spatial navigation, establishing notation and dynamical equations that any successful path integration system typically incorporates.

\paragraph{State space and kinematics.}

\begin{definition}[Pose and state space]
\label{def:pose}
An agent navigating in two-dimensional space is characterized by its \emph{pose} $s_t = (\mathbf{r}_t, \theta_t) \in \mathbb{R}^2 \times \mathbb{S}^1$, where $\mathbf{r}_t = (r_{x,t}, r_{y,t})^\top \in \mathbb{R}^2$ denotes the position vector in Cartesian coordinates (allocentric reference frame) and $\theta_t \in [0, 2\pi)$ denotes the heading angle measured counterclockwise from the positive $x$-axis. Throughout this work, boldfaced lowercase letters denote column vectors while regular lowercase letters denote scalars.
\end{definition}

\begin{definition}[Rotation Matrix]
\label{def:rotation-matrix}
The rotation matrix $R(\alpha) \in SO(2)$ rotates vectors in $\mathbb{R}^2$ counterclockwise by angle $\alpha$~\cite{lynch2017modern_supp}:
\begin{equation}
\label{eq:rotation-matrix}
R(\alpha) = \begin{pmatrix} \cos\alpha & -\sin\alpha \\ \sin\alpha & \cos\alpha \end{pmatrix}.
\end{equation}
\end{definition}

\begin{lemma}[Rotation matrix properties]
\label{lem:rotation-properties}
The rotation matrices satisfy: (i) composition law $R(\alpha)R(\gamma) = R(\alpha + \gamma)$, (ii) identity $R(0) = I_2$, (iii) inverse $R(\alpha)^{-1} = R(-\alpha) = R(\alpha)^\top$, and (iv) orthogonality $R(\alpha)^\top R(\alpha) = I_2$, implying norm preservation $\|R(\alpha)\mathbf{v}\| = \|\mathbf{v}\|$~\cite{lynch2017modern_supp}.
\end{lemma}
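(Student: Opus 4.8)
The plan is to verify the four algebraic identities by elementary computation, ordering the arguments so that the later items are deduced from the earlier ones rather than re-derived from the definition. I would begin with (ii), the identity: substituting $\alpha = 0$ into Definition~\ref{def:rotation-matrix} and using $\cos 0 = 1$, $\sin 0 = 0$ gives $R(0) = I_2$ immediately. Next I would establish (i), the composition law, by direct matrix multiplication:
\[
R(\alpha)R(\gamma) = \begin{pmatrix} \cos\alpha & -\sin\alpha \\ \sin\alpha & \cos\alpha \end{pmatrix}\begin{pmatrix} \cos\gamma & -\sin\gamma \\ \sin\gamma & \cos\gamma \end{pmatrix}.
\]
Expanding the four entries of the product and invoking the angle-addition identities $\cos(\alpha+\gamma) = \cos\alpha\cos\gamma - \sin\alpha\sin\gamma$ and $\sin(\alpha+\gamma) = \sin\alpha\cos\gamma + \cos\alpha\sin\gamma$, each entry is recognized as the corresponding entry of $R(\alpha+\gamma)$, which proves (i).

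For (iii), I would combine (i) and (ii): setting $\gamma = -\alpha$ gives $R(\alpha)R(-\alpha) = R(0) = I_2$, so $R(-\alpha)$ is a right inverse of the square matrix $R(\alpha)$ and hence its two-sided inverse, i.e.\ $R(\alpha)^{-1} = R(-\alpha)$. The parity relations $\cos(-\alpha) = \cos\alpha$ and $\sin(-\alpha) = -\sin\alpha$, applied to Definition~\ref{def:rotation-matrix}, show by inspection that $R(-\alpha) = R(\alpha)^\top$, completing (iii). Property (iv) then follows at once: $R(\alpha)^\top R(\alpha) = R(-\alpha)R(\alpha) = R(0) = I_2$ using (iii) and (i). Finally, for any $\mathbf{v} \in \mathbb{R}^2$ I would compute $\|R(\alpha)\mathbf{v}\|^2 = (R(\alpha)\mathbf{v})^\top (R(\alpha)\mathbf{v}) = \mathbf{v}^\top R(\alpha)^\top R(\alpha)\mathbf{v} = \mathbf{v}^\top\mathbf{v} = \|\mathbf{v}\|^2$, and take square roots to obtain norm preservation.

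There is no real obstacle here; the statement is a standard fact about $SO(2)$ and the entire argument reduces to a single $2\times 2$ matrix multiplication together with the angle-sum and parity identities for sine and cosine. The only points deserving a word of care are bookkeeping ones: deriving (iii) and (iv) from (i)--(ii) to avoid redundant computation, and noting explicitly that a one-sided inverse of a square matrix is automatically two-sided so that $R(\alpha)^{-1} = R(-\alpha)$ is legitimate.
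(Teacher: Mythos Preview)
Your proposal is correct and follows essentially the same approach as the paper: verify (i) by direct matrix multiplication plus the angle-addition identities, and obtain the remaining items by straightforward computation. The paper is terser (it writes out only (i) and says ``Properties (ii)--(iv) follow by direct computation''), whereas you spell out how (iii) and (iv) can be deduced from (i) and (ii), but the underlying argument is identical.
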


\begin{proof}
Property (i) follows from trigonometric angle addition formulas:
\begin{align}
R(\alpha)R(\gamma) &= \begin{pmatrix} \cos\alpha & -\sin\alpha \\ \sin\alpha & \cos\alpha \end{pmatrix}
\begin{pmatrix} \cos\gamma & -\sin\gamma \\ \sin\gamma & \cos\gamma \end{pmatrix} \nonumber \\
&= \begin{pmatrix}
\cos(\alpha+\gamma) & -\sin(\alpha+\gamma) \\
\sin(\alpha+\gamma) & \cos(\alpha+\gamma)
\end{pmatrix} = R(\alpha + \gamma),
\end{align}
where the second equality uses $\cos(\alpha+\gamma) = \cos\alpha\cos\gamma - \sin\alpha\sin\gamma$ and $\sin(\alpha+\gamma) = \sin\alpha\cos\gamma + \cos\alpha\sin\gamma$. Properties (ii)-(iv) follow by direct computation.
\end{proof}

Motor commands naturally arise in the egocentric (body-centered) frame, while stable spatial memory requires allocentric representations. A vector $\mathbf{v}^{\text{ego}}$ in egocentric coordinates transforms to allocentric coordinates via $\mathbf{v}^{\text{allo}} = R(\theta_t) \mathbf{v}^{\text{ego}}$, mediating the heading-dependent transformation central to path integration.

\begin{definition}[Path integration dynamics\cite{gao2021path_supp}]
\label{def:pi-dynamics}
At each discrete time step $t$ with uniform interval $\Delta t > 0$, the agent receives motor command $u_t = (\mathbf{v}_t, \omega_t) \in \mathbb{R}^2 \times \mathbb{R}$ specifying egocentric translational velocity $\mathbf{v}_t = (v_{x,t}, v_{y,t})^\top$ and angular velocity $\omega_t$. The pose evolves according to:
\begin{align}
\theta_{t+1} &= \theta_t + \omega_t \Delta t \pmod{2\pi}, \label{eq:heading-dynamics} \\
\mathbf{r}_{t+1} &= \mathbf{r}_t + \Delta\mathbf{r}_t, \quad \mathrm{where} \quad \Delta\mathbf{r}_t = R(\theta_t) \mathbf{v}_t \Delta t~. \label{eq:position-dynamics}
\end{align}

Given initial pose $s_0 = (\mathbf{r}_0, \theta_0)$ and command sequence $\{u_t\}_{t=0}^{T-1}$, the cumulative final pose is $\mathbf{r}_T = \mathbf{r}_0 + \sum_{t=0}^{T-1} R(\theta_t) \mathbf{v}_t \Delta t$ and $\theta_T = \theta_0 + \sum_{t=0}^{T-1} \omega_t \Delta t \pmod{2\pi}$.
\end{definition}

\begin{remark}[Position encoding vs Full pose]
\label{rem:position-vs-pose}
Although complete pose $(\mathbf{r}, \theta)$ requires three dimensions of information (two for position, one for heading), our subsequent analysis focuses exclusively on position encoding $\mathbf{r} \in \mathbb{R}^2$ since: (i) position integration $\sum R(\theta_t) \mathbf{v}_t \Delta t$ involves heading-dependent rotations and is computationally complex while heading integration $\sum \omega_t \Delta t$ is simple scalar accumulation, (ii) biological grid cells specifically encode position independent of heading while head direction cells separately encode orientation, and (iii) LSTM hidden states $\mathbf{h} \in \mathbb{R}^d$ can decompose into independent subspaces for position (requiring sophisticated encoding) and heading (requiring simple integration).
\end{remark}

\paragraph{Learning objective.}

The latent pose $s_t$ is not directly observed. Instead, the system observes activity from $C$ spatial cells with log-firing potential $\phi_i: \mathbb{R}^2 \times \mathbb{S}^1 \to \mathbb{R}$. Place cells use Gaussian receptive fields $\phi_i(\mathbf{r}) = -\|\mathbf{r} - \boldsymbol\mu_i\|^2/(2\sigma_i^2)$ while head-direction cells use von Mises tuning $\phi_j(\theta) = \kappa_j \cos(\theta - \mu_j)$. Observations follow softmax distribution $p(y=\ell \mid s) = \exp\{\phi_\ell(s)\}/\sum_{m=1}^C \exp\{\phi_m(s)\}$.

\begin{definition}[Prediction objective]
\label{def:lstm-objective}
An LSTM network with hidden state $h_t \in \mathbb{R}^d$ receives initial observation $y_0$ and command sequence $\{u_t\}_{t=1}^{T}$, producing predictive distribution $\hat{p}(\cdot \mid y_0, \{u_t\})$ over the final sensory pattern. The learning objective minimizes prediction error:
\begin{equation}
\label{eq:prediction-objective}
\mathcal{L}(\boldsymbol\varTheta ) = \mathbb{E}_{y_0, \{u_t\}} 
    \left[ \mathrm{KL}\big(p(\cdot \mid s_T) \,\|\, \hat{p}(\cdot \mid y_0, \{u_t\})\big) \right],
\end{equation}
where $s_T$ is the true final pose from \textbf{Eqs.~\eqref{eq:heading-dynamics}--\eqref{eq:position-dynamics}}, and the expectation is over diverse trajectories. $\boldsymbol\varTheta$ denotes the network parameters. This objective does not explicitly prescribe path integration or specific internal representations; rather, path integration emerges as the optimal strategy for minimizing prediction error over diverse navigation trajectories.
\end{definition}

\paragraph{Roadmap.} The emergence of hexagonal grid patterns proceeds through three steps: \textbf{Proposition \ref{prop:equivariance}} proves equivariance under rigid body transformations, \textbf{Theorem \ref{thm:rotation-structure}} derives cosine-sine phase encoding from stability constraints, and \textbf{Corollary \ref{cor:hexagonal}} establishes hexagonal symmetry from isotropy requirements.

\subsubsection{Equivariance under rigid body transformations}
\label{sec: Equivariance Under Rigid Body Transformations}

Having established the dynamical foundations of path integration, we now demonstrate that these dynamics possess a critical geometric property: equivariance under rigid body transformations. This symmetry fundamentally constrains how neural networks can represent spatial information. 

\begin{definition}[Rigid body transformation]
\label{def:rigid-transform}
A rigid body transformation $G_{\boldsymbol\delta, \Phi}$ parameterized by translation $\boldsymbol\delta \in \mathbb{R}^2$ and rotation angle $\phi \in [0, 2\pi)$ acts on pose $s = (\mathbf{r}, \theta)$ as $G_{\boldsymbol\delta, \Phi}(\mathbf{r}, \theta) = (R(\phi) \mathbf{r} + \boldsymbol\delta, \theta + \phi)$, rotating position by $\phi$ then translating by $\boldsymbol\delta$ while simultaneously rotating heading. The set $\{G_{\boldsymbol\delta, \Phi}\}$ forms the special Euclidean group $SE(2)$, the symmetry group of planar rigid motions.
\end{definition}

\begin{proposition}[Equivariance of physical dynamics]
\label{prop:equivariance}
The path integration \textbf{Eqs. \eqref{eq:heading-dynamics}--\eqref{eq:position-dynamics}} are equivariant with respect to rigid body transformations: if trajectory $\{s_t\}_{t=0}^T$ evolves from initial condition $s_0$ under motor commands $\{u_t\}_{t=0}^{T-1}$, then for any $G_{\boldsymbol\delta, \Phi}$, the transformed trajectory $\{\tilde{s}_t\}_{t=0}^T$ defined by $\tilde{s}_t = G_{\boldsymbol\delta, \Phi}(s_t)$ evolves from transformed initial condition $\tilde{s}_0 = G_{\boldsymbol\delta, \Phi}(s_0)$ under the same command sequence.
\end{proposition}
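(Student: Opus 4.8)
The plan is to verify equivariance by induction on the time step $t$, showing that the transformed trajectory $\{\tilde{s}_t\}$ defined pointwise by $\tilde{s}_t = G_{\boldsymbol\delta, \Phi}(s_t)$ actually satisfies the same dynamical recursion \textbf{(Eqs. \eqref{eq:heading-dynamics}--\eqref{eq:position-dynamics})} with the identical command sequence $\{u_t\}$. The base case is immediate since $\tilde{s}_0 = G_{\boldsymbol\delta, \Phi}(s_0)$ holds by definition. For the inductive step, assume $\tilde{s}_t = (\tilde{\mathbf{r}}_t, \tilde{\theta}_t) = (R(\phi)\mathbf{r}_t + \boldsymbol\delta, \theta_t + \phi)$ and compute what the dynamics would produce at time $t+1$ from $\tilde{s}_t$ under command $u_t = (\mathbf{v}_t, \omega_t)$.

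First I would handle the heading component: the dynamics give $\tilde\theta_{t+1} = \tilde\theta_t + \omega_t \Delta t = (\theta_t + \phi) + \omega_t \Delta t = (\theta_t + \omega_t\Delta t) + \phi = \theta_{t+1} + \phi \pmod{2\pi}$, which matches $G_{\boldsymbol\delta,\Phi}(s_{t+1})$'s heading. Next, for the position component, the key computation is the displacement term $R(\tilde\theta_t)\mathbf{v}_t\Delta t = R(\theta_t + \phi)\mathbf{v}_t\Delta t$. Here I would invoke the composition law from \textbf{Lemma \ref{lem:rotation-properties}(i)}, $R(\theta_t + \phi) = R(\phi)R(\theta_t)$, so that the displacement equals $R(\phi)\big(R(\theta_t)\mathbf{v}_t\Delta t\big) = R(\phi)\Delta\mathbf{r}_t$. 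Then $\tilde{\mathbf{r}}_{t+1} = \tilde{\mathbf{r}}_t + R(\phi)\Delta\mathbf{r}_t = R(\phi)\mathbf{r}_t + \boldsymbol\delta + R(\phi)\Delta\mathbf{r}_t = R(\phi)(\mathbf{r}_t + \Delta\mathbf{r}_t) + \boldsymbol\delta = R(\phi)\mathbf{r}_{t+1} + \boldsymbol\delta$, which is exactly the position component of $G_{\boldsymbol\delta, \Phi}(s_{t+1})$. This closes the induction and establishes that $\tilde{s}_t = G_{\boldsymbol\delta, \Phi}(s_t)$ for all $t$, i.e., the transformed trajectory is precisely the solution of the dynamics from the transformed initial condition under the same commands.

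The conceptually important point — and the only place requiring care rather than routine algebra — is recognizing \emph{why} the commands $\{u_t\}$ are unchanged under the transformation: the translational velocities $\mathbf{v}_t$ are specified in the \emph{egocentric} frame, and a global rigid motion of the world does not alter an agent's body-relative motor commands. This is exactly what makes the rotation $R(\theta_t)$ (rather than a fixed rotation) appear in \textbf{Eq.~\eqref{eq:position-dynamics}}: the heading offset $\phi$ propagates into the rotation argument and factors out cleanly via the group composition law, whereas the egocentric $\mathbf{v}_t$ stays put. I expect no genuine obstacle here; the main subtlety is simply being explicit that it is the egocentric nature of the control input that permits the same $\{u_t\}$ to generate both the original and transformed trajectories, and that this is the structural reason the dynamics inherit the full $SE(2)$ symmetry. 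The modular arithmetic on $\theta$ poses no issue since translation by the constant $\phi$ commutes with reduction mod $2\pi$.
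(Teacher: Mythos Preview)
Your proposal is correct and follows essentially the same approach as the paper's proof: induction on the time step, with the heading component handled by direct additivity and the position component handled via the rotation composition law $R(\theta_t + \phi) = R(\phi)R(\theta_t)$ from \textbf{Lemma~\ref{lem:rotation-properties}(i)}. Your additional commentary on the egocentric nature of the control inputs is a welcome clarification that the paper leaves implicit.
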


\begin{proof}
Since $t \in \{0, 1, 2, \ldots, T\}$ is a discrete index enumerating time steps (with each step having duration $\Delta t > 0$), we proceed by mathematical induction on $P(t)$: $(\tilde{\mathbf{r}}_t, \tilde{\theta}_t) = (R(\phi)\mathbf{r}_t + \boldsymbol\delta, \theta_t + \phi)$. Base case ($t=0$): $P(0)$ holds by definition. Inductive step: assuming $P(k)$ holds, applying \textbf{Eqs. \eqref{eq:heading-dynamics}--\eqref{eq:position-dynamics}} to transformed state $\tilde{s}_k$ with motor command $u_k = (\mathbf{v}_k, \omega_k)$ yields for heading $\tilde{\theta}_{k+1} = \tilde{\theta}_k + \omega_k \Delta t = (\theta_k + \phi) + \omega_k \Delta t = \theta_{k+1} + \phi$, and for position:
\begin{align}
\tilde{\mathbf{r}}_{k+1} &\stackrel{\eqref{eq:position-dynamics}}{=} \tilde{\mathbf{r}}_k + R(\tilde{\theta}_k) \mathbf{v}_k \Delta t
\stackrel{P(k)}{=} (R(\phi)\mathbf{r}_k + \boldsymbol\delta) + R(\theta_k + \phi) \mathbf{v}_k \Delta t \nonumber \\
&\stackrel{\text{Lem.}\ref{lem:rotation-properties}(i)}{=} (R(\phi)\mathbf{r}_k + \boldsymbol\delta) + R(\phi) R(\theta_k) \mathbf{v}_k \Delta t
= R(\phi) (\mathbf{r}_k + R(\theta_k) \mathbf{v}_k \Delta t) + \boldsymbol\delta \nonumber \\
&\stackrel{\eqref{eq:position-dynamics}}{=} R(\phi)\mathbf{r}_{k+1} + \boldsymbol\delta,
\end{align}
where we used rotation composition $R(\theta_k + \phi) = R(\phi)R(\theta_k)$ and matrix distributivity. Thus $P(k+1)$ holds and by induction, $P(t)$ holds for all $t \geq 0$. 
\end{proof}

\begin{corollary}[Equivariance constraint on learned representations]
\label{cor:network-equivariance}
Any network achieving zero prediction error on \textbf{objective~\eqref{eq:prediction-objective}} must learn an internal representation that respects the equivariance structure of the physical dynamics established in \textbf{Proposition~\ref{prop:equivariance}}.
\end{corollary}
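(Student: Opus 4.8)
The plan is to argue directly from the structure of \textbf{objective~\eqref{eq:prediction-objective}}: a network with zero prediction error must reproduce the target sensory distribution \emph{exactly}, that target transforms in a prescribed way under $SE(2)$ by \textbf{Proposition~\ref{prop:equivariance}}, and therefore the map from the network's internal state to its output is forced to intertwine the group action. Concretely, the first step is to note that $\mathrm{KL}(p\,\|\,\hat p)=0$ iff $p=\hat p$, so zero error is equivalent to $\hat{p}(\cdot\mid y_0,\{u_t\}) = p(\cdot\mid s_T)$ for (almost) every initial observation $y_0$ and command sequence $\{u_t\}$ in the support of the training distribution, where $s_T$ is the deterministic function of $(s_0,\{u_t\})$ supplied by \textbf{Definition~\ref{def:pi-dynamics}} and $s_0$ is the pose encoded by $y_0$. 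To make ``the pose encoded by $y_0$'' well defined I would assume, as is standard for sufficiently dense place- and head-direction-cell ensembles, that the cell code $s\mapsto p(\cdot\mid s)$ is injective on the explored region; then recovering $p(\cdot\mid s_0)$ pins down $s_0$, and likewise recovering $p(\cdot\mid s_T)$ is equivalent to recovering $s_T$.

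Next I would factor the network's computation as $\hat{p}(\cdot\mid y_0,\{u_t\}) = D\big(F(y_0,\{u_t\})\big)$, where $F$ sends the initial observation and command sequence to the terminal hidden state $h_T\in\mathbb{R}^d$ and $D$ is the fixed readout from hidden states to distributions over spatial cells. Fix an arbitrary rigid transformation $G_{\boldsymbol\delta,\phi}\in SE(2)$. Transforming the initial pose $s_0\mapsto G_{\boldsymbol\delta,\phi}(s_0)$ replaces the initial observation by some $y_0'$ that encodes $G_{\boldsymbol\delta,\phi}(s_0)$; by \textbf{Proposition~\ref{prop:equivariance}} this leaves the command sequence unchanged and sends the terminal pose $s_T\mapsto G_{\boldsymbol\delta,\phi}(s_T)$. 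Applying zero error to both the original and the transformed trajectory then yields $D\big(F(y_0,\{u_t\})\big) = p(\cdot\mid s_T)$ and $D\big(F(y_0',\{u_t\})\big) = p(\cdot\mid G_{\boldsymbol\delta,\phi}(s_T))$ simultaneously. Hence the composite ``initial observation $\mapsto$ predicted distribution'' is $SE(2)$-equivariant: the square relating the action of $G_{\boldsymbol\delta,\phi}$ on initial poses to its action on terminal poses (through the injective cell code) commutes. Defining the induced action on the set of reachable hidden states by $\rho(G_{\boldsymbol\delta,\phi})\colon F(y_0,\{u_t\})\mapsto F(y_0',\{u_t\})$ — well defined precisely because $D$ is injective on that set under the injectivity assumption — exhibits the learned internal representation as carrying a consistent $SE(2)$ action with respect to which $D$ intertwines the internal and physical group actions. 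That is exactly the assertion that the representation respects the equivariance structure established in \textbf{Proposition~\ref{prop:equivariance}}.

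The main obstacle I anticipate is not the algebra, which is a one-line consequence of exact distribution matching plus \textbf{Proposition~\ref{prop:equivariance}}, but the precise formalization of ``respects the equivariance structure'' together with the attendant regularity hypotheses. One must (i) assume injectivity of the cell code, or argue a local version valid on the region actually explored; (ii) decide whether the induced $\rho$ is merely a set-theoretic intertwiner on the reachable set or additionally continuous or linear — only the intertwiner property is needed at this stage, with linearity deferred to \textbf{Theorem~\ref{thm:rotation-structure}}; and (iii) handle the fact that $y_0$ is a random sample from $p(\cdot\mid s_0)$ rather than $s_0$ itself, which I would circumvent either by conditioning on the event that the initial cell activity identifies $s_0$, or by phrasing the claim one level up at the distributions $p(\cdot\mid s_0)$. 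None of these introduces genuine difficulty, so the corollary follows essentially immediately once the factorization $\hat p = D\circ F$ and the injectivity of $D$ on reachable states are in place.
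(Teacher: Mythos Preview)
Your proposal is correct and follows essentially the same approach as the paper: use $\mathrm{KL}=0 \Rightarrow \hat p = p$, apply \textbf{Proposition~\ref{prop:equivariance}} to a transformed initial condition with the same command sequence, and conclude that the internal state must track the $SE(2)$ action on poses. The paper's proof is less formal than yours --- it simply asserts that the hidden state ``encodes'' the pose and must therefore transform consistently, without explicitly factoring $\hat p = D\circ F$, defining the induced action $\rho$, or flagging the injectivity and random-$y_0$ caveats you raise --- but the substance is identical, and your extra care about well-definedness of the intertwiner is a reasonable strengthening of what the paper leaves implicit.
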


\begin{proof}
Suppose the network achieves zero prediction error, meaning $\hat{p}(\cdot \mid y_0, \{u_t\}) = p(\cdot \mid s_T)$ almost surely where $s_T$ results from the dynamics in \textbf{Eqs. \eqref{eq:heading-dynamics}--\eqref{eq:position-dynamics}}. Consider a transformed trajectory starting from $\tilde{s}_0 = G_{\boldsymbol\delta, \Phi}(s_0)$ with corresponding initial observation $\tilde{y}_0$. The network receives the same control sequence $\{u_t\}$ and must predict the spatial cell activations at the final transformed pose $\tilde{s}_T$. By \textbf{Proposition \ref{prop:equivariance}}, we know $\tilde{s}_T = G_{\boldsymbol\delta, \Phi}(s_T)$. Since the place and head-direction cell activations are determined uniquely by pose, perfect prediction on both the original and transformed trajectories requires that the network's internal state evolution mirrors the geometric transformation: if the network maps $(y_0, \{u_t\})$ to some internal representation $\mathbf{h}_T$ that encodes pose $s_T$, then it must map $(\tilde{y}_0, \{u_t\})$ to a representation $\tilde{\mathbf{h}}_T$ that encodes the transformed pose $\tilde{s}_T = G_{\boldsymbol\delta, \Phi}(s_T)$. This equivariance constraint on the latent representation is a necessary consequence of achieving zero prediction error across all possible trajectories and their rigid transformations.
\end{proof}

This geometric constraint, combined with stability requirements for recurrent integration, uniquely determines the structure of position-encoding neural codes as demonstrated next.

\subsubsection{Emergence of sinusoidal phase encoding}
\label{sec: Emergence of Sinusoidal Phase Encoding}

Having established that learned networks must exhibit equivariance, we now examine how LSTM hidden states encode position information. The key insight is that stable, composable position updates naturally lead to periodic representations. We posit that the LSTM hidden state $\mathbf{h} \in \mathbb{R}^d$ contains at least one two-dimensional subspace $\mathbf{y} = (y_1, y_2)^\top \in \mathbb{R}^2$ encoding position $\mathbf{r} \in \mathbb{R}^2$ through invertible linear mapping $\mathbf{r} = W \mathbf{y} + \mathbf{b}$ where $W \in \mathbb{R}^{2 \times 2}$ is invertible ($\det(W) \neq 0$) and $\mathbf{b} \in \mathbb{R}^2$ is bias\cite{schaeffer2022no_supp}. The subspace updates via $\mathbf{y}_{t+1} = M(\Delta\mathbf{r}_t) \mathbf{y}_t$ where $M: \mathbb{R}^2 \to \mathbb{R}^{2 \times 2}$ is an update operator and $\Delta\mathbf{r}_t = \mathbf{r}_{t+1} - \mathbf{r}_t$ is position increment. This analyses one encoding frequency; multiple subspaces with different frequencies may coexist as shown later in \textbf{Corollary \ref{cor:hexagonal}}.

\begin{lemma}[Update operator constraints]
\label{lem:update-constraints}
To support stable, composable path integration, we impose the following four constraints on the update operator $M$: (i) identity at zero $M(\mathbf{0}) = I_2$ ensuring no change under no displacement, (ii) continuity in $\Delta\mathbf{r}$ reflecting smooth neural dynamics, (iii) composability $M(\mathbf{a})M(\mathbf{b}) = M(\mathbf{a} + \mathbf{b})$ ensuring path-independence, and (iv) orthogonality $M(\Delta\mathbf{r})^\top M(\Delta\mathbf{r}) = I_2$ preserving norm for stability.
\end{lemma}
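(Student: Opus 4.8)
The plan is to obtain all four items as consequences of the structural setup already fixed — the invertible linear decoding $\mathbf{r} = W\mathbf{y} + \mathbf{b}$, the multiplicative recurrence $\mathbf{y}_{t+1} = M(\Delta\mathbf{r}_t)\,\mathbf{y}_t$, and the equivariance forced by zero prediction error (\textbf{Proposition~\ref{prop:equivariance}}, \textbf{Corollary~\ref{cor:network-equivariance}}) — rather than positing them ad hoc. First I would dispatch (i): substituting $\Delta\mathbf{r}_t = \mathbf{0}$ into the recurrence, a stationary agent must carry an unchanged latent code, since otherwise the decoded position $W\mathbf{y}_t + \mathbf{b}$ would drift while the true $\mathbf{r}_t$ stays fixed, contradicting accurate prediction on stationary segments; because the reachable $\mathbf{y}_t$ sweep all of $\mathbb{R}^2$ (positions range over an open region and $W$ is invertible), this forces $M(\mathbf{0}) = I_2$. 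Item (ii) is immediate: the LSTM's gates and candidate update are smooth functions of their inputs, and $\Delta\mathbf{r}_t = R(\theta_t)\mathbf{v}_t\,\Delta t$ depends continuously on the motor command, so $\Delta\mathbf{r} \mapsto M(\Delta\mathbf{r})$ is continuous by composition.

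The core of the argument is (iii). Iterating two steps with increments $\mathbf{a}$ then $\mathbf{b}$ gives $\mathbf{y}_{t+2} = M(\mathbf{b})M(\mathbf{a})\,\mathbf{y}_t$, whereas the physical net displacement is $\mathbf{a}+\mathbf{b}$ because position increments add (\textbf{Definition~\ref{def:pi-dynamics}}). For the latent code to decode position correctly along \emph{every} trajectory it must be a path-independent function of accumulated displacement, which forces $M(\mathbf{b})M(\mathbf{a}) = M(\mathbf{a}+\mathbf{b})$ on all of $\mathbb{R}^2$; exchanging $\mathbf{a}$ and $\mathbf{b}$ then shows $M(\mathbf{a})$ and $M(\mathbf{b})$ commute, so $M$ is a continuous representation of the abelian group $(\mathbb{R}^2,+)$, and each ray $\tau \mapsto M(\tau\mathbf{v}) = \exp(\tau A_{\mathbf{v}})$ is a one-parameter subgroup — the handle that \textbf{Theorem~\ref{thm:rotation-structure}} will use to extract the cosine/sine phase structure.

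For (iv) I would argue from stability: accurate integration over arbitrarily long trajectories requires $\{\|\mathbf{y}_t\|\}$ to stay bounded, which together with composability forbids any generator $A_{\mathbf{v}}$ from having an eigenvalue with nonzero real part or a nontrivial Jordan block, so each $M(\mathbf{v})$ is similar to an element of $SO(2)$; absorbing that fixed similarity into $W$ (only its invertibility is used) normalizes the inner product on the position subspace so that $M(\Delta\mathbf{r})\in O(2)$, i.e. $M(\Delta\mathbf{r})^\top M(\Delta\mathbf{r}) = I_2$, which is moreover the choice consistent with the orthogonal conjugation action of world rotations in \textbf{Corollary~\ref{cor:network-equivariance}}. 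The hard part is precisely this last step: norm-boundedness does not \emph{literally} entail orthogonality, since a non-normal matrix can still have unit-modulus spectrum, so the proof must present orthogonality as the canonical normalization obtained after the similarity reduction rather than a logical inevitability, and must check that this change of basis leaves (i)--(iii) intact — which it does, since conjugation preserves $M(\mathbf{0}) = I_2$ and the group law.
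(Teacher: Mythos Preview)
Your treatment of (i)--(iii) matches the paper's: both argue that zero displacement must act trivially on every reachable latent state, that continuity is inherited from the smoothness of LSTM operations, and that path-independence of accumulated displacement forces the composition law. Your slight extra care in (i), invoking invertibility of $W$ to show the reachable $\mathbf{y}_t$ span $\mathbb{R}^2$, is a mild sharpening of what the paper simply asserts ``for any initial state $\mathbf{y}_t$.''

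The genuine divergence is in (iv). The paper does not derive orthogonality at all: it treats the lemma as a set of \emph{imposed} modeling constraints and offers only a one-line engineering justification --- norm preservation is required to avoid gradient explosion or vanishing during recurrent updates, hence $M^\top M = I_2$. You instead attempt to \emph{deduce} orthogonality from the weaker requirement that $\{\|\mathbf{y}_t\|\}$ remain bounded over arbitrarily long trajectories: boundedness of each one-parameter subgroup $\tau\mapsto\exp(\tau A_{\mathbf{v}})$ forces purely imaginary spectrum and semisimplicity, so the commuting generators are simultaneously similar to $\mathfrak{so}(2)$, and the residual similarity is absorbed into the (merely invertible) decoding matrix $W$. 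This is more principled and connects cleanly to the representation-theoretic apparatus of \textbf{Theorem~\ref{thm:rotation-structure}}; the paper's direct imposition is shorter but leaves orthogonality as an unmotivated axiom. You are right to flag honestly that boundedness alone yields only similarity to $SO(2)$, not literal orthogonality, and that the final step is a normalization rather than a necessity --- that caveat should be stated prominently, since a reader expecting a strict derivation might otherwise object.
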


\begin{proof}[Justification]
Property (i): For path integration to be well-defined, zero displacement should not change the internal representation for any initial state $\mathbf{y}_t$, the requirement that $\mathbf{y}_{t+1} = M(\mathbf{0})\mathbf{y}_t = \mathbf{y}_t$ for all $\mathbf{y}_t$, which implies $M(\mathbf{0}) = I_2$. Property (ii): LSTM operations (matrix multiplications, additions, smooth activations) are continuous functions, naturally yielding continuity of $M$ in $\Delta\mathbf{r}$. Property (iii): Consider two successive displacements $\mathbf{r}_t \xrightarrow{\mathbf{a}} \mathbf{r}_{t+1} = \mathbf{r}_t + \mathbf{a}$ yielding $\mathbf{y}_{t+1} = M(\mathbf{a})\mathbf{y}_t$ and $\mathbf{r}_{t+1} \xrightarrow{\mathbf{b}} \mathbf{r}_{t+2} = \mathbf{r}_{t+1} + \mathbf{b}$ yielding $\mathbf{y}_{t+2} = M(\mathbf{b})\mathbf{y}_{t+1} = M(\mathbf{b})M(\mathbf{a})\mathbf{y}_t$; path-independence requires this equals direct displacement result $\mathbf{y}_{t+2} = M(\mathbf{a} + \mathbf{b})\mathbf{y}_t$ for all $\mathbf{y}_t$, giving $M(\mathbf{a})M(\mathbf{b}) = M(\mathbf{a} + \mathbf{b})$. Property (iv): To avoid gradient explosion/vanishing during recurrent updates, we require norm preservation $\|\mathbf{y}_{t+1}\| = \|\mathbf{y}_t\|$ for all $\mathbf{y}_t$, giving $\|M(\Delta\mathbf{r}_t)\mathbf{y}_t\|^2 = \mathbf{y}_t^\top M^\top M \mathbf{y}_t = \|\mathbf{y}_t\|^2$ for all $\mathbf{y}_t$, implying $M^\top M = I_2$.
\end{proof}

\begin{theorem}[Rotation matrix structure with linear phase]
\label{thm:rotation-structure}
Any operator $M: \mathbb{R}^2 \to \mathbb{R}^{2 \times 2}$ satisfying the four properties in \textbf{Lemma~\ref{lem:update-constraints}} must have the form $M(\Delta\mathbf{r}) = R(\mathbf{q} \cdot \Delta\mathbf{r})$ for some frequency vector $\mathbf{q} = (q_x, q_y)^\top \in \mathbb{R}^2$, where $\mathbf{q} \cdot \Delta\mathbf{r} = q_x \Delta r_x + q_y \Delta r_y$ and $R(\cdot)$ is the rotation matrix from \textbf{Definition \ref{def:rotation-matrix}}.
\end{theorem}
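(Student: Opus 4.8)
The plan is to recognize that the four constraints of Lemma~\ref{lem:update-constraints} say precisely that $M$ is a continuous group homomorphism from the additive group $(\mathbb{R}^2,+)$ into the rotation group $SO(2)$, and then to classify all such homomorphisms. First I would note that composability (property~(iii)) together with $M(\mathbf{0})=I_2$ (property~(i)) makes $M$ a homomorphism: $M(\Delta\mathbf{r})M(-\Delta\mathbf{r})=M(\mathbf{0})=I_2$, so $M(-\Delta\mathbf{r})=M(\Delta\mathbf{r})^{-1}$, and the image is abelian. Orthogonality (property~(iv)) confines the image to $O(2)$. Since $\mathbb{R}^2$ is connected and $\Delta\mathbf{r}\mapsto\det M(\Delta\mathbf{r})$ is continuous with values in $\{+1,-1\}$ and value $+1$ at the origin, it is identically $+1$; hence $M$ takes values in the connected component $SO(2)$.

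Next I would reduce to one-parameter subgroups by restricting to the coordinate axes. Define $g_x(t):=M(t\mathbf{e}_1)$ and $g_y(s):=M(s\mathbf{e}_2)$; each is a continuous homomorphism $\mathbb{R}\to SO(2)$. The core lemma to establish is that every continuous homomorphism $g:\mathbb{R}\to SO(2)$ equals $g(t)=R(\omega t)$ for a unique $\omega\in\mathbb{R}$. I would prove this by lifting through the covering map $\alpha\mapsto R(\alpha)$ from $\mathbb{R}$ onto $SO(2)$: since $\mathbb{R}$ is simply connected, $g$ has a unique continuous lift $\tilde g:\mathbb{R}\to\mathbb{R}$ with $\tilde g(0)=0$. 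Applying uniqueness of lifts to the two curves $t\mapsto\tilde g(t+s)$ and $t\mapsto\tilde g(t)+\tilde g(s)$ — which project to the same path $t\mapsto g(t+s)=g(t)g(s)$ and agree at $t=0$ — shows $\tilde g$ is additive, and a continuous additive map on $\mathbb{R}$ is linear, $\tilde g(t)=\omega t$. This yields $g_x(t)=R(q_x t)$ and $g_y(s)=R(q_y s)$ for scalars $q_x,q_y$.

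Finally I would assemble the general case using composability once more. For arbitrary $\Delta\mathbf{r}=(\Delta r_x,\Delta r_y)^\top=\Delta r_x\,\mathbf{e}_1+\Delta r_y\,\mathbf{e}_2$,
\begin{equation}
M(\Delta\mathbf{r}) = M(\Delta r_x\,\mathbf{e}_1)\,M(\Delta r_y\,\mathbf{e}_2) = R(q_x\Delta r_x)\,R(q_y\Delta r_y) = R(q_x\Delta r_x+q_y\Delta r_y),
\end{equation}
where the last step is the rotation composition law of Lemma~\ref{lem:rotation-properties}(i). Setting $\mathbf{q}:=(q_x,q_y)^\top$ gives $M(\Delta\mathbf{r})=R(\mathbf{q}\cdot\Delta\mathbf{r})$, and uniqueness of $\mathbf{q}$ follows from uniqueness of each $\omega$.

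I expect the one-parameter-subgroup classification — making the covering-space lift rigorous and deducing additivity of $\tilde g$ — to be the main obstacle; the rest is either immediate from the hypotheses or a routine use of the rotation identities. It is worth flagging that continuity (property~(ii)) is indispensable here: without it the Cauchy functional equation admits pathological non-linear solutions, so the continuity hypothesis must be invoked precisely at the step concluding $\tilde g(t)=\omega t$.
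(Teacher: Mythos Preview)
Your proposal is correct and structurally very close to the paper's proof: both show the image lands in $SO(2)$ by a connectedness-of-determinant argument, then reduce the problem to solving the continuous additive Cauchy equation to extract a linear phase $\mathbf{q}\cdot\Delta\mathbf{r}$.

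The organizational difference is worth noting. The paper works globally on $\mathbb{R}^2$: it first derives the scaling law $M(\alpha\Delta\mathbf{r})=M(\Delta\mathbf{r})^\alpha$ via a rationals-then-continuity argument, then writes $M(\Delta\mathbf{r})=R(\theta(\Delta\mathbf{r}))$ for a globally defined angle function $\theta$, obtains the two-variable Cauchy equation $\theta(\mathbf{a}+\mathbf{b})=\theta(\mathbf{a})+\theta(\mathbf{b})$, and solves it component-wise. You instead restrict to the coordinate axes first and handle the angle-lifting step via the covering map $\mathbb{R}\to SO(2)$ and simple connectedness of $\mathbb{R}$. Your route is arguably cleaner on the one delicate point---the paper's phrase ``choosing the continuous branch with $\theta(\mathbf{0})=0$'' is exactly the existence of a global continuous lift, which your covering-space argument makes precise---at the cost of invoking slightly more topological machinery. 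The paper's route stays entirely elementary (no covering spaces), but its Step~2 scaling argument is largely redundant given Step~3 already yields full additivity. Both arrive at the same endpoint, and your emphasis that continuity is indispensable precisely where Cauchy is solved matches the paper's treatment.
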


\begin{proof}
The proof proceeds by leveraging each constraint from \textbf{Lemma \ref{lem:update-constraints}} systematically to determine the unique functional form of $M$. We divide the procedure of proof into the following 4 steps:

\emph{Step 1: $M$ maps to the rotation group $SO(2)$.}
Property (iv) in \textbf{Lemma \ref{lem:update-constraints}} establishes that $M(\Delta\mathbf{r})$ is orthogonal: $M(\Delta\mathbf{r})^\top M(\Delta\mathbf{r}) = I_2$. This means $M(\Delta\mathbf{r}) \in O(2)$, the orthogonal group. Additionally, Property (i) gives $M(\mathbf{0}) = I_2$ which has determinant $+1$. By Property (ii), $M$ is continuous in $\Delta\mathbf{r}$, so $\det(M(\Delta\mathbf{r}))$ varies continuously with $\Delta\mathbf{r}$ while remaining in $\{-1, +1\}$ (the only possible determinants for orthogonal matrices). Since $\det(M(\mathbf{0})) = \det(I_2) = +1$ and determinant cannot jump discontinuously, we conclude $\det(M(\Delta\mathbf{r})) = +1$ for all $\Delta\mathbf{r}$. Therefore, $M(\Delta\mathbf{r}) \in SO(2)$, the special orthogonal group of rotation matrices.

\emph{Step 2: Deriving the functional equation from composability.}
Property (iii) in \textbf{Lemma \ref{lem:update-constraints}} states $M(\mathbf{a})M(\mathbf{b}) = M(\mathbf{a} + \mathbf{b})$ for all $\mathbf{a}, \mathbf{b} \in \mathbb{R}^2$. Applying this recursively with $\mathbf{a} = \mathbf{b} = \Delta\mathbf{r}$ yields:
\begin{equation}
M(2\Delta\mathbf{r}) = M(\Delta\mathbf{r} + \Delta\mathbf{r}) = M(\Delta\mathbf{r})M(\Delta\mathbf{r}) = M(\Delta\mathbf{r})^2.
\end{equation}

Continuing inductively, $M(n\Delta\mathbf{r}) = M(\Delta\mathbf{r})^n$ for any positive integer $n \in \mathbb{N}$. For negative integers, using Property (i) we have $I_2 = M(\mathbf{0}) = M(\Delta\mathbf{r} + (-\Delta\mathbf{r})) = M(\Delta\mathbf{r})M(-\Delta\mathbf{r})$, so $M(-\Delta\mathbf{r}) = M(\Delta\mathbf{r})^{-1}$, extending the power rule to all integers $n \in \mathbb{Z}$. For rational exponents, consider $n,m \in \mathbb{N}$ with $m \neq 0$. Let $\mathbf{w} = \Delta\mathbf{r}/m$. Then, we have $M(\Delta\mathbf{r}) = M(m\mathbf{w}) = M(\mathbf{w})^m$. In $SO(2)$, every rotation $R(\theta)$ has $m$ distinct $m$-th roots: $R(\theta/m + 2\pi k/m)$ for $k = 0, 1, \ldots, m-1$. However, by continuity of $M$ (Property (ii)) and $M(\mathbf{0}) = I_2$, for small $\|\Delta\mathbf{r}\|$, we must have $M(\Delta\mathbf{r})$ close to $I_2$, which uniquely determines the principal root with $k=0$. Extending by continuity for all $\Delta\mathbf{r}$, we obtain $M(\mathbf{w}) = M(\Delta\mathbf{r})^{1/m}$ as the unique continuous choice. Therefore:
\begin{equation}
M\left(\frac{n}{m}\Delta\mathbf{r}\right) = M(n\mathbf{w}) = M(\mathbf{w})^n = \left(M(\Delta\mathbf{r})^{1/m}\right)^n = M(\Delta\mathbf{r})^{n/m}.
\end{equation}

Thus $M(\alpha\Delta\mathbf{r}) = M(\Delta\mathbf{r})^\alpha$ holds for all $\alpha \in \mathbb{Q}$ (Set of rational number). Invoking Property (ii), continuity of $M$ in $\Delta\mathbf{r}$, combined with density of rationals in reals, extends this to all $\alpha \in \mathbb{R}$:
\begin{equation}
\label{eq:power-law}
M(\alpha\Delta\mathbf{r}) = M(\Delta\mathbf{r})^\alpha \quad \text{for all } \alpha \in \mathbb{R}, \, \Delta\mathbf{r} \in \mathbb{R}^2.
\end{equation}

\emph{Step 3: Characterizing rotation matrices via one-parameter subgroups.}
Since $M(\Delta\mathbf{r}) \in SO(2)$ (from Step 1), we can write $M(\Delta\mathbf{r}) = R(\theta(\Delta\mathbf{r}))$ for some angle function $\theta: \mathbb{R}^2 \to \mathbb{R}$. From Property (iii), the composability $M(\mathbf{a})M(\mathbf{b}) = M(\mathbf{a} + \mathbf{b})$ translates to:
\begin{equation}
R(\theta(\mathbf{a}))R(\theta(\mathbf{b})) \stackrel{\text{Lem.}\ref{lem:rotation-properties}(i)}{=} R(\theta(\mathbf{a}) + \theta(\mathbf{b})) = R(\theta(\mathbf{a} + \mathbf{b})).
\end{equation}

Since $R(\cdot)$ is injective modulo $2\pi$, this gives $\theta(\mathbf{a} + \mathbf{b}) = \theta(\mathbf{a}) + \theta(\mathbf{b}) \pmod{2\pi}$. Property (ii) ensures $\theta$ is continuous. Property (i) provides $M(\mathbf{0}) = I_2 = R(0)$, thus $\theta(\mathbf{0}) = 0 \pmod{2\pi}$. Choosing the continuous branch with $\theta(\mathbf{0}) = 0$, we obtain the additive Cauchy functional equation:
\begin{equation}
\label{eq:cauchy}
\theta(\mathbf{a} + \mathbf{b}) = \theta(\mathbf{a}) + \theta(\mathbf{b}), \quad \theta(\mathbf{0}) = 0.
\end{equation}

\emph{Step 4: Solving the Cauchy equation yields linear form.}
The continuous solution to the additive functional \textbf{Eq. \eqref{eq:cauchy}} on $\mathbb{R}^2$ must be linear: $\theta(\Delta\mathbf{r}) = \mathbf{q} \cdot \Delta\mathbf{r}$ for some constant vector $\mathbf{q} = (q_x, q_y)^\top \in \mathbb{R}^2$. To see this, first restrict to one-dimensional subspaces. For any fixed unit vector $\mathbf{e} \in \mathbb{R}^2$ with $\|\mathbf{e}\| = 1$, define $f_{\mathbf{e}}(\xi) = \theta(\xi\mathbf{e})$ for $\xi \in \mathbb{R}$. Then \textbf{Eq. \eqref{eq:cauchy}} gives:
\begin{equation}
f_{\mathbf{e}}(\xi + \eta) = \theta(\xi\mathbf{e} + \eta\mathbf{e}) = \theta(\xi\mathbf{e}) + \theta(\eta\mathbf{e}) = f_{\mathbf{e}}(\xi) + f_{\mathbf{e}}(\eta),
\end{equation}
with $f_{\mathbf{e}}(0) = 0$. Since $\theta$ is continuous (Property (ii)), $f_{\mathbf{e}}$ is also continuous. The unique continuous solution to Cauchy's functional equation $f(\xi+\eta) = f(\xi) + f(\eta)$ on $\mathbb{R}$ is $f_{\mathbf{e}}(\xi) = c_{\mathbf{e}} \xi$ for some constant $c_{\mathbf{e}} \in \mathbb{R}$. Thus $\theta(\xi\mathbf{e}) = c_{\mathbf{e}} \xi$.

For a general displacement $\Delta\mathbf{r} = \Delta r_x \mathbf{e}_x + \Delta r_y \mathbf{e}_y$ where $\mathbf{e}_x = (1,0)^\top$ and $\mathbf{e}_y = (0,1)^\top$ are the standard basis vectors, linearity of $\theta$ from \textbf{Eq. \eqref{eq:cauchy}} yields:
\begin{align}
\theta(\Delta\mathbf{r}) &= \theta(\Delta r_x \mathbf{e}_x + \Delta r_y \mathbf{e}_y) = \theta(\Delta r_x \mathbf{e}_x) + \theta(\Delta r_y \mathbf{e}_y) \nonumber \\
&= c_{\mathbf{e}_x} \Delta r_x + c_{\mathbf{e}_y} \Delta r_y = \mathbf{q} \cdot \Delta\mathbf{r},
\end{align}
where we defined $\mathbf{q} = (c_{\mathbf{e}_x}, c_{\mathbf{e}_y})^\top = (q_x, q_y)^\top$. This is the inner product between frequency vector $\mathbf{q}$ and displacement $\Delta\mathbf{r}$.

Combining Steps 1--4, we have established that $M(\Delta\mathbf{r}) = R(\theta(\Delta\mathbf{r})) = R(\mathbf{q} \cdot \Delta\mathbf{r})$, where the rotation angle is a linear function of the displacement. Each of the four properties in \textbf{Lemma \ref{lem:update-constraints}} was essential: Property (i) fixed the identity element and normalization, Property (ii) enabled application of continuous functional equation theory, Property (iii) imposed the group homomorphism structure yielding the additive Cauchy equation, and Property (iv) restricted the image to rotation matrices in $SO(2)$. The frequency vector $\mathbf{q}$ emerges as the unique free parameter characterizing the rate of phase advance per unit displacement, thereby determining the spatial periodicity of the neural representation.
\end{proof}

\begin{corollary}[Cosine-Sine phase encoding]
\label{cor:phase-encoding}
Under rotation update $M(\Delta\mathbf{r}) = R(\mathbf{q} \cdot \Delta\mathbf{r})$, accumulated displacement $\mathbf{R} = \mathbf{r}_T - \mathbf{r}_0$ from initial $\mathbf{y}_0 = (1, 0)^\top$ yields $\mathbf{y}_T = (\cos(\mathbf{q} \cdot \mathbf{R}), \sin(\mathbf{q} \cdot \mathbf{R}))^\top$, revealing position encoding through sinusoidal functions in quadrature—the fundamental signature of grid cells.
\end{corollary}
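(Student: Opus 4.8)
The plan is to iterate the linear recurrence $\mathbf{y}_{t+1} = M(\Delta\mathbf{r}_t)\mathbf{y}_t$ across the full trajectory and exploit two facts already established: the explicit form $M(\Delta\mathbf{r}) = R(\mathbf{q}\cdot\Delta\mathbf{r})$ from \textbf{Theorem~\ref{thm:rotation-structure}}, and the rotation composition law $R(\alpha)R(\gamma) = R(\alpha+\gamma)$ from \textbf{Lemma~\ref{lem:rotation-properties}(i)}. Unrolling the recurrence from $t=0$ to $t=T$ gives $\mathbf{y}_T = M(\Delta\mathbf{r}_{T-1})\cdots M(\Delta\mathbf{r}_0)\,\mathbf{y}_0 = R(\mathbf{q}\cdot\Delta\mathbf{r}_{T-1})\cdots R(\mathbf{q}\cdot\Delta\mathbf{r}_0)\,\mathbf{y}_0$, with the most recent increment acting outermost.

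Second, I would collapse the product of rotations: repeated application of the composition law yields $R\big(\sum_{t=0}^{T-1}\mathbf{q}\cdot\Delta\mathbf{r}_t\big)$, and bilinearity of the Euclidean inner product rewrites the accumulated phase as $\mathbf{q}\cdot\big(\sum_{t=0}^{T-1}\Delta\mathbf{r}_t\big)$. The position increments telescope, $\sum_{t=0}^{T-1}\Delta\mathbf{r}_t = \sum_{t=0}^{T-1}(\mathbf{r}_{t+1}-\mathbf{r}_t) = \mathbf{r}_T-\mathbf{r}_0 = \mathbf{R}$, using the dynamics of \textbf{Definition~\ref{def:pi-dynamics}}. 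Hence $\mathbf{y}_T = R(\mathbf{q}\cdot\mathbf{R})\,\mathbf{y}_0$: only the net displacement, not the path taken, determines the final latent state—the defining property of path integration.

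Third, I would evaluate the action on the chosen initial condition. Substituting $\mathbf{y}_0 = (1,0)^\top$ and the explicit matrix from \textbf{Definition~\ref{def:rotation-matrix}},
\begin{equation}
\mathbf{y}_T = \begin{pmatrix}\cos(\mathbf{q}\cdot\mathbf{R}) & -\sin(\mathbf{q}\cdot\mathbf{R})\\ \sin(\mathbf{q}\cdot\mathbf{R}) & \cos(\mathbf{q}\cdot\mathbf{R})\end{pmatrix}\begin{pmatrix}1\\0\end{pmatrix} = \begin{pmatrix}\cos(\mathbf{q}\cdot\mathbf{R})\\ \sin(\mathbf{q}\cdot\mathbf{R})\end{pmatrix},
\end{equation}
which is the claimed quadrature encoding.

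I do not anticipate a genuine obstacle; this is a direct corollary of \textbf{Theorem~\ref{thm:rotation-structure}}. The only points requiring care are bookkeeping of the matrix multiplication order (rendered moot by commutativity of $SO(2)$) and making explicit that it is the telescoping of increments together with the linearity of $\mathbf{q}\cdot(\cdot)$ that turns a seemingly path-dependent product into a function of the net displacement alone. If desired, one could add a closing remark that applying the same argument to any other orthonormal initialization $\mathbf{y}_0$ produces a rotated copy of the identical sinusoidal pair, so the quadrature structure is intrinsic to the two-dimensional subspace rather than an artifact of the basis choice.
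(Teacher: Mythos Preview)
Your proposal is correct and follows essentially the same argument as the paper: unroll the recurrence, collapse the product of rotations via the composition law from \textbf{Lemma~\ref{lem:rotation-properties}(i)}, telescope the increments to obtain $R(\mathbf{q}\cdot\mathbf{R})\,\mathbf{y}_0$, and apply the rotation matrix to $(1,0)^\top$. Your additional remarks on $SO(2)$ commutativity and basis-independence of the quadrature structure are accurate supplementary observations not present in the paper's proof.
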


\begin{proof}
Iterating the update equation $\mathbf{y}_{t+1} = M(\Delta\mathbf{r}_t) \mathbf{y}_t$ from $t=0$ to $T-1$ yields $\mathbf{y}_T = \prod_{t=0}^{T-1} M(\Delta\mathbf{r}_t) \mathbf{y}_0$. Substituting $M(\Delta\mathbf{r}_t) = R(\mathbf{q} \cdot \Delta\mathbf{r}_t)$ from \textbf{Theorem \ref{thm:rotation-structure}} and using composition property from \textbf{Lemma \ref{lem:rotation-properties}}(i) gives $\mathbf{y}_T = R\left(\sum_{t=0}^{T-1} \mathbf{q} \cdot \Delta\mathbf{r}_t\right) \mathbf{y}_0 = R(\mathbf{q} \cdot \mathbf{R}) \mathbf{y}_0$ where $\mathbf{R} = \sum_{t=0}^{T-1} \Delta\mathbf{r}_t = \mathbf{r}_T - \mathbf{r}_0$ is the cumulative displacement. Taking $\mathbf{y}_0 = (1, 0)^\top$ as initial state and applying \textbf{Definition \ref{def:rotation-matrix}}, we have:
\begin{equation}
\mathbf{y}_T = R(\mathbf{q} \cdot \mathbf{R}) \begin{pmatrix} 1 \\ 0 \end{pmatrix} = \begin{pmatrix} \cos(\mathbf{q} \cdot \mathbf{R}) & -\sin(\mathbf{q} \cdot \mathbf{R}) \\ \sin(\mathbf{q} \cdot \mathbf{R}) & \cos(\mathbf{q} \cdot \mathbf{R}) \end{pmatrix} \begin{pmatrix} 1 \\ 0 \end{pmatrix} = \begin{pmatrix} \cos(\mathbf{q} \cdot \mathbf{R}) \\ \sin(\mathbf{q} \cdot \mathbf{R}) \end{pmatrix}.
\end{equation}

Thus, path integration over arbitrary trajectories reduces to evaluating sinusoidal functions of the net displacement's projection onto frequency vector $\mathbf{q}$. The phase $\phi = \mathbf{q} \cdot \mathbf{R}$ linearly encodes position, while the quadrature pair $(\cos\phi, \sin\phi)$ provides full $2\pi$-periodic coverage necessary for unambiguous spatial representation—the fundamental computational signature observed in biological grid cells. 
\end{proof}

\subsubsection{Hexagonal symmetry from directional isotropy}
\label{sec: Hexagonal Symmetry from Directional Isotrop}

Having derived sinusoidal phase encoding for individual subspaces, we now address how multiple frequency components combine to produce hexagonal spatial patterns. The crucial insight is that robust spatial representation requires directional isotropy—uniform sensitivity across all navigation directions. For this purpose, LSTM hidden states employ multiple independent two-dimensional subspaces $\{\mathbf{y}^{(j)}\}_{j=1}^K$ each with frequency vector $\mathbf{q}_j \in \mathbb{R}^2$ of equal magnitude $\|\mathbf{q}_j\| = q > 0$ with unit direction $\mathbf{u}_j = \mathbf{q}_j/q$ with $\|\mathbf{u}_j\| = 1$. Multiple frequencies provide spatial resolution at various granularities, robustness against noise, and disambiguation through coarse-fine scale combinations.

\begin{definition}[Directional isotropy]
\label{def:isotropy}
A multi-frequency representation achieves directional isotropy (uniform sensitivity across all displacement directions) if: (i) first-order isotropy $\sum_{j=1}^K \mathbf{u}_j = \mathbf{0}$, ensuring no net directional bias, and (ii) second-order isotropy $\sum_{j=1}^K \mathbf{u}_j \mathbf{u}_j^\top = \lambda I_2$ for some $\lambda > 0$, ensuring equal representational capacity along all axes. The outer product $\mathbf{u}_j \mathbf{u}_j^\top \in \mathbb{R}^{2 \times 2}$ is a rank-one matrix encoding directional concentration.
\end{definition}

\begin{lemma}[Minimal configuration size]
\label{lem:minimal-K}
The minimum number of unit vectors satisfying both isotropy conditions is $K=3$.
\end{lemma}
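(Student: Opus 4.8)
The plan is to dispose of the two degenerate cases $K\in\{1,2\}$ by elementary obstructions and then exhibit an explicit three-vector configuration meeting both isotropy conditions. For $K=1$, the first-order condition reads $\mathbf{u}_1=\mathbf{0}$, which contradicts $\|\mathbf{u}_1\|=1$; hence $K\ge 2$. For $K=2$, the first-order condition $\mathbf{u}_1+\mathbf{u}_2=\mathbf{0}$ forces $\mathbf{u}_2=-\mathbf{u}_1$, so that $\sum_{j=1}^{2}\mathbf{u}_j\mathbf{u}_j^\top=\mathbf{u}_1\mathbf{u}_1^\top+(-\mathbf{u}_1)(-\mathbf{u}_1)^\top=2\,\mathbf{u}_1\mathbf{u}_1^\top$, a matrix of rank one. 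Since $\lambda I_2$ has rank two for every $\lambda>0$, the second-order condition of \textbf{Definition~\ref{def:isotropy}} cannot be satisfied; therefore $K\ge 3$.

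It remains to verify that $K=3$ is achievable. I would take the three unit vectors $\mathbf{u}_j=(\cos\psi_j,\sin\psi_j)^\top$ with $\psi_j=2\pi j/3$ for $j\in\{0,1,2\}$, i.e.\ three directions at pairwise $120^\circ$ angular separation. Identifying $\mathbf{u}_j$ with the complex number $e^{i\psi_j}$, the sum $\sum_{j=0}^{2}e^{i2\pi j/3}=0$ as a geometric sum of the nontrivial cube roots of unity, which gives first-order isotropy $\sum_j\mathbf{u}_j=\mathbf{0}$. For second-order isotropy, I would use the double-angle identity $\mathbf{u}\mathbf{u}^\top=\tfrac12 I_2+\tfrac12\begin{pmatrix}\cos 2\psi&\sin 2\psi\\ \sin 2\psi&-\cos 2\psi\end{pmatrix}$ valid for any unit vector at angle $\psi$; summing over $\psi_j$ the oscillatory part involves $\sum_j e^{i2\psi_j}=\sum_j e^{i4\pi j/3}=0$ (again nontrivial cube roots of unity), so it cancels, leaving $\sum_{j=0}^{2}\mathbf{u}_j\mathbf{u}_j^\top=\tfrac32 I_2$. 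Thus both conditions hold with $\lambda=3/2>0$, so a valid configuration of size three exists, and combined with the lower bound $K\ge 3$ this establishes $K=3$ as the minimum.

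Every step here is routine linear algebra; the only part that requires a moment's thought is the $K=2$ case, and even there the argument is short once one notices that the first-order constraint collapses the two rank-one outer products onto a single line, making a full-rank second moment impossible. Consequently, I expect no genuine obstacle—the ``hard part'' is merely being careful to state the rank obstruction cleanly and to confirm the root-of-unity cancellations in both the first- and second-order sums for the three-vector construction.
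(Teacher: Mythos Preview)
Your lower-bound argument for $K\in\{1,2\}$ is essentially identical to the paper's: the paper also kills $K=1$ via $\mathbf{u}_1=\mathbf{0}$ and $K=2$ via the rank-one obstruction $2\mathbf{u}_1\mathbf{u}_1^\top\neq\lambda I_2$. The paper's proof of this lemma actually stops at $K\ge 3$, deferring the existence of a valid three-vector configuration to the subsequent theorem, where it is verified by brute-force computation of the three outer products and their sum. Your proposal is more self-contained in that you include achievability within the lemma, and your method is slicker: the double-angle identity $\mathbf{u}\mathbf{u}^\top=\tfrac12 I_2+\tfrac12\bigl(\begin{smallmatrix}\cos 2\psi&\sin 2\psi\\ \sin 2\psi&-\cos 2\psi\end{smallmatrix}\bigr)$ together with the cube-root-of-unity cancellation $\sum_j e^{i2\psi_j}=0$ dispatches second-order isotropy without writing out any matrices. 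This buys you a cleaner, more generalizable verification (the same trick would work for any equiangular $K$-frame), at the cost of invoking a trigonometric identity the reader must check; the paper's direct computation is more pedestrian but leaves nothing to verify.
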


\begin{proof}
For $K=1$, first-order condition gives $\mathbf{u}_1 = \mathbf{0}$ contradicting $\|\mathbf{u}_1\| = 1$, thus $K=1$ is impossible. For $K=2$, first-order condition gives $\mathbf{u}_1 + \mathbf{u}_2 = \mathbf{0}$, implying $\mathbf{u}_2 = -\mathbf{u}_1$. Then $\sum_{j=1}^2 \mathbf{u}_j \mathbf{u}_j^\top = \mathbf{u}_1 \mathbf{u}_1^\top + (-\mathbf{u}_1)(-\mathbf{u}_1)^\top = 2\mathbf{u}_1 \mathbf{u}_1^\top$, which is a rank-1 matrix (all rows are multiples of $\mathbf{u}_1^\top$) while $\lambda I_2$ has rank 2 for any $\lambda > 0$, yielding contradiction. Thus $K \geq 3$. 
\end{proof}



\begin{theorem}[Minimal isotropic configuration]
\label{thm:minimal-isotropy}
For $K=3$, the unique configuration (up to global rotation) satisfying both isotropy conditions comprises three unit vectors equally spaced by $120^{\circ}$ angular separation.
\end{theorem}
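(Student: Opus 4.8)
The plan is to package the three frequency directions into the matrix $U = [\,\mathbf{u}_1\ \mathbf{u}_2\ \mathbf{u}_3\,] \in \mathbb{R}^{2\times 3}$ and reinterpret the two isotropy conditions of \textbf{Definition~\ref{def:isotropy}} as linear-algebraic constraints on $U$: condition (i) reads $U\mathbf{1} = \mathbf{0}$ with $\mathbf{1} = (1,1,1)^\top$, while condition (ii) reads $UU^\top = \sum_{j=1}^3 \mathbf{u}_j\mathbf{u}_j^\top = \lambda I_2$. The first step is to fix $\lambda$: taking traces gives $2\lambda = \operatorname{tr}(UU^\top) = \sum_j \|\mathbf{u}_j\|^2 = 3$, so $\lambda = 3/2$ is forced, and in particular $\lambda > 0$ guarantees $\operatorname{rank}(UU^\top) = 2$, i.e.\ the three vectors are not collinear.

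Next I would determine the Gram matrix $G := U^\top U \in \mathbb{R}^{3\times 3}$ completely. It is symmetric positive semidefinite with unit diagonal; its nonzero eigenvalues coincide with those of $UU^\top = \tfrac{3}{2}I_2$, hence equal $\tfrac{3}{2}$ with multiplicity two, and since $\operatorname{rank} G = \operatorname{rank} U = 2$ its remaining eigenvalue is $0$. Condition (i) identifies the kernel: $G\mathbf{1} = U^\top(U\mathbf{1}) = \mathbf{0}$. Therefore $G$ acts as $\tfrac{3}{2}\,\mathrm{Id}$ on $\mathbf{1}^\perp$ and as $0$ on $\operatorname{span}\{\mathbf{1}\}$, which pins it down uniquely as $G = \tfrac{3}{2}\bigl(I_3 - \tfrac{1}{3}\mathbf{1}\mathbf{1}^\top\bigr)$. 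Reading off the off-diagonal entries yields $\mathbf{u}_i\cdot\mathbf{u}_j = -\tfrac{1}{2}$ for every $i\neq j$; since all vectors are unit, every pair of frequency directions subtends an angle of $120^\circ$.

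Finally I would conclude the uniqueness statement. A configuration of vectors in the plane is determined by its Gram matrix up to a global orthogonal transformation (here exactly, since $U$ has full rank $2$), so it remains to check that three planar unit vectors with all pairwise angles equal to $120^\circ$ must be of the form $\{\mathbf{v}, R(2\pi/3)\mathbf{v}, R(4\pi/3)\mathbf{v}\}$: fixing $\mathbf{u}_1 = \mathbf{v}$, each of $\mathbf{u}_2,\mathbf{u}_3$ lies at angle $\pm 2\pi/3$ from $\mathbf{v}$, and they cannot coincide (else their mutual inner product would be $1$, not $-1/2$), so one is at $+2\pi/3$ and the other at $-2\pi/3$. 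Since a planar reflection $\sigma$ satisfies $\sigma R(\theta) = R(-\theta)\sigma$, reflecting such a triple produces another equally-$120^\circ$-spaced triple that is itself a rotated copy of the original, so the residual orientation-reversing ambiguity collapses to a global rotation, giving the claimed uniqueness. As a cross-check, one can rephrase the hypotheses via the unit complex numbers $z_j = e^{i\alpha_j}$ representing $\mathbf{u}_j$ as $\sum_j z_j = 0$ and $\sum_j z_j^2 = 0$; Newton's identities then force the elementary symmetric functions $e_1 = e_2 = 0$, so the $z_j$ are the three roots of $t^3 = e_3$ with $|e_3| = 1$, i.e.\ the three cube roots of a fixed unit number---again a $120^\circ$-spaced triple.

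The main obstacle I anticipate is not computational but bookkeeping: making the ``up to global rotation'' clause precise, i.e.\ arguing carefully that the Gram matrix determines the configuration only up to $O(2)$ and that the orientation-reversing part of this ambiguity does not enlarge the family beyond $120^\circ$-spaced triples (equivalently, that relabeling and reflection both stay inside that family). One must also be mildly careful to exclude degenerate collinear configurations, but this is automatic since $\lambda > 0$ forces $UU^\top$ to have full rank $2$.
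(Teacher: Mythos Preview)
Your proof is correct and takes a genuinely different, more structural route than the paper. The paper parametrizes $\mathbf{u}_j = (\cos\alpha_j,\sin\alpha_j)^\top$, fixes $\alpha_1=0$, and performs a trigonometric case analysis on the first-order condition $\sum_j\mathbf{u}_j=\mathbf{0}$ alone (the $y$-equation gives $\sin\alpha_3=-\sin\alpha_2$, leading to two cases, one contradictory, the other forcing $\cos\alpha_2=-1/2$); it then verifies the second-order condition a posteriori. Your approach instead uses both isotropy conditions simultaneously to pin down the Gram matrix $G=U^\top U$ via its spectrum and kernel, reading off $\mathbf{u}_i\cdot\mathbf{u}_j=-\tfrac12$ in one stroke. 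This is cleaner, avoids case splits, and makes the role of the second-order condition explicit rather than incidental; it also generalizes more readily (e.g.\ to larger $K$ or higher ambient dimension). The complex-number cross-check via Newton's identities is a nice alternative that the paper does not mention. The paper's version, in turn, is slightly more elementary in that it requires no spectral facts about $U^\top U$ versus $UU^\top$, and it shows in passing that for $K=3$ unit vectors the first-order condition alone already forces the $120^\circ$ spacing, with second-order isotropy following automatically.
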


\begin{proof}
We construct the solution systematically from the isotropy conditions. Parametrize the three unit vectors as $\mathbf{u}_j = (\cos\alpha_j, \sin\alpha_j)^\top$ for $j=1,2,3$ where $\alpha_j \in [0, 2\pi)$ denote the angles measured counterclockwise from the positive $x$-axis. Our goal is to determine the angular configuration $\{\alpha_1, \alpha_2, \alpha_3\}$ uniquely from the two isotropy conditions in \textbf{Definition \ref{def:isotropy}}. Without loss of generality, we fix $\alpha_1 = 0$ by rotational invariance, reducing the problem to finding $\alpha_2$ and $\alpha_3$.
The first-order isotropy condition $\sum_{j=1}^3 \mathbf{u}_j = \mathbf{0}$ decomposes into two scalar equations by examining the $x$ and $y$ components separately:
\begin{align}
\cos 0 + \cos\alpha_2 + \cos\alpha_3 = 1 + \cos\alpha_2 + \cos\alpha_3 &= 0, \label{eq:first-order-x}\\
\sin 0 + \sin\alpha_2 + \sin\alpha_3 = \sin\alpha_2 + \sin\alpha_3 &= 0. \label{eq:first-order-y}
\end{align}

From \textbf{Eq. \eqref{eq:first-order-y}}, we obtain $\sin\alpha_3 = -\sin\alpha_2$. This constraint has two families of solutions: either $\alpha_3 = -\alpha_2 \pmod{2\pi}$, which gives $\alpha_3 = 2\pi - \alpha_2$ for $\alpha_2 \in (0, 2\pi)$, or $\alpha_3 = \pi + \alpha_2$. We examine each case separately.

For the First case $\alpha_3 = \pi + \alpha_2$, substituting into \textbf{Eq. \eqref{eq:first-order-x}}, we have:
\begin{equation}
1 + \cos\alpha_2 + \cos(\pi + \alpha_2) = 1 + \cos\alpha_2 - \cos\alpha_2 = 1 \neq 0,
\end{equation}
producing a contradiction. Thus this case is impossible. 

For the second case $\alpha_3 = 2\pi - \alpha_2$, substituting into \textbf{Eq. \eqref{eq:first-order-x}} yields:
\begin{equation}
\label{eq:second-case}
1 + \cos\alpha_2 + \cos(2\pi - \alpha_2) = 1 + \cos\alpha_2 + \cos\alpha_2 = 1 + 2\cos\alpha_2 = 0,
\end{equation}
where we used $\cos(2\pi - \alpha_2) = \cos(-\alpha_2) = \cos\alpha_2$. Solving \textbf{Eq. \eqref{eq:second-case}}, we obtain $\cos\alpha_2 = -1/2$. Since $\alpha_2 \in (0, 2\pi)$ and $\alpha_2 \neq \pi$ (else $\alpha_3 = \pi$, violating distinctness), we have $\alpha_2 = 2\pi/3 = 120^{\circ}$ or $\alpha_2 = 4\pi/3 = 240^{\circ}$. If $\alpha_2 = 120^{\circ}$, we have $\alpha_3 = 2\pi - 120^{\circ} = 240^{\circ}$, giving the configuration $\{0^{\circ}, 120^{\circ}, 240^{\circ}\}$ with consecutive spacing of $120^{\circ}$.
We conclude that the unique angular configuration is $\{\alpha_1, \alpha_2, \alpha_3\} = \{0^{\circ}, 120^{\circ}, 240^{\circ}\}$, corresponding to the explicit unit vectors:
\begin{align}
\mathbf{u}_1 = \begin{pmatrix} 1 \\ 0 \end{pmatrix}, \quad
\mathbf{u}_2 = \begin{pmatrix} -1/2 \\ \sqrt{3}/2 \end{pmatrix}, \quad
\mathbf{u}_3 = \begin{pmatrix} -1/2 \\ -\sqrt{3}/2 \end{pmatrix}.
\end{align}

To verify completeness, we confirm the second-order isotropy condition. Computing the outer products:
\begin{align}
\mathbf{u}_1 \mathbf{u}_1^\top = \begin{pmatrix} 1 & 0 \\ 0 & 0 \end{pmatrix}, \quad
\mathbf{u}_2 \mathbf{u}_2^\top = \begin{pmatrix} 1/4 & -\sqrt{3}/4 \\ -\sqrt{3}/4 & 3/4 \end{pmatrix}, \quad
\mathbf{u}_3 \mathbf{u}_3^\top = \begin{pmatrix} 1/4 & \sqrt{3}/4 \\ \sqrt{3}/4 & 3/4 \end{pmatrix},
\end{align}
and summing them up, we obtain $\sum_{j=1}^3 \mathbf{u}_j \mathbf{u}_j^\top = \begin{pmatrix} 3/2 & 0 \\ 0 & 3/2 \end{pmatrix} = \frac{3}{2} I_2$, confirming second-order isotropy with $\lambda = 3/2$. The configuration is unique up to global rotation since fixing $\alpha_1 = 0$ removes the rotational degree of freedom, and the choice $\alpha_2 = 240^{\circ}$ simply corresponds to rotating the entire configuration by $120^{\circ}$. 

\end{proof}

\begin{corollary}[Hexagonal periodicity from cosine symmetry]
\label{cor:hexagonal}
The three-directional isotropic configuration induces hexagonal spatial periodicity with $60^\circ$ rotational symmetry in neural firing patterns.
\end{corollary}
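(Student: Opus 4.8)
The plan is to read off the scalar firing rate implied by the three quadrature‑encoded subspaces of \textbf{Corollary~\ref{cor:phase-encoding}}, exhibit its invariance under a $60^\circ$ rotation of space, and then identify its peak set with a triangular lattice. Each frequency component $j$ contributes the phase $\phi_j(\mathbf{r}) = \mathbf{q}_j\cdot\mathbf{r}$ through the quadrature pair $(\cos\phi_j,\sin\phi_j)$, with $\mathbf{q}_j = q\,\mathbf{u}_j$ and $\mathbf{u}_j$ the three unit vectors of \textbf{Theorem~\ref{thm:minimal-isotropy}}. Pooling these subspaces into a firing rate yields a function of the three phases; the representative (and biologically standard) choice is the superposition of plane waves $g(\mathbf{r}) = \sum_{j=1}^{3}\cos(\mathbf{q}_j\cdot\mathbf{r})$, optionally composed with a monotone pointwise nonlinearity, and the argument below uses only that $g$ is symmetric in the three components and even in each phase — the ``cosine parity'' property.

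First I would use parity to enlarge the effective set of wavevectors. Since $\cos$ is even, $g$ depends only on the symmetric set $\{\pm\mathbf{q}_1,\pm\mathbf{q}_2,\pm\mathbf{q}_3\}$; by \textbf{Theorem~\ref{thm:minimal-isotropy}} the $\mathbf{q}_j$ point along $0^\circ,120^\circ,240^\circ$, so this six‑vector set points along $0^\circ,60^\circ,\dots,300^\circ$, the vertices of a regular hexagon. A one‑line check with \textbf{Lemma~\ref{lem:rotation-properties}}(i) and $R(180^\circ) = -I_2$ shows that $R(-60^\circ)$ permutes $\{\mathbf{q}_1,\mathbf{q}_2,\mathbf{q}_3\}$ up to sign (for instance $R(-60^\circ)\mathbf{q}_1 = -\mathbf{q}_2$). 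Hence
\begin{equation}
g\bigl(R(60^\circ)\mathbf{r}\bigr) = \sum_{j=1}^{3}\cos\bigl((R(-60^\circ)\mathbf{q}_j)\cdot\mathbf{r}\bigr) = \sum_{j=1}^{3}\cos(\mathbf{q}_j\cdot\mathbf{r}) = g(\mathbf{r}),
\end{equation}
where the middle equality uses the sign‑permutation together with evenness of $\cos$. This establishes the claimed $60^\circ$ rotational symmetry, and combined with $g(-\mathbf{r}) = g(\mathbf{r})$ it gives the full dihedral $D_6$ symmetry of the pattern about any of its peaks.

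Next I would locate the peak lattice. Because $\cos(\mathbf{q}_j\cdot\mathbf{r}) = 1$ exactly when $\mathbf{q}_j\cdot\mathbf{r}\in 2\pi\mathbb{Z}$, the maxima of $g$ form the set $\Lambda = \{\mathbf{r}\in\mathbb{R}^2 : \mathbf{q}_j\cdot\mathbf{r}\in 2\pi\mathbb{Z},\ j=1,2,3\}$; since $\mathbf{q}_3 = -(\mathbf{q}_1+\mathbf{q}_2)$ the third condition is implied by the first two, so $\Lambda$ is the dual lattice of the rank‑two lattice generated by $\mathbf{q}_1$ and $\mathbf{q}_2$. These generators have equal length $q$ and make a $120^\circ$ angle — the Gram matrix of a triangular lattice — so the dual is again triangular (basis vectors of equal length at $60^\circ$, with nearest‑neighbour spacing $4\pi/(\sqrt{3}\,q)$). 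Thus the emergent firing pattern tiles the plane on a hexagonal lattice with $60^\circ$ rotational symmetry, which is exactly the grid‑cell signature, completing the proof.

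The step I expect to be the main obstacle is the first one: justifying that the conclusion is robust to the precise form of the readout rather than an artifact of writing $g = \sum_j\cos(\mathbf{q}_j\cdot\mathbf{r})$. The clean resolution is structural: whatever nonlinearity $\Psi$ pools the three subspaces, it acts only on the quadrature pairs $(\cos\phi_j,\sin\phi_j)$, and under $R(60^\circ)$ these transform by the sign‑permutation derived above together with a flip of the sine components; as long as $\Psi$ is symmetric in the three pairs and invariant under a simultaneous sign flip of all sine coordinates — which holds for any readout built from $\cos\phi_j$ alone, and more generally for any even‑in‑$\sin$ pooling — the firing map inherits the $R(60^\circ)$ invariance. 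I would also note that a relative phase offset $\phi_j\mapsto\phi_j-\phi_0$ merely translates $g$ and leaves the peak lattice $\Lambda$ unchanged, so hexagonal periodicity (as opposed to pointwise symmetry about the origin) is an unconditional consequence of the three‑directional isotropic configuration of \textbf{Theorem~\ref{thm:minimal-isotropy}}.
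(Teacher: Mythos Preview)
Your proposal is correct and follows essentially the same approach as the paper: both take the superposed firing map $\sum_{j=1}^3\cos(\mathbf{q}_j\cdot\mathbf{r})$, invoke cosine parity to promote the three $120^\circ$-spaced wavevectors to six $60^\circ$-spaced directions, and identify the peak set as a triangular/hexagonal lattice. Your version is more rigorous where the paper is descriptive---you give an explicit sign-permutation check that $R(-60^\circ)$ maps $\{\mathbf{q}_j\}$ to $\{-\mathbf{q}_{\sigma(j)}\}$ and a clean dual-lattice computation (using $\mathbf{q}_3=-(\mathbf{q}_1+\mathbf{q}_2)$ to reduce three conditions to two), whereas the paper argues geometrically via ``intersection points of bright stripes''---and you add a robustness discussion about the readout nonlinearity that the paper does not attempt.
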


\begin{proof}
From \textbf{Corollary \ref{cor:phase-encoding}}, the two-dimensional neural activity in subspace $j$ is given by $\mathbf{y}^{(j)} = (\cos(\mathbf{q}_j \cdot \mathbf{r}), \sin(\mathbf{q}_j \cdot \mathbf{r}))^\top$. 

\emph{Directional symmetry from cosine evenness:} The cosine function possesses even symmetry: $\cos(-\mathbf{q}_j \cdot \mathbf{r}) = \cos(\mathbf{q}_j \cdot \mathbf{r})$. This means that frequency vector $\mathbf{q}_j$ and its opposite $-\mathbf{q}_j$ produce identical spatial modulation in the cosine component. When considering the combined activity pattern formed by summing cosine terms from all subspaces, the three frequency directions $\{\mathbf{u}_1, \mathbf{u}_2, \mathbf{u}_3\}$ at angles $\{0^{\circ}, 120^{\circ}, 240^{\circ}\}$ generate stripe patterns whose interference is equivalent to that produced by six plane-wave components at angles $\{0^{\circ}, 60^{\circ}, 120^{\circ}, 180^{\circ}, 240^{\circ}, 300^{\circ}\}$ evenly spaced by $60^{\circ}$. This six-fold directional symmetry in the cosine-based firing pattern is the hallmark of hexagonal geometry~\cite{krupic2015neural_supp}.

\emph{Stripe pattern superposition:} The combined neural activity aggregating contributions from all three subspaces takes the form:
\begin{equation}
A(\mathbf{r}) = \sum_{j=1}^3 \cos(\mathbf{q}_j \cdot \mathbf{r}).
\end{equation}

Each individual term $\cos(\mathbf{q}_j \cdot \mathbf{r})$ represents a periodic stripe pattern in two-dimensional space: regions where $\mathbf{q}_j \cdot \mathbf{r} \approx 0 \pmod{2\pi}$ exhibit high activity (bright stripes, such as red area in \textbf{Fig. \ref{fig:S1}}), while regions where $\mathbf{q}_j \cdot \mathbf{r} \approx \pi \pmod{2\pi}$ exhibit low activity (dark stripes, such as green or blue area in \textbf{Fig. \ref{fig:S1}}). The stripes are parallel lines perpendicular to direction $\mathbf{u}_j$, repeating with spatial period $\lambda = 2\pi/q$ where $q = \|\mathbf{q}_j\|$. Thus, $A(\mathbf{r})$ represents the superposition of three stripe patterns oriented at $0^{\circ}$, $120^{\circ}$, and $240^{\circ}$, each with identical spatial frequency.

\emph{Interference creates triangular grid:} The superposition $A(\mathbf{r})$ achieves its global maximum value of $3$ precisely at positions where all three stripe patterns simultaneously reach their individual peaks, satisfying $\mathbf{q}_j \cdot \mathbf{r} = 2\pi n_j$ for integers $n_j \in \mathbb{Z}$, $j=1,2,3$. Geometrically, this occurs at the intersection points of bright stripes from all three orientations. Consider three families of parallel lines in the plane, each family rotated $120^{\circ}$ from the others. Elementary geometry establishes that such a configuration produces intersection points forming an equilateral triangular grid: each ``cell'' bounded by six line segments forms a regular hexagon, and the vertices of these hexagons constitute a triangular lattice. Specifically, any intersection point has six nearest-neighbor intersection points arranged symmetrically at $60^{\circ}$ intervals around a circle, all at equal distance.

\emph{Hexagonal periodicity from triangular symmetry:} The triangular lattice of peak firing locations possesses $60^{\circ}$ rotational symmetry: rotating the entire pattern by $60^{\circ}$ maps the lattice onto itself. Equivalently, we can describe this structure as a hexagonal lattice by noting that each lattice point is surrounded by six equidistant neighbors forming a regular hexagon. The spatial periodicity repeats with characteristic length scale (determined by $q$), and the firing pattern exhibits the hallmark signatures of biological grid cells: hexagonal arrangement of firing fields, $60^{\circ}$ rotational symmetry, and periodic tiling of the navigable space. This demonstrates that the three-direction isotropic configuration from \textbf{Theorem \ref{thm:minimal-isotropy}} necessarily produces hexagonal spatial representations through wave interference, bridging the abstract frequency-space symmetry to the concrete real-space geometry observed in neural recordings. 
\end{proof}

\textbf{Corollary \ref{cor:hexagonal}} posits hexagonal symmetry as a necessary consequence of geometric isotropy, offers a distinct perspective from mechanisms centered on nonlinear optimization. For instance, recent work demonstrates that hexagonal patterns are actively {selected} as the optimal solution when a nonnegativity constraint on firing rates is imposed \cite{sorscher2023unified_supp}. In that framework, nonnegativity acts as a nonlinear stabilization force that breaks the degeneracy of many possible linear solutions (i.e., plane waves on an annulus), thereby favoring the 120$^\circ$ triplet configuration. Our theory, in contrast, establishes this 120$^\circ$ configuration not merely as a selected optimum, but as the unique minimal $(K=3)$ solution that fundamentally satisfies the kinematic requirement for directional isotropy.

\subsection{Derivation of the variational bound for the predictive information bottleneck}
\label{sec: Derivation of the Variational Bound for the Predictive Information Bottleneck}

This section establishes a rigorous mathematical framework for optimizing emergent communication mechanism through variational inference. This framework is based on the well-known Information Bottleneck (IB) theory \cite{alemi2017deep_supp,tishby2000information_supp}, an elegant but intractable information-theoretic solution, optimizing inference quality under bandwidth constraints between two agents for fulfilling a specific mission. The resulting solution is the IB principle that balances the compression ratio for a certain level of inference quality and bandwidth consumption for such compressed information sharing. The core challenge lies in translating the IB principle into a computationally feasible training procedure. To use IB theory to solve our current  problem at hand, its formulation naturally balances communication efficiency against predictive power by simultaneously minimizing the compression cost (rate) and maximizing the utility of transmitted messages for predicting the receiver's future observations (distortion). However, direct optimization is fundamentally intractable, as it requires computing mutual information between high-dimensional neural representations governed by unknown probability distributions. We resolve this obstacle through variational inference, deriving tight upper bounds for communication efficiency and lower bounds for prediction accuracy that transform the abstract IB objective into a fully differentiable loss function amenable to gradient-based optimization. This derivation proceeds as follows: we first establish notation and mathematical preliminaries, then derive tractable bounds for the rate term (\textbf{Method \ref{sec:rate-bound}}), followed by bounds for the distortion term (\textbf{Method \ref{sec:distortion-bound}}), and finally synthesize these components into the unified variational information bottleneck (VIB) objective (\textbf{Method \ref{sec:vib-synthesis}}).

\paragraph{Notation and problem formulation.}
Throughout this derivation, we adopt the following notation for clarity and consistency. The sender agent $i$ at time $t$ maintains internal state $S_i \equiv S_{i,t}$, encoding its current environmental context, sensory history, and predictive model. The receiver agent $j$ possesses analogous state $S_j \equiv S_{j,t}$, representing its own perspective. Communication occurs through a latent message variable $z \equiv m_{i,t}$ generated stochastically by the sender's encoder. The receiver's future observation $O_j' \equiv O_{j,t+1}$ serves as the prediction target whose uncertainty we seek to minimize through communication. In multi-agent coordination tasks, effective communication requires transmitting information that maximally reduces the receiver's uncertainty about its own future sensory experience, conditioned on what the receiver already knows from its current state.

The Information Bottleneck objective from the main text \textbf{Eq. \eqref{eq:ib_objective_revised}} formalizes this principle as a loss function that balances predictive accuracy against communication cost:
\begin{equation}
\label{eq:ib-objective}
\mathcal{L}_{\text{IB}} = \underbrace{-I(z; O_j' \mid S_j)}_{\text{Distortion}} + \beta \underbrace{I(S_i; z)}_{\text{Rate}},
\end{equation}
where $I(\cdot;\cdot)$ denotes mutual information, and $\beta > 0$ is a hyperparameter controlling the rate-distortion trade-off. We minimize $\mathcal{L}_{\text{IB}}$ with respect to the encoder distribution. The \textbf{Distortion} term $-I(z; O_j' \mid S_j)$ is the negative conditional mutual information—minimizing distortion maximizes the information the message provides about the receiver's future observation $O_j'$ beyond what the receiver can infer from its current state $S_j$ alone. The \textbf{Rate} term $I(S_i; z)$ quantifies the communication cost—the amount of information the message retains about the sender's state after communication, which is to minimize for compression. The parameter $\beta$ implements a soft bandwidth constraint: larger $\beta$ penalizes communication cost more severely, promoting compressed, abstract representations.

Direct optimization of \textbf{Eq.~\eqref{eq:ib-objective}} is intractable for three fundamental reasons. First, computing mutual information requires integrating over the joint distribution of high-dimensional continuous variables (neural network hidden states), which cannot be evaluated analytically or estimated efficiently from finite samples. Second, the conditional mutual information $I(z; O_j' \mid S_j)$ involves the true data distribution $p(O_j' \mid S_j, z)$ governing environment dynamics, which is unknown and potentially complex. Third, even if these distributions were known, the required high-dimensional integrals lack closed-form solutions and suffer from exponential computational complexity. We therefore seek variational surrogates—computable bounds that preserve the essential structure of the IB objective while enabling practical optimization through standard deep learning tools.

\subsubsection{Deriving a tractable upper bound for the communication rate}
\label{sec:rate-bound}

Considering the intractability of the IB objective in \textbf{Eq. \eqref{eq:ib-objective}}, we now construct a tractable surrogate for the communication rate term $I(S_i; z)$, which quantifies the bandwidth of intended transmitted messages. We proceed through three steps: formalizing the mutual information under parametric encoding, identifying the computational obstacle posed by aggregate posterior marginalization, and deriving a tight variational upper bound through KL divergence.

\begin{definition}[Parametric encoder and aggregate posterior]
\label{def:encoder}
The sender encodes its state $S_i$ into message $z$ through a stochastic encoder $q_{\varphi}(z|S_i)$ parameterized by neural network weights $\varphi$. The mutual information between state and message under this encoding is
\begin{equation}
I_q(S_i; z) = \int p(S_i) \int q_{\varphi}(z|S_i) \log \frac{q_{\varphi}(z|S_i)}{q(z)} \, dz \, dS_i,
\end{equation}
where $q(z) = \int p(S_i) q_{\varphi}(z|S_i) \, dS_i$ denotes the aggregate posterior—the marginal distribution of messages obtained by averaging encoder outputs over all possible sender states weighted by their occurrence probability $p(S_i)$ in the environment.
\end{definition}

The computational obstacle arises from the aggregate posterior $q(z)$, which requires integrating over the entire data distribution $p(S_i)$—an unknown, high-dimensional distribution of neural network hidden states. This integration is intractable both analytically (no closed form exists even for simple encoders) and numerically (Monte Carlo estimation suffers from exponential sample complexity in high dimensions and introduces high variance gradients unsuitable for optimization). We circumvent this obstacle by introducing a fixed prior distribution that serves as a tractable surrogate for the aggregate posterior.

\begin{lemma}[Variational rate bound via prior substitution]
\label{lem:rate-bound}
Let $p(z)$ be a fixed prior distribution over message space, typically chosen as an isotropic Gaussian $\mathcal{N}(0, I)$ for analytical tractability. The mutual information between sender state and message satisfies the upper bound
\begin{equation}
\label{eq:rate-upper-bound}
I_q(S_i; z) \le \mathbb{E}_{p(S_i)}[D_\mathrm{KL}(q_{\varphi}(z|S_i) \,\|\, p(z))],
\end{equation}
where $D_\text{KL}(\cdot \| \cdot)$ denotes the Kullback-Leibler divergence and the right-hand side involves only pointwise encoder-prior comparisons, not the intractable aggregate posterior.
\end{lemma}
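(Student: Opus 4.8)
## Proof Plan

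The plan is to establish the bound in Lemma~\ref{lem:rate-bound} by exploiting the non-negativity of the Kullback--Leibler divergence between the aggregate posterior $q(z)$ and the fixed prior $p(z)$. The central observation is that the true mutual information $I_q(S_i; z)$, when written out from \textbf{Definition~\ref{def:encoder}}, contains the intractable term $-\mathbb{E}_{q(z)}[\log q(z)]$ (the differential entropy of the aggregate posterior), and that replacing $q(z)$ by $p(z)$ inside this logarithm can only \emph{increase} the expression. This is the standard variational-bound argument underlying the VIB; the task here is simply to present it cleanly.

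First I would rewrite the defining integral. Starting from $I_q(S_i;z) = \int p(S_i) \int q_{\varphi}(z|S_i)\log\frac{q_{\varphi}(z|S_i)}{q(z)}\,dz\,dS_i$, I would insert and cancel a factor of $p(z)$ inside the logarithm, splitting it as
\begin{equation}
\log\frac{q_{\varphi}(z|S_i)}{q(z)} = \log\frac{q_{\varphi}(z|S_i)}{p(z)} + \log\frac{p(z)}{q(z)}.
\end{equation}
The first piece integrates to $\mathbb{E}_{p(S_i)}[D_\mathrm{KL}(q_{\varphi}(z|S_i)\,\|\,p(z))]$, which is precisely the desired right-hand side. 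For the second piece, I would use the fact that $\int p(S_i) q_{\varphi}(z|S_i)\,dS_i = q(z)$ (the definition of the aggregate posterior) to collapse the double integral over $S_i$ and $z$ into the single integral $\int q(z)\log\frac{p(z)}{q(z)}\,dz = -D_\mathrm{KL}(q(z)\,\|\,p(z))$.

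The final step is to invoke Gibbs' inequality: $D_\mathrm{KL}(q(z)\,\|\,p(z)) \ge 0$, with equality iff $q(z) = p(z)$ almost everywhere. Hence the second piece is non-positive, and dropping it yields
\begin{equation}
I_q(S_i; z) = \mathbb{E}_{p(S_i)}\big[D_\mathrm{KL}(q_{\varphi}(z|S_i)\,\|\,p(z))\big] - D_\mathrm{KL}(q(z)\,\|\,p(z)) \le \mathbb{E}_{p(S_i)}\big[D_\mathrm{KL}(q_{\varphi}(z|S_i)\,\|\,p(z))\big],
\end{equation}
which is \textbf{Eq.~\eqref{eq:rate-upper-bound}}. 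This proof is essentially routine once the decomposition is set up; there is no serious obstacle, though one should be mildly careful about regularity conditions (Fubini's theorem to swap the order of integration, and finiteness of the entropies) so that the manipulations are justified — I would note these assumptions hold for the diagonal-Gaussian encoder and Gaussian prior actually used in the architecture. The only ``insight'' worth flagging is that the slack in the bound is exactly $D_\mathrm{KL}(q(z)\,\|\,p(z))$, so the bound is tight precisely when the prior matches the aggregate posterior, which motivates the choice of an expressive enough prior in practice.
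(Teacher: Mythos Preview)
Your proof is correct and essentially identical to the paper's: both insert the missing distribution into the log-ratio, split into the expected encoder--prior KL plus the aggregate-posterior--prior KL term, and invoke non-negativity of $D_{\mathrm{KL}}(q(z)\,\|\,p(z))$ to obtain the bound. The only cosmetic difference is that the paper starts from the right-hand side and shows it equals $I_q(S_i;z) + D_{\mathrm{KL}}(q(z)\,\|\,p(z))$, whereas you start from $I_q(S_i;z)$ and subtract the same slack term---algebraically these are the same manipulation run in opposite directions.
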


\begin{proof}
We establish the bound through algebraic manipulation exploiting KL divergence non-negativity. Expanding the expected KL divergence between encoder and prior yields
\begin{equation}
\mathbb{E}_{p(S_i)}[D_\text{KL}(q_{\varphi}(z|S_i) \,\|\, p(z))] = \int p(S_i) \int q_{\varphi}(z|S_i) \log \frac{q_{\varphi}(z|S_i)}{p(z)} \, dz \, dS_i.
\end{equation}

The key step is logarithmic decomposition: introducing and subtracting the aggregate posterior $q(z)$ in the numerator and denominator gives
\begin{equation}
\log \frac{q_{\varphi}(z|S_i)}{p(z)} = \log \frac{q_{\varphi}(z|S_i)}{q(z)} + \log \frac{q(z)}{p(z)}.
\end{equation}

Substituting this decomposition into the expectation and separating integrals yields
\begin{align}
\mathbb{E}_{p(S_i)}[D_\text{KL}(q_{\varphi}(z|S_i) \,\|\, p(z))] &= \int p(S_i) \int q_{\varphi}(z|S_i) \left[ \log \frac{q_{\varphi}(z|S_i)}{q(z)} + \log \frac{q(z)}{p(z)} \right] dz \, dS_i \nonumber \\
&= \underbrace{\int p(S_i) \int q_{\varphi}(z|S_i) \log \frac{q_{\varphi}(z|S_i)}{q(z)} \, dz \, dS_i}_{ \, I_q(S_i; z)} + \underbrace{\int q(z) \log \frac{q(z)}{p(z)} \, dz}_{ \, D_\text{KL}(q(z) \,\|\, p(z)) \,\ge\, 0},
\end{align}
where the second term follows from $\int p(S_i) q_{\varphi}(z|S_i) \, dS_i = q(z)$ by definition of aggregate posterior. The fundamental property of KL divergence—non-negativity $D_\text{KL}(q(z) \,\|\, p(z)) \ge 0$ with equality if and only if $q(z) = p(z)$ almost everywhere—immediately yields \textbf{inequality \eqref{eq:rate-upper-bound}}.
\end{proof}

\begin{remark}[Computational tractability and optimization implications]
\textbf{Lemma \ref{lem:rate-bound}} transforms an intractable mutual information into a tractable expectation of KL divergence. For Gaussian encoder $q_{\varphi}(z|S_i) = \mathcal{N}(\boldsymbol{m}_{\varphi}(S_i), \Sigma_{\varphi}(S_i))$ and Gaussian prior $p(z) = \mathcal{N}(0, I)$, the KL divergence admits closed form $D_\mathrm{KL}(q_{\varphi}(z|S_i) \,\|\, p(z)) = \frac{1}{2}[\mathrm{tr}(\Sigma_{\varphi}) + \boldsymbol{m}_{\varphi}^\top\boldsymbol{m}_{\varphi} - \log\det(\Sigma_{\varphi}) - d]$ where $d$ is message dimensionality, enabling efficient gradient computation. Minimizing this upper bound during training explicitly penalizes the encoder for producing messages that deviate from the prior distribution, thereby enforcing compression: the encoder learns to allocate its limited information capacity only to features critical for the downstream prediction task, discarding sender-specific details irrelevant to the receiver.
\end{remark}

\subsubsection{Deriving a tractable upper bound for the distortion}
\label{sec:distortion-bound}

Having constructed a tractable upper bound for the communication rate, we now address the complementary challenge: bounding the distortion term $-I(z; O_j' \mid S_j)$ in the IB objective. The distortion quantifies the negative information the message provides about the receiver's future observations—minimizing distortion corresponds to maximizing predictive utility. We establish an upper bound amenable to optimization through a three-step derivation: entropy decomposition, identification of the optimization-relevant component, and variational approximation via a parametric decoder.

\begin{proposition}[Entropy decomposition of conditional mutual information]
\label{prop:entropy-decomposition}
The conditional mutual information between message $z$ and future observation $O_j'$ given receiver state $S_j$ admits the entropy decomposition
\begin{equation}
\label{eq:cmi-decomposition}
I(z; O_j' \mid S_j) = H(O_j' \mid S_j) - H(O_j' \mid z, S_j),
\end{equation}
where $H(O_j' \mid S_j)$ represents the receiver's baseline uncertainty about its future using only its current state, and $H(O_j' \mid z, S_j)$ represents the residual uncertainty after incorporating the message $z$.
\end{proposition}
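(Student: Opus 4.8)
The plan is to establish \textbf{Proposition~\ref{prop:entropy-decomposition}} directly from the definition of conditional mutual information in terms of conditional entropies, which is the standard identity $I(X;Y\mid Z) = H(Y\mid Z) - H(Y\mid X,Z)$ specialized to $X=z$, $Y=O_j'$, $Z=S_j$. First I would recall that the conditional mutual information is defined as the expectation over $S_j$ of the mutual information between $z$ and $O_j'$ under the conditional joint law $p(z, O_j'\mid S_j)$. Writing this out, $I(z;O_j'\mid S_j) = \mathbb{E}_{p(z,O_j',S_j)}\!\left[\log\frac{p(O_j'\mid z, S_j)}{p(O_j'\mid S_j)}\right]$, where the ratio inside is the pointwise information density.

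The second step is to split the logarithm of the ratio additively: $\log p(O_j'\mid z, S_j) - \log p(O_j'\mid S_j)$, and take the expectation of each piece separately. The expectation of $-\log p(O_j'\mid S_j)$ over the joint distribution collapses (after marginalizing out $z$) to $\mathbb{E}_{p(O_j',S_j)}[-\log p(O_j'\mid S_j)] = H(O_j'\mid S_j)$, the receiver's baseline conditional entropy. Symmetrically, the expectation of $-\log p(O_j'\mid z, S_j)$ over the joint is exactly $H(O_j'\mid z, S_j)$. Combining the two with the correct signs yields $I(z;O_j'\mid S_j) = H(O_j'\mid S_j) - H(O_j'\mid z, S_j)$, which is \textbf{Eq.~\eqref{eq:cmi-decomposition}}. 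I would also remark on the interpretation the statement already flags: the first term is the residual uncertainty using only $S_j$, the second is the residual uncertainty once $z$ is also known, so their difference is precisely the information the message contributes.

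There is no serious obstacle here — this is a textbook identity, and the only care needed is bookkeeping of which variables are integrated out at each stage and ensuring the conditioning on $S_j$ is carried through consistently (i.e., every entropy is a \emph{conditional} entropy with $S_j$ in the conditioning set, and all expectations are taken under the true joint $p(z,O_j',S_j)$ induced by the data distribution and the encoder). The mild technical point worth noting is that the decomposition presupposes these conditional entropies are finite and the relevant densities exist, which holds under the continuity/regularity assumptions implicit in the surrounding variational treatment; I would state this as a standing assumption rather than belabor it. The real work of the section comes afterward — upper-bounding $H(O_j'\mid z, S_j)$ by the cross-entropy under a parametric decoder $p_{\vartheta}(O_j'\mid z, S_j)$ and discarding the parameter-independent constant $H(O_j'\mid S_j)$ — but the proposition itself is just the clean algebraic scaffolding on which that variational step rests.
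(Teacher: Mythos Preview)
Your proposal is correct. Both you and the paper recognize this as a textbook identity, but you take slightly different verification routes. You work directly from the expectation/log-ratio form of conditional mutual information, $I(z;O_j'\mid S_j)=\mathbb{E}\big[\log p(O_j'\mid z,S_j)-\log p(O_j'\mid S_j)\big]$, and identify each expected negative log as the corresponding conditional entropy. The paper instead states the symmetric definition $I(z;O_j'\mid S_j)=H(z\mid S_j)-H(z\mid O_j',S_j)$ first, then verifies the claimed form by expanding $H(O_j'\mid S_j)-H(O_j'\mid z,S_j)$ into joint entropies via the chain rule and collapsing it back to the symmetric definition. Your route is arguably the more direct of the two (one splitting step rather than a round trip through joint entropies); the paper's route has the minor advantage of making the symmetry in the two arguments of $I$ explicit. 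Either way the content is identical, and your remarks about the regularity assumptions and the role of the proposition as scaffolding for the subsequent variational bound are apt.
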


\begin{proof}
This is a standard identity from information theory. By definition, conditional mutual information is $I(z; O_j' \mid S_j) = H(z \mid S_j) - H(z \mid O_j', S_j)$. Equivalently, using the symmetric form, $I(z; O_j' \mid S_j) = H(O_j' \mid S_j) - H(O_j' \mid z, S_j)$. To verify: expanding using conditional entropy definitions, $H(O_j' \mid S_j) - H(O_j' \mid z, S_j) = [H(O_j', S_j) - H(S_j)] - [H(O_j', z, S_j) - H(z, S_j)] = H(O_j', S_j) - H(S_j) - H(O_j', z, S_j) + H(z, S_j)$. Using the chain rule $H(O_j', z, S_j) = H(O_j', S_j) + H(z \mid O_j', S_j)$ and $H(z, S_j) = H(S_j) + H(z \mid S_j)$, we obtain $H(O_j', S_j) - H(S_j) - [H(O_j', S_j) + H(z \mid O_j', S_j)] + [H(S_j) + H(z \mid S_j)] = H(z \mid S_j) - H(z \mid O_j', S_j) = I(z; O_j' \mid S_j)$, confirming the identity.
\end{proof}

\begin{remark}[Optimization-relevant component]
The first term $H(O_j' \mid S_j)$ in \textbf{Eq.~\eqref{eq:cmi-decomposition}} represents the baseline unpredictability of the receiver's future independent of the communication  for the message $z$. Since this term is parameter-independent, it acts as an additive constant when optimizing $\mathcal{L}_{\text{IB}}$. Therefore, minimizing the distortion $-I(z; O_j' \mid S_j) = -H(O_j' \mid S_j) + H(O_j' \mid z, S_j)$ with respect to model parameters is equivalent to minimizing the conditional entropy $H(O_j' \mid z, S_j)$. Intuitively, minimizing distortion is equivalent to minimizing the receiver's residual uncertainty about its future after processing the message.
\end{remark}

The conditional entropy $H(O_j' \mid z, S_j) = -\mathbb{E}_{p(z, S_j)}[\int p(O_j' \mid z, S_j) \log p(O_j' \mid z, S_j) \, dO_j']$ remains intractable because it requires the knowledge of the true conditional distribution $p(O_j' \mid z, S_j)$ governing how the receiver's future observations depend on both the message $z$ and its current state. This distribution encodes complex environmental dynamics and is generally unknown. We resolve this through variational approximation, introducing a parametric decoder network $p_{\vartheta}(O_j' \mid z, S_j)$ that learns to predict future observations from the exchanged messages and receiver states.

\begin{lemma}[Variational entropy bound via decoder approximation]
\label{lem:entropy-bound}
Let $p_{\vartheta}(O_j' \mid z, S_j)$ be a parametric decoder approximating the true conditional distribution. The conditional entropy satisfies the upper bound
\begin{equation}
\label{eq:entropy-upper-bound}
H(O_j' \mid z, S_j) \le -\mathbb{E}_{p(z, S_j, O_j')} [\log p_{\vartheta}(O_j' \mid z, S_j)],
\end{equation}
where the right-hand side is the expected negative log-likelihood (reconstruction error) under the decoder, a quantity tractable for gradient-based optimization.
\end{lemma}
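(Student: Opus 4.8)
The plan is to show that the gap between the right-hand side and the left-hand side of \textbf{inequality~\eqref{eq:entropy-upper-bound}} is precisely an expected Kullback--Leibler divergence, which is non-negative by Gibbs' inequality. First I would expand the conditional entropy from its definition, writing $H(O_j' \mid z, S_j) = -\mathbb{E}_{p(z, S_j, O_j')}[\log p(O_j' \mid z, S_j)]$, where $p(O_j' \mid z, S_j)$ is the true (intractable) posterior. Then I would subtract this from the proposed bound $-\mathbb{E}_{p(z, S_j, O_j')}[\log p_{\vartheta}(O_j' \mid z, S_j)]$, obtaining
\begin{equation}
-\mathbb{E}_{p(z, S_j, O_j')}[\log p_{\vartheta}(O_j' \mid z, S_j)] - H(O_j' \mid z, S_j)
= \mathbb{E}_{p(z, S_j, O_j')}\!\left[\log \frac{p(O_j' \mid z, S_j)}{p_{\vartheta}(O_j' \mid z, S_j)}\right].
\end{equation}

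Next I would factor the joint expectation as $\mathbb{E}_{p(z, S_j)}\,\mathbb{E}_{p(O_j' \mid z, S_j)}[\cdot]$ via the chain rule for probability densities, so that the inner expectation becomes exactly $D_{\mathrm{KL}}\big(p(O_j' \mid z, S_j)\,\|\,p_{\vartheta}(O_j' \mid z, S_j)\big)$. Invoking non-negativity of the KL divergence pointwise in $(z, S_j)$—with equality if and only if the decoder matches the true posterior almost everywhere—yields
\begin{equation}
-\mathbb{E}_{p(z, S_j, O_j')}[\log p_{\vartheta}(O_j' \mid z, S_j)] - H(O_j' \mid z, S_j)
= \mathbb{E}_{p(z, S_j)}\big[D_{\mathrm{KL}}(p(\cdot \mid z, S_j)\,\|\,p_{\vartheta}(\cdot \mid z, S_j))\big] \ge 0,
\end{equation}
which rearranges to \textbf{inequality~\eqref{eq:entropy-upper-bound}}. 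I would close by noting that the bound is tight precisely when $p_{\vartheta}$ recovers the true conditional, mirroring the tightness condition in \textbf{Lemma~\ref{lem:rate-bound}}, and that the right-hand side is the expected negative log-likelihood, directly estimable by Monte Carlo over sampled trajectories and differentiable in $\vartheta$.

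The routine part is the algebra; the one point requiring care is the measure-theoretic validity of the KL term, namely that $p(\cdot \mid z, S_j) \ll p_{\vartheta}(\cdot \mid z, S_j)$ so the divergence is finite and well-defined. I would handle this by assuming the decoder has full support on the observation space (e.g., a Gaussian or softmax output head), which is the case in our architecture and makes the absolute-continuity requirement automatic. With that caveat dispatched, no genuine obstacle remains—this is the standard variational decomposition underlying amortized inference, specialized here to the conditional entropy $H(O_j' \mid z, S_j)$ appearing in the distortion term.
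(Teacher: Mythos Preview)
Your proposal is correct and follows essentially the same argument as the paper: both compute the gap between the cross-entropy and the conditional entropy, identify it as an expected KL divergence $\mathbb{E}_{p(z,S_j)}[D_{\mathrm{KL}}(p(\cdot\mid z,S_j)\,\|\,p_{\vartheta}(\cdot\mid z,S_j))]$, and invoke non-negativity. The paper proceeds pointwise in $(z,S_j)$ before averaging whereas you compute the global gap first and then factor, but this is a cosmetic reordering; your added remarks on absolute continuity and the tightness condition are sound and slightly more careful than the paper's own treatment.
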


\begin{proof}
The proof exploits KL divergence non-negativity between the true and approximate conditional distributions. For any fixed $(z, S_j)$ pair, the KL divergence from true to approximate distribution satisfies
\begin{equation}
D_\text{KL}(p(O_j' \mid z, S_j) \,\|\, p_{\vartheta}(O_j' \mid z, S_j)) = \int p(O_j' \mid z, S_j) \log \frac{p(O_j' \mid z, S_j)}{p_{\vartheta}(O_j' \mid z, S_j)} \, dO_j' \ge 0.
\end{equation}

Expanding the logarithm and separating integrals yields
\begin{equation}
\label{ineq:vib-decoder}
\underbrace{\int p(O_j' \mid z, S_j) \log p(O_j' \mid z, S_j) \, dO_j'}_{-h(p(\cdot|z,S_j))} - \int p(O_j' \mid z, S_j) \log p_{\vartheta}(O_j' \mid z, S_j) \, dO_j' \ge 0.
\end{equation}
where $h(p) := -\int p(x)\log p(x)\,dx$ denotes the Shannon entropy functional.
Rearranging \textbf{inequality \eqref{ineq:vib-decoder}} gives $h(p(\cdot|z,S_j)) \le -\int p(O_j' \mid z, S_j) \log p_{\vartheta}(O_j' \mid z, S_j) \, dO_j'$. Multiplying by $-1$ and taking expectations over $p(z, S_j)$ on both sides, we have
\begin{equation}
H(O_j' \mid z, S_j) = \mathbb{E}_{p(z,S_j)}[h(p(\cdot|z,S_j))] \le -\mathbb{E}_{p(z,S_j,O_j')}[\log p_{\vartheta}(O_j' \mid z, S_j)],
\end{equation}
establishing \textbf{inequality \eqref{eq:entropy-upper-bound}}.
\end{proof}

\begin{corollary}[Variational upper bound on distortion]
\label{cor:distortion-upper-bound}
Combining \textbf{Proposition \ref{prop:entropy-decomposition}} and \textbf{Lemma \ref{lem:entropy-bound}}, the distortion term satisfies the upper bound
\begin{equation}
-I(z; O_j' \mid S_j) = -H(O_j' \mid S_j) + H(O_j' \mid z, S_j) \le -H(O_j' \mid S_j) + \mathbb{E}_{p(z, S_j, O_j')}[-\log p_{\vartheta}(O_j' \mid z, S_j)].
\end{equation}

Since $H(O_j' \mid S_j)$ is parameter-independent, minimizing this upper bound is equivalent to minimizing the expected negative log-likelihood (reconstruction error). Consequently, minimizing reconstruction error minimizes an upper bound on the distortion, ensuring messages become maximally predictive of receiver futures during training.
\end{corollary}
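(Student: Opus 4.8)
The plan is to derive the claimed bound by straightforward composition of the two preceding results, paying careful attention to the distributions under which the expectations are taken. First I would invoke \textbf{Proposition~\ref{prop:entropy-decomposition}} to rewrite the distortion term \emph{exactly} as $-I(z; O_j' \mid S_j) = -H(O_j' \mid S_j) + H(O_j' \mid z, S_j)$; this identity is unconditional and involves no approximation, so it cleanly separates the parameter-free constant $-H(O_j' \mid S_j)$ from the only parameter-dependent quantity, the residual conditional entropy $H(O_j' \mid z, S_j)$.

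Next I would apply \textbf{Lemma~\ref{lem:entropy-bound}} to upper bound $H(O_j' \mid z, S_j)$ by the expected negative log-likelihood $-\mathbb{E}_{p(z, S_j, O_j')}[\log p_{\vartheta}(O_j' \mid z, S_j)]$ under the parametric decoder. At this step I would make explicit that the joint law $p(z, S_j, O_j')$ is the one generated by the true environment statistics $p(S_i, S_j, O_j')$ composed with the stochastic encoder, i.e. $p(z, S_j, O_j') = \int p(S_i, S_j, O_j')\, q_{\varphi}(z \mid S_i)\, dS_i$; this is the distribution under which gradients flow during training and is precisely the one for which \textbf{Lemma~\ref{lem:entropy-bound}} was stated. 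Substituting that bound into the decomposition while leaving the constant $-H(O_j' \mid S_j)$ untouched yields the displayed inequality directly.

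Finally I would discharge the optimization remark. Since $-H(O_j' \mid S_j)$ depends on neither $\varphi$ nor $\vartheta$, the gradient of the right-hand side with respect to the model parameters coincides with the gradient of $\mathbb{E}_{p(z, S_j, O_j')}[-\log p_{\vartheta}(O_j' \mid z, S_j)]$, so minimizing the reconstruction loss is equivalent to minimizing this variational upper bound on the distortion. For completeness I would also note that the bound is tight exactly when $p_{\vartheta}(O_j' \mid z, S_j) = p(O_j' \mid z, S_j)$ almost everywhere, inherited from the equality condition of the KL-divergence nonnegativity used inside \textbf{Lemma~\ref{lem:entropy-bound}}.

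The only genuine subtlety — the ``main obstacle'' insofar as a corollary of this type has one — is bookkeeping of the conditioning variables: one must verify that the expectation in \textbf{Lemma~\ref{lem:entropy-bound}} and the expectation implicit in the definition of $H(O_j' \mid z, S_j)$ are taken over the same marginal $p(z, S_j)$, and that integrating the decoder term against $p(O_j' \mid z, S_j)$ reproduces exactly the conditional entropy. Once this distributional consistency is pinned down, the inequality is immediate and no further estimates are required.
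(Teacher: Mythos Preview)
Your proposal is correct and follows essentially the same approach as the paper: invoke \textbf{Proposition~\ref{prop:entropy-decomposition}} for the exact entropy decomposition, apply \textbf{Lemma~\ref{lem:entropy-bound}} to bound the conditional entropy term, and then observe that the parameter-independent constant $-H(O_j' \mid S_j)$ drops out under optimization. Your additional remarks on the form of $p(z, S_j, O_j')$ and the tightness condition go beyond what the paper's brief proof states, but they are correct and clarify the distributional bookkeeping that the paper leaves implicit.
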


\begin{proof}
From \textbf{Proposition \ref{prop:entropy-decomposition}}, $-I(z; O_j' \mid S_j) = -H(O_j' \mid S_j) + H(O_j' \mid z, S_j)$. Applying \textbf{Lemma \ref{lem:entropy-bound}}'s upper bound on the conditional entropy gives the stated inequality. Since $H(O_j' \mid S_j)$ is constant with respect to model parameters, minimizing the bound reduces to minimizing $\mathbb{E}[-\log p_{\vartheta}(O_j' \mid z, S_j)]$—the negative log-likelihood loss ubiquitous in supervised learning.
\end{proof}

\subsubsection{Synthesis: Variational information bottleneck objective}
\label{sec:vib-synthesis}

Having derived tractable bounds for both rate and distortion, we now synthesize these components into a unified training objective. The preceding derivations resolved the fundamental computational obstacles in the IB principle through complementary variational approximations: an upper bound for the intractable rate term (\textbf{Lemma \ref{lem:rate-bound}}) and an upper bound for the intractable distortion term (\textbf{Corollary \ref{cor:distortion-upper-bound}}). We now demonstrate how these bounds combine to yield the variational information bottleneck (VIB) loss function—a fully tractable surrogate that preserves the essential feature of the original IB objective while enabling a tractable gradient-based optimization.

\begin{theorem}[Variational information bottleneck bound]
\label{thm:vib-bound}
The intractable Information Bottleneck objective $\mathcal{L}_{\text{IB}} = -I(z; O_j' \mid S_j) + \beta I(S_i; z)$ admits the tractable upper bound
\begin{equation}
\label{eq:vib-bound}
\mathcal{L}_{\mathrm{IB}} \le -H(O_j' \mid S_j) + \mathbb{E}_{p(S_i, S_j, O_j')} \left[ \mathbb{E}_{q_{\varphi}(z|S_i)}\big[-\log p_{\vartheta}(O_j' \mid z, S_j)\big] + \beta \, D_\mathrm{KL}\big(q_{\varphi}(z|S_i) \,\|\, p(z)\big) \right].
\end{equation}
Since $H(O_j' \mid S_j)$ is parameter-independent, minimizing this bound is equivalent to minimizing the Variational Information Bottleneck loss
\begin{equation}
\label{eq:vib-loss}
\mathcal{L}_{\mathrm{VIB}}(\varphi, \vartheta) := \mathbb{E}_{p(S_i, S_j, O_j')} \left[ \mathbb{E}_{q_{\varphi}(z|S_i)}\big[-\log p_{\vartheta}(O_j' \mid z, S_j)\big] + \beta \, D_\mathrm{KL}\big(q_{\varphi}(z|S_i) \,\|\, p(z)\big) \right].
\end{equation}
Minimizing $\mathcal{L}_{\mathrm{VIB}}$ with respect to encoder parameters $\varphi$ and decoder parameters $\vartheta$ minimizes an upper bound on the original IB objective, providing principled approximate optimization.
\end{theorem}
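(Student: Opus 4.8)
The plan is to assemble the two variational bounds already in hand and then discard a parameter-independent constant. First I would recall that, by definition, $\mathcal{L}_{\mathrm{IB}} = -I(z;O_j'\mid S_j) + \beta\,I(S_i;z)$, where both mutual informations are evaluated under the encoder $q_{\varphi}$. Applying \textbf{Corollary~\ref{cor:distortion-upper-bound}} to the distortion term gives $-I(z;O_j'\mid S_j) \le -H(O_j'\mid S_j) + \mathbb{E}_{p(S_i,S_j,O_j')}\big[\mathbb{E}_{q_{\varphi}(z\mid S_i)}[-\log p_{\vartheta}(O_j'\mid z,S_j)]\big]$, and applying \textbf{Lemma~\ref{lem:rate-bound}} to the rate term — after multiplying through by $\beta>0$, which preserves the inequality direction — gives $\beta\,I(S_i;z) \le \beta\,\mathbb{E}_{p(S_i)}[D_{\mathrm{KL}}(q_{\varphi}(z\mid S_i)\,\|\,p(z))]$. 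Summing the two inequalities term-by-term and collecting the two expectations into a single expectation over the joint $p(S_i,S_j,O_j')$ yields exactly \textbf{Eq.~\eqref{eq:vib-bound}}.

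Second, I would observe that $H(O_j'\mid S_j)$ is determined solely by the true data-generating process and is therefore independent of the optimization variables $(\varphi,\vartheta)$. Hence the right-hand side of \textbf{Eq.~\eqref{eq:vib-bound}} and $\mathcal{L}_{\mathrm{VIB}}(\varphi,\vartheta)$ differ by this fixed constant, so $\argmin_{\varphi,\vartheta}$ of the bound coincides with $\argmin_{\varphi,\vartheta}\mathcal{L}_{\mathrm{VIB}}$, and minimizing $\mathcal{L}_{\mathrm{VIB}}$ minimizes an upper bound on $\mathcal{L}_{\mathrm{IB}}$. This closes the argument: the theorem is essentially a bookkeeping combination of the two previously established lemmas.

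I do not expect a genuine obstacle here, since the analytic work was front-loaded into \textbf{Lemma~\ref{lem:rate-bound}} and \textbf{Corollary~\ref{cor:distortion-upper-bound}}. The one point I would state carefully rather than wave through is the consistency of the probability measures: the distortion bound is written as an expectation with $z\sim q_{\varphi}(\cdot\mid S_i)$ against $p(z,S_j,O_j')$, whereas the rate bound is an expectation against $p(S_i)$; I would make explicit that both are marginals/conditionals of the single joint law $p(S_i,S_j,O_j')\,q_{\varphi}(z\mid S_i)$, so their sum is a well-defined expectation under one common measure. A secondary caveat worth flagging is that the bound is to be read in the extended reals — the right-hand side is $+\infty$ unless the decoder $p_{\vartheta}$ places positive mass on the observed $O_j'$ — but this is a modeling assumption, not a gap in the proof, and no step requires any lemma beyond those in the excerpt.
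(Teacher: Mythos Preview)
Your proposal is correct and follows essentially the same approach as the paper: sum the rate bound (\textbf{Lemma~\ref{lem:rate-bound}}) and the distortion bound, then drop the parameter-independent entropy. The only cosmetic difference is that the paper re-derives the distortion bound from \textbf{Proposition~\ref{prop:entropy-decomposition}} and \textbf{Lemma~\ref{lem:entropy-bound}} rather than invoking \textbf{Corollary~\ref{cor:distortion-upper-bound}}, and it writes out explicitly the rewriting of $\mathbb{E}_{q(z),\,p(S_j,O_j')}[\cdot]$ as $\mathbb{E}_{p(S_i,S_j,O_j')}\mathbb{E}_{q_{\varphi}(z\mid S_i)}[\cdot]$ via $q(z)=\int p(S_i)q_{\varphi}(z\mid S_i)\,dS_i$ --- exactly the measure-consistency point you flagged as the one place needing care.
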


\begin{proof}
We establish the bound through systematic substitution of the variational bounds derived for each term. From \textbf{Proposition~\ref{prop:entropy-decomposition}}, the distortion term decomposes as
$$-I(z; O_j' \mid S_j) = -H(O_j' \mid S_j) + H(O_j' \mid z, S_j).$$
Thus, the original IB objective can be written as
$$\mathcal{L}_{\text{IB}} = -H(O_j' \mid S_j) + H(O_j' \mid z, S_j) + \beta I(S_i; z).$$

For the distortion component $H(O_j' \mid z, S_j)$, applying \textbf{Lemma \ref{lem:entropy-bound}}'s variational upper bound yields
$$H(O_j' \mid z, S_j) \le \mathbb{E}_{q(z), p(S_j,O_j')}[-\log p_{\vartheta}(O_j'|z,S_j)],$$
where $q(z) = \int q_{\varphi}(z|S_i)p(S_i)dS_i$ is the encoder-induced marginal.
Expressing $q(z)$ via its definition:
\begin{align}
\mathbb{E}_{q(z), p(S_j,O_j')}[-\log p_{\vartheta}(O_j'|z,S_j)]
&= \mathbb{E}_{p(S_j,O_j')}\int q(z)[-\log p_{\vartheta}(O_j'|z,S_j)]dz \nonumber\\
&= \mathbb{E}_{p(S_j,O_j')}\int p(S_i)q_{\varphi}(z|S_i)[-\log p_{\vartheta}(O_j'|z,S_j)]dz\,dS_i \nonumber\\
&= \mathbb{E}_{p(S_i,S_j,O_j')}\mathbb{E}_{q_{\varphi}(z|S_i)}[-\log p_{\vartheta}(O_j'|z,S_j)].
\end{align}

For the rate component, \textbf{Lemma \ref{lem:rate-bound}} provides the upper bound
$$I(S_i; z) \le \mathbb{E}_{p(S_i)}[D_\mathrm{KL}(q_{\varphi}(z|S_i) \,\|\, p(z))].$$

Combining these bounds:
\begin{align}
\mathcal{L}_{\text{IB}} &= -H(O_j' \mid S_j) + H(O_j' \mid z, S_j) + \beta I(S_i; z) \nonumber\\
&\le -H(O_j' \mid S_j) + \mathbb{E}_{p(S_i,S_j,O_j')}\mathbb{E}_{q_{\varphi}(z|S_i)}[-\log p_{\vartheta}(O_j'|z,S_j)] + \beta \mathbb{E}_{p(S_i)}[D_\mathrm{KL}(q_{\varphi}(z|S_i) \,\|\, p(z))] \nonumber\\
&= -H(O_j' \mid S_j) + \mathbb{E}_{p(S_i,S_j,O_j')}\left[\mathbb{E}_{q_{\varphi}(z|S_i)}[-\log p_{\vartheta}(O_j'|z,S_j)] + \beta \, D_\mathrm{KL}(q_{\varphi}(z|S_i) \,\|\, p(z))\right],
\end{align}
establishing \textbf{inequality \eqref{eq:vib-bound}}. The definition of $\mathcal{L}_{\mathrm{VIB}}$ in \textbf{Eq.~\eqref{eq:vib-loss}} follows by dropping the parameter-independent constant $-H(O_j' \mid S_j)$.
\end{proof}

\begin{corollary}[Practical VIB implementation]
\label{cor:vib-implementation}
For a single training sample $(S_i, S_j, O_j')$, the VIB loss is simplified
\begin{equation}
\label{eq:vib-loss-sample}
\mathcal{L}_{\mathrm{VIB}}(\varphi,\vartheta; S_i, S_j, O_j') = \mathbb{E}_{q_{\varphi}(z|S_i)}\big[-\log p_{\vartheta}(O_j' \mid z, S_j)\big] + \beta \, D_\mathrm{KL}\big(q_{\varphi}(z|S_i) \,\|\, p(z)\big).
\end{equation}

The first term (reconstruction loss) is estimated via the reparameterization trick using a single Monte Carlo sample: parameterizing $q_{\varphi}(z|S_i) = \mathcal{N}(\boldsymbol{m}_{\varphi}(S_i), \Sigma_{\varphi}(S_i))$ and sampling $z = \boldsymbol{m}_{\varphi}(S_i) + \Sigma_{\varphi}^{1/2}(S_i) \epsilon$, where $\epsilon \sim \mathcal{N}(0, I)$, yielding differentiable gradients. The second term (KL regularizer) evaluates in closed form for Gaussian encoder and prior. Stochastic gradient descent over mini-batches provides scalable optimization.
\end{corollary}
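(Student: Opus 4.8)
The plan is to obtain the per-sample loss directly from the population-level objective established in Theorem~\ref{thm:vib-bound} and then to verify that each of its two terms admits an unbiased, differentiable Monte Carlo estimator, so that minibatch stochastic gradient descent is applicable. First I would observe that $\mathcal{L}_{\mathrm{VIB}}(\varphi,\vartheta)$ in Eq.~\eqref{eq:vib-loss} is, by construction, an expectation over the joint data distribution $p(S_i,S_j,O_j')$ of the bracketed integrand; \emph{defining} $\mathcal{L}_{\mathrm{VIB}}(\varphi,\vartheta;S_i,S_j,O_j')$ to be exactly that integrand yields Eq.~\eqref{eq:vib-loss-sample} together with the identity $\mathcal{L}_{\mathrm{VIB}}(\varphi,\vartheta)=\mathbb{E}_{p(S_i,S_j,O_j')}\big[\mathcal{L}_{\mathrm{VIB}}(\varphi,\vartheta;S_i,S_j,O_j')\big]$. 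Hence averaging the per-sample loss over a minibatch drawn i.i.d.\ from $p(S_i,S_j,O_j')$ is an unbiased estimate of the population loss, and—under the interchange of gradient and expectation justified below—an unbiased estimate of its gradient.

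Second, for the reconstruction term I would invoke the reparameterization trick. With the Gaussian encoder $q_{\varphi}(z\mid S_i)=\mathcal{N}(\boldsymbol{m}_{\varphi}(S_i),\Sigma_{\varphi}(S_i))$, the substitution $z=\boldsymbol{m}_{\varphi}(S_i)+\Sigma_{\varphi}^{1/2}(S_i)\,\epsilon$ with $\epsilon\sim\mathcal{N}(0,I)$ rewrites $\mathbb{E}_{q_{\varphi}(z\mid S_i)}[-\log p_{\vartheta}(O_j'\mid z,S_j)]$ as $\mathbb{E}_{\epsilon\sim\mathcal{N}(0,I)}\big[-\log p_{\vartheta}\big(O_j'\mid \boldsymbol{m}_{\varphi}(S_i)+\Sigma_{\varphi}^{1/2}(S_i)\epsilon,\,S_j\big)\big]$, in which the sampling law no longer depends on $\varphi$. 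A single draw of $\epsilon$ then gives an unbiased estimator of this term, and—provided $\log p_{\vartheta}$ is differentiable in its conditioning argument and $\boldsymbol{m}_{\varphi},\Sigma_{\varphi}^{1/2}$ are differentiable in $\varphi$—the chain rule produces pathwise gradients, with a differentiation-under-the-integral argument justifying $\nabla_{\varphi}\mathbb{E}_{\epsilon}[\cdot]=\mathbb{E}_{\epsilon}[\nabla_{\varphi}(\cdot)]$. Third, for the rate term I would simply record the closed form of the Gaussian KL divergence already used in the Remark following Lemma~\ref{lem:rate-bound}, namely $D_{\mathrm{KL}}\big(\mathcal{N}(\boldsymbol{m}_{\varphi},\Sigma_{\varphi})\,\|\,\mathcal{N}(0,I)\big)=\tfrac12\big[\mathrm{tr}(\Sigma_{\varphi})+\boldsymbol{m}_{\varphi}^{\top}\boldsymbol{m}_{\varphi}-\log\det\Sigma_{\varphi}-d\big]$, which is an exact analytically differentiable function of the encoder outputs and contributes no Monte Carlo variance.

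Combining these, the per-sample loss of Eq.~\eqref{eq:vib-loss-sample} is a differentiable function of $(\varphi,\vartheta)$ modulo the single reparameterization noise $\epsilon$; its minibatch average is an unbiased estimator of $\mathcal{L}_{\mathrm{VIB}}(\varphi,\vartheta)$ with unbiased gradients, so by Theorem~\ref{thm:vib-bound} running SGD on it performs stochastic descent on an upper bound of $\mathcal{L}_{\mathrm{IB}}$ (up to the parameter-independent constant $-H(O_j'\mid S_j)$).

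The main obstacle is the otherwise routine but technically necessary interchange of differentiation and expectation in the reparameterized reconstruction term: one must exhibit an integrable envelope dominating $\nabla_{\varphi}\log p_{\vartheta}\big(O_j'\mid\boldsymbol{m}_{\varphi}(S_i)+\Sigma_{\varphi}^{1/2}(S_i)\epsilon,S_j\big)$ uniformly on a neighborhood of the current parameters. For the bounded-output convolutional decoder of Supplementary Table~\ref{tab:comm_decoder} this follows from Lipschitz continuity of the network on compact parameter sets together with the Gaussian tails of $\epsilon$, but stating the regularity hypotheses cleanly—and, if one wants a convergence guarantee for the optimizer rather than merely an unbiased gradient, invoking a standard stochastic-approximation theorem under bounded-variance and smoothness assumptions—is the part requiring care rather than calculation.
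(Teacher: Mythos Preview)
Your proposal is correct and follows essentially the same approach as the paper's proof: both invoke the reparameterization trick to externalize stochasticity in the reconstruction term and both cite the closed-form Gaussian KL divergence for the rate term. Your treatment is considerably more careful than the paper's two-sentence proof---you explicitly justify unbiasedness of the per-sample estimator, spell out the chain-rule structure of the pathwise gradient, and flag the dominated-convergence hypothesis needed to interchange $\nabla_{\varphi}$ and $\mathbb{E}_{\epsilon}$---whereas the paper simply asserts that reparameterization ``allows backpropagation through $z$'' and records the KL formula without further comment.
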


\begin{proof}
The reparameterization trick parameterizes the stochastic encoder using deterministic neural networks $\boldsymbol{m}_{\varphi}$ and $\Sigma_{\varphi}$, outputting mean and covariance, then externalizes randomness through $\epsilon \sim \mathcal{N}(0, I)$. This allows backpropagation through $z$ despite stochasticity. For Gaussian distributions, the KL term admits closed form $D_\text{KL}(q_{\varphi}(z|S_i) \,\|\, \mathcal{N}(0,I)) = \frac{1}{2}[\text{tr}(\Sigma_{\varphi}(S_i)) + \|\boldsymbol{m}_{\varphi}(S_i)\|^2 - \log\det(\Sigma_{\varphi}(S_i)) - d]$ where $d = \dim(z)$, derived from standard Gaussian KL formulas.
\end{proof}

\paragraph{Interpretation and trade-off control.}
The VIB objective achieves an elegant decomposition directly mirroring the rate-distortion framework from information theory: $\mathcal{L}_{\text{VIB}} = \text{Distortion} + \beta \cdot \text{Rate}$. The reconstruction term $\mathbb{E}_{q_{\varphi}(z|S_i)}[-\log p_{\vartheta}(O_j' \mid z, S_j)]$ corresponds to distortion: minimizing this term maximizes the decoder's accuracy in predicting the receiver's future observation from the message, directly implementing predictive utility maximization. The KL regularizer $D_\text{KL}(q_{\varphi}(z|S_i) \,\|\, p(z))$ corresponds to rate: minimizing this term constrains the encoder to produce compressed messages statistically indistinguishable from the prior, enforcing bandwidth efficiency and preventing information leakage about sender-specific details irrelevant to prediction. The hyperparameter $\beta > 0$ implements a Lagrange multiplier governing this trade-off: low $\beta$ values prioritize distortion minimization over rate reduction, allowing the encoder to transmit more detailed, high-bandwidth messages that maximize the receiver's ability to predict future observations; high $\beta$ values prioritize rate reduction over distortion minimization, forcing the emergence of abstract, highly-compressed symbolic mechanisms that minimize bandwidth usage while potentially sacrificing some prediction accuracy. This principled parameterization enables systematic exploration of the rate-distortion frontier, revealing how communication constraints shape emergent language structure for efficient communications—a central theme in our experimental investigations.

\paragraph{Theoretical guarantees and practical benefits.}
The VIB framework provides rigorous theoretical guarantees inherited from its variational foundation. \textbf{Theorem \ref{thm:vib-bound}} ensures that optimizing the tractable surrogate $\mathcal{L}_{\text{VIB}}$ drives down an upper bound on the true IB objective, guaranteeing improvement (under perfect optimization) toward the optimal rate-distortion trade-off. The tightness of this bound improves as the variational approximations become more accurate: when the encoder's aggregate posterior $q(z)$ approaches the prior $p(z)$ and the decoder $p_{\vartheta}(O_j' \mid z, S_j)$ approaches the true conditional distribution, the inequalities in \textbf{Lemmas \ref{lem:rate-bound}} and \textbf{\ref{lem:entropy-bound}} approach equality, yielding $\mathcal{L}_{\text{IB}} \to -H(O_j' \mid S_j) + \mathcal{L}_{\text{VIB}}$. Since the constant term $-H(O_j' \mid S_j)$ is parameter-independent, optimizing $\mathcal{L}_{\text{VIB}}$ is equivalent to optimizing $\mathcal{L}_{\text{IB}}$ regardless of bound tightness.. Practically, this framework enables end-to-end learning of communication mechanisms through standard deep learning infrastructure—gradient-based optimization, mini-batch training, and GPU acceleration—without requiring explicit symbolic grounding, hand-crafted message spaces, or task-specific communication engineering. The learned communication mechanisms emerge purely from the objective of collaborative prediction under bandwidth constraints, embodying the social predictive coding principle: agents learn to exchange information that is maximally novel and decision-relevant from the receiver's perspective, the essence of effective and efficient communication.

\newpage

\section{Supplementary Results and Analyses}
\begin{figure}[h!]
\centering
\includegraphics[width=0.86\linewidth]{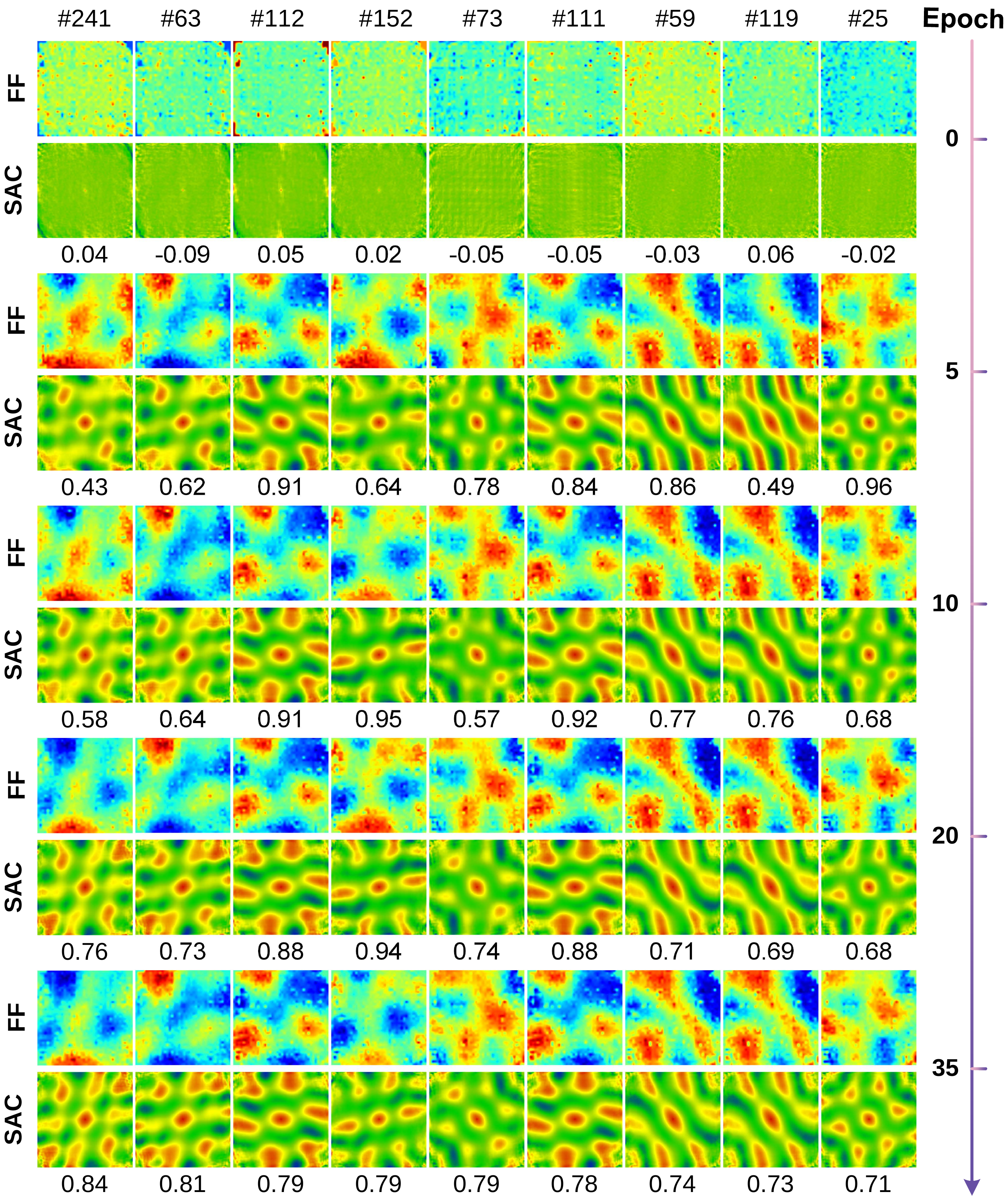}
\caption{\textbf{Emergence of grid-like neural representations through self-supervised training.} 
Spatial activity patterns of nine representative artificial neurons from the path integrator's 
bottleneck layer across training epochs (0, 5, 10, 20, 35). Top row of each pair shows firing 
fields (FF); bottom row shows spatial autocorrelograms (SACs). Gridness scores ($G_{60}$, 
displayed below each SAC) increase from near-zero or negative values at initialization to 
highly positive values after training (up to 0.96), indicating emergence of hexagonal symmetry 
characteristic of biological grid cells. Unit numbers (\#241, \#63, etc.) indicate bottleneck 
layer indices. This progressive transformation demonstrates how predictive learning drives 
self-organization of structured spatial representations.}
\label{fig:S1}
\end{figure}

\begin{figure}[h]
\centering
\includegraphics[width=\linewidth]{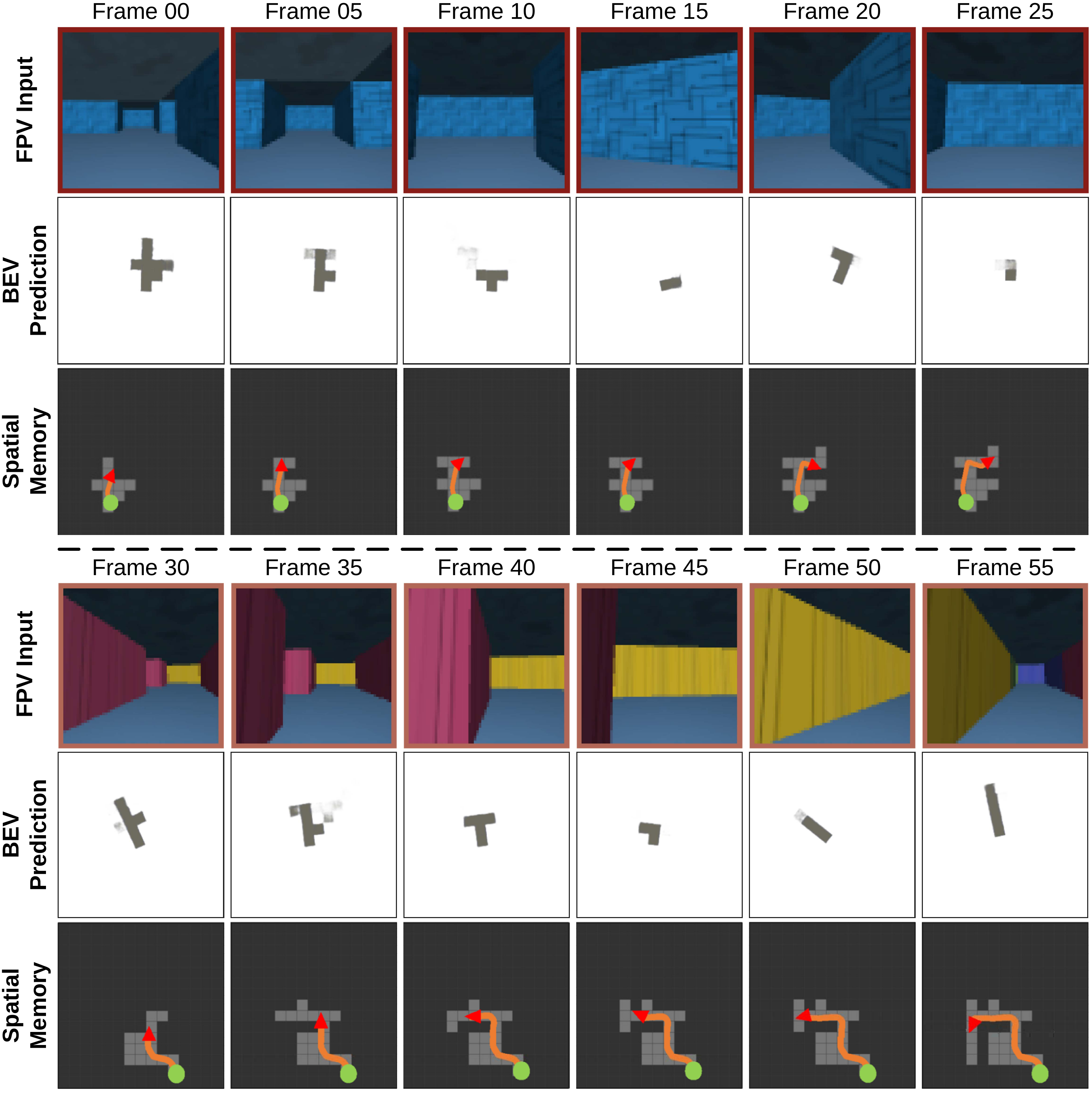}
\caption{\textbf{Visualization of the spatial memory construction pipeline.} The agent's first-person visual (FPV) input (top row) is converted into an instantaneous bird's-eye view (BEV) prediction of the local surroundings (middle row). These predictions are sequentially integrated into a persistent and growing allocentric spatial memory map, which also tracks the agent's trajectory and current pose (bottom row).}
\label{fig:figS-sm}
\end{figure}

\begin{figure}[h]
\centering
\includegraphics[width=\linewidth]{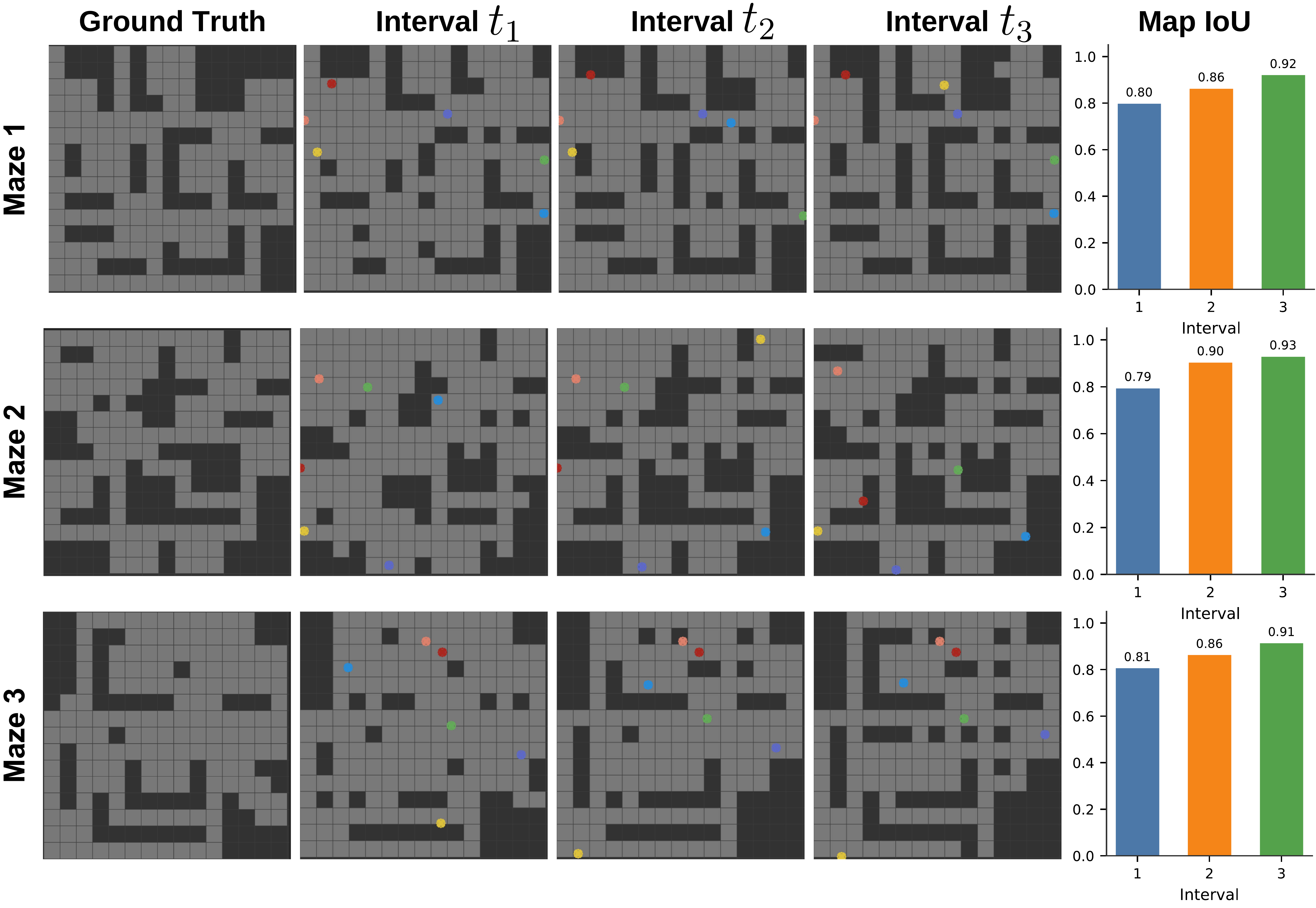}
\caption{\textbf{Visualization of spatial memory map construction over increasing time intervals.} This figure illustrates the progressive construction of the shared bird's-eye-view (BEV) map by the agent team across three different randomly generated maze environments (Maze 1-3). The leftmost column shows the ground truth layout for each maze. The following three columns visualize the state of the collaboratively built map at sequential time intervals ($t_1=1, t_2=2, t_3=3$). The bar charts on the right quantify the accuracy of the reconstructed map via the intersection over union (IoU) metric against the ground truth. The results demonstrate that as the exploration interval increases, the map's completeness and accuracy improve, reflected in the rising IoU scores. Notably, the agents are capable of achieving a high degree of accuracy in reconstructing the maze layout solely through exploration, validating the effectiveness of the spatial memory construction framework.}
\label{fig:S2}
\end{figure}

\begin{figure}[h]
\centering
\includegraphics[width=1\linewidth]{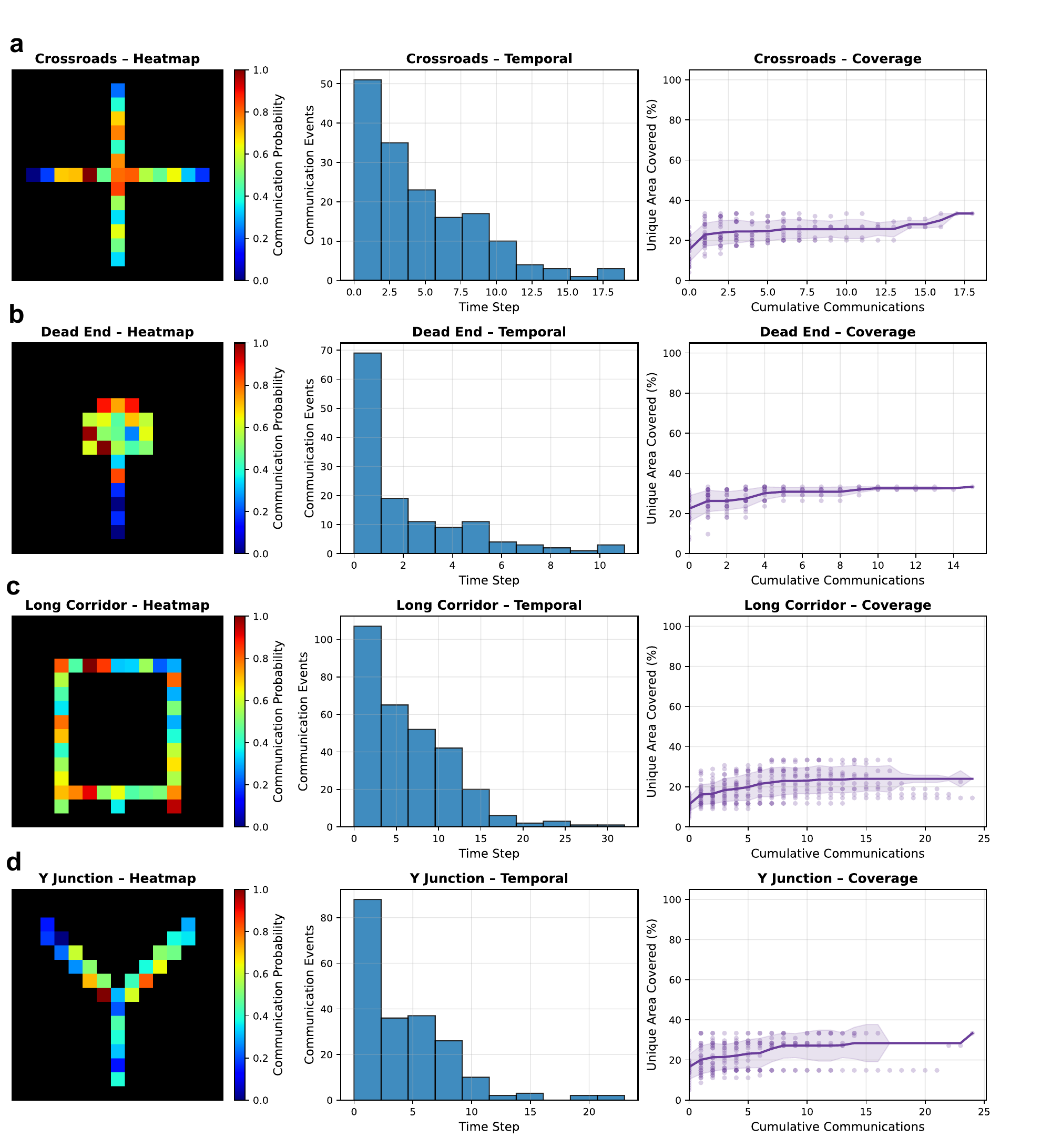}
\caption{\textbf{Strategic communication emerges across diverse maze topologies.} This figure analyzes agents' emergent communication strategies across four canonical maze structures: (a) Crossroads, (b) Dead End, (c) Long Corridor, and (d) Y Junction. For each topology, we visualize message spatial distribution (Heatmap), timing (Temporal), and the relationship between communication and exploration (Coverage). The heatmaps reveal a consistent pattern of ``strategic triggering'': agents communicate at points of high predictive uncertainty for their partners. For instance, in the Crossroads (a) and Y Junction (d), communication peaks at the central intersection, an ambiguous location where information helps coordinate exploration. Conversely, in the Dead End (b), agents communicate from deep within the trap, acting as an efficient ``prediction error'' signal to inform teammates the path is not fruitful. These consistent patterns demonstrate that the social predictive objective drives agents to learn an implicit model of their partners' beliefs, sharing information when and where it best resolves uncertainty.}
\label{fig:S3}
\end{figure}

\begin{figure}[h]
\centering
\includegraphics[width=\linewidth]{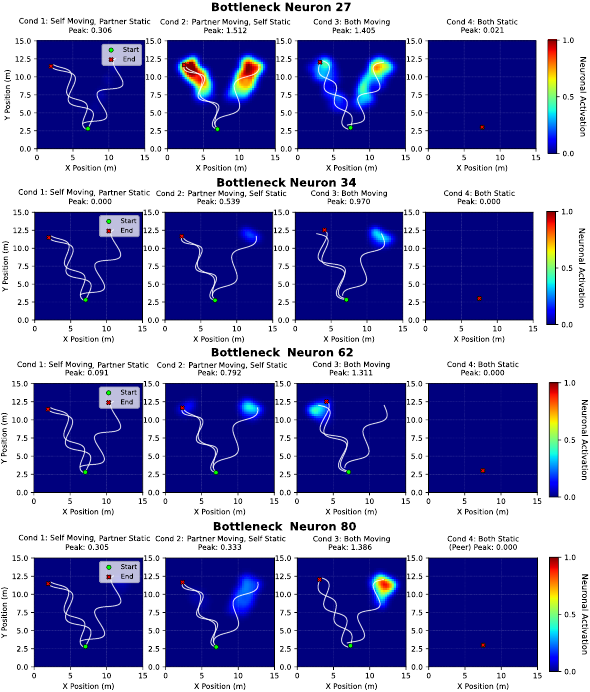}
\caption{\textbf{Gallery of emergent social place cells (SPCs).} 
Representative artificial neurons from the social processing network's bottleneck 
layer that selectively encode the partner's location. Each row displays a single 
artificial neuron's spatial firing field across four experimental conditions: 
(1) Self moving, partner static; (2) Partner moving, self static; (3) Both agents 
moving; (4) Both static. Neuron numbers (e.g., \#27, \#34, \#62, \#80) indicate 
bottleneck layer indices. White lines show agent trajectories; red dots mark 
initial positions; heatmaps represent normalized firing rates (0-1 scale). Peak 
values indicate maximum activation for each condition. These SPCs exhibit strong, 
localized firing fields when the partner moves through a specific area (Condition 2), 
but remain largely silent in response to the agent's own movement (Condition 1). 
This selective tuning to another's position is the defining feature of social 
place cells.}
\label{fig:S4}
\end{figure}

\begin{figure}[h]
\centering
\includegraphics[width=\linewidth]{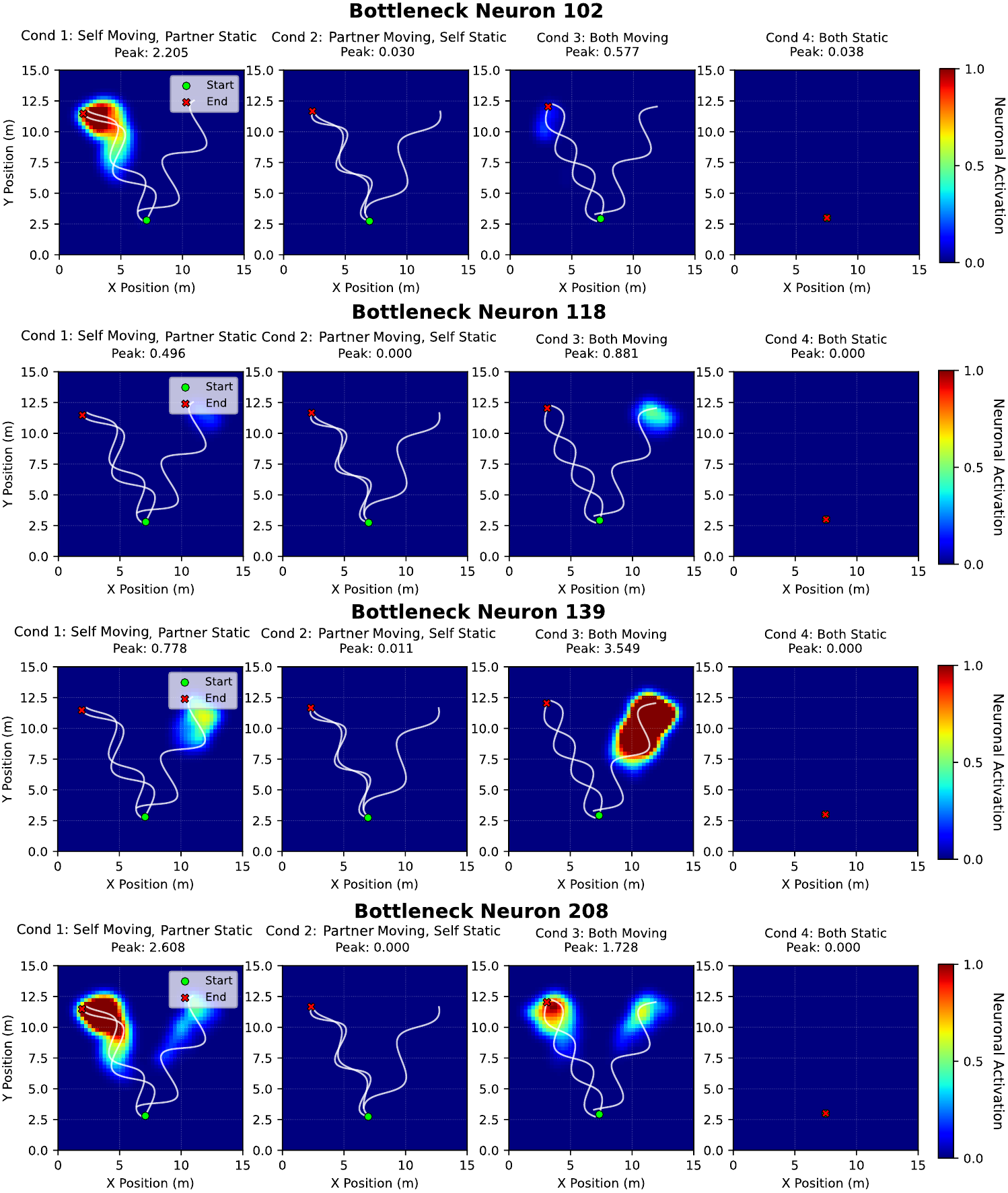}
\caption{\textbf{Gallery of emergent place cells (PCs).} 
Representative artificial neurons from the bottleneck layer that function as 
classical place cells, selectively encoding the agent's own location. Each row 
shows a single artificial neuron's activity across the four experimental conditions 
(layout as in Fig. S4). Neuron numbers (e.g., \#102, \#118, \#139, \#208) indicate 
bottleneck layer indices. These units display strong, stable firing fields when 
the agent itself traverses a specific location (Condition 1), but show negligible 
activation in response to the partner's movement (Condition 2). This demonstrates 
clear encoding of self-position, providing a stable allocentric representation 
for the agent.}
\label{fig:S5}
\end{figure}

\begin{figure}[h]
\centering
\includegraphics[width=\linewidth]{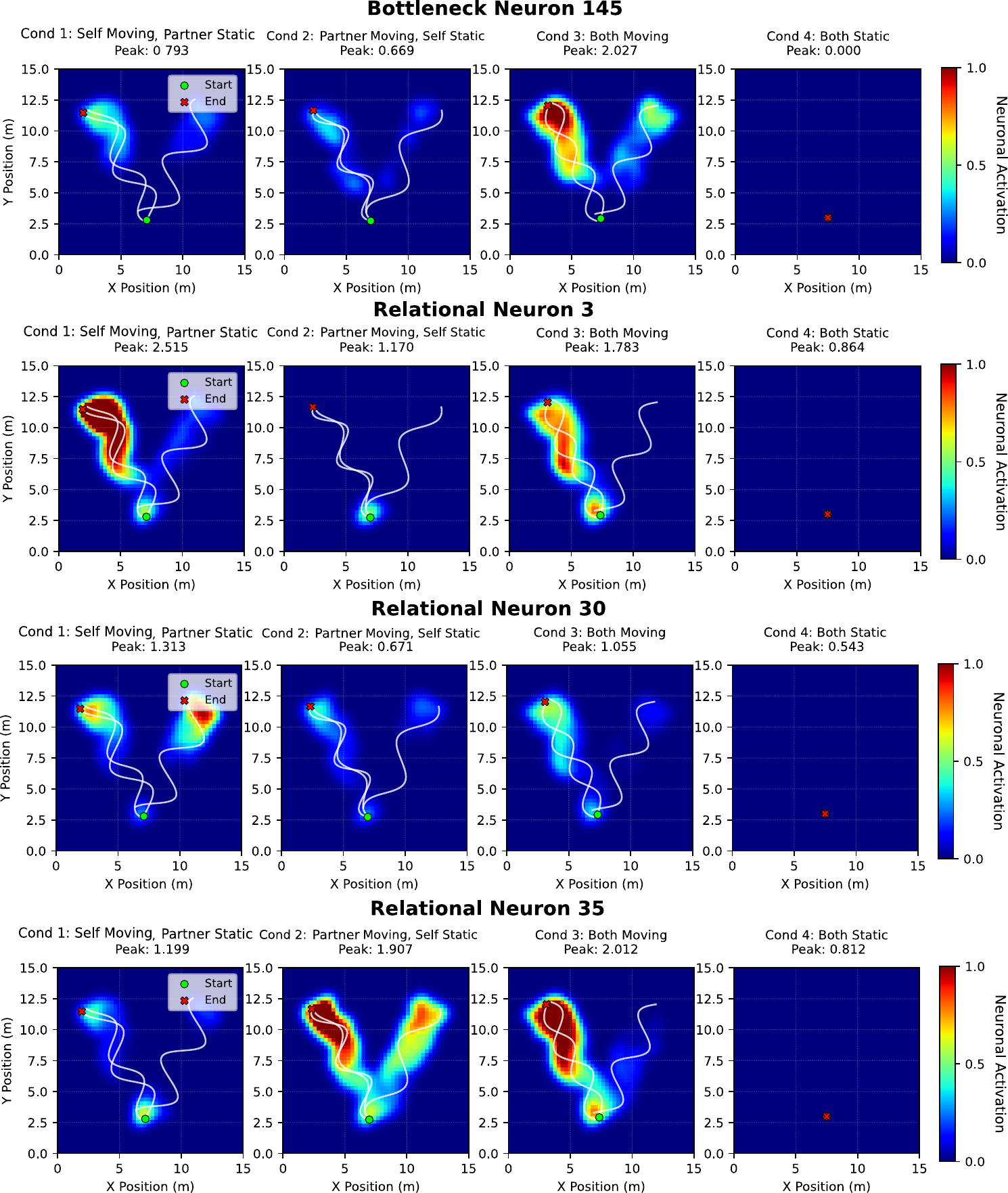}
\caption{\textbf{Gallery of special SPCs with mixed selectivity.} 
Representative artificial neurons with conjunctive encoding of both self and 
partner information. Top row shows a bottleneck neuron (\#145); remaining rows 
show relational head neurons (\#3, \#30, \#35), illustrating that mixed selectivity 
emerges across multiple network layers. Each row displays activity across the 
four experimental conditions (layout as in Fig. S4). Unlike pure PCs or SPCs, 
these artificial neurons show significant activation in response to both the 
agent's own movement (Condition 1) and the partner's movement (Condition 2). 
Firing is often maximal when both agents are moving (Condition 3), indicating 
that these cells encode a higher-order relational variable between the agents 
rather than a simple location. The distinction between ``Bottleneck'' and ``Relational'' 
neurons reflects their position in different processing stages of the network 
architecture.}
\label{fig:S6}
\end{figure}

\begin{figure}[h]
\centering
\includegraphics[width=\linewidth]{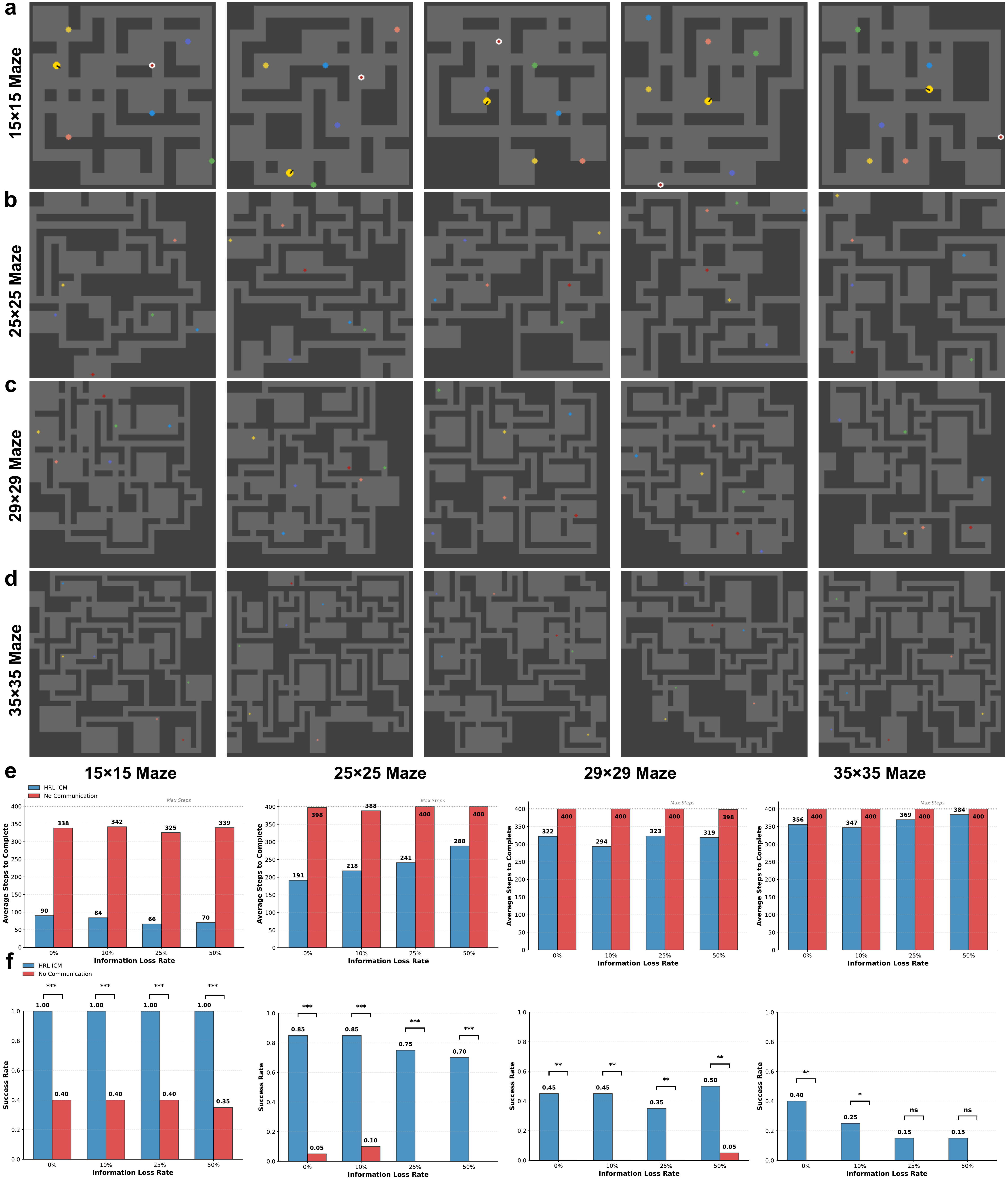}
    \caption{\textbf{Robustness of the learned communication mechanism to noise and scaling environmental complexity.} 
    This figure evaluates the HRL-ICM framework's robustness to communication noise across a suite of procedurally generated mazes of increasing scale and complexity, shown in panels \textbf{(a-d)}. 
    Performance is compared against a ``No Communication'' baseline by measuring the average steps to completion \textbf{(e)} and the task success rate \textbf{(f)} under varying information loss rates (0-50\%).
    The results demonstrate that HRL-ICM (blue) consistently solves tasks with higher efficiency and a significantly greater success rate. 
    Crucially, its performance degrades gracefully as noise increases, whereas the baseline consistently fails. This robust advantage persists even in larger, more complex environments, highlighting the scalability of the learned communication mechanism. 
    Statistical significance is denoted as: \textbf{*} $p < 0.05$, \textbf{**} $p < 0.01$, \textbf{***} $p < 0.001$.
    }
\label{fig:S-noise}
\end{figure}

\clearpage

\newpage

\section{Supplementary Videos}

Video 1: \textbf{Collaborative Mapping and Exploration}. Here, we demonstrate the complete predictive coding framework for shared spatial memory in multi-agent maze navigation tasks. We compare baseline full broadcast communication against our HRL-ICM approach under limited communication budgets (20 rounds). The results show 44.7\% faster task completion through adaptive communication strategies that avoid redundant information exchange, while baseline methods degrade after exhausting communication quotas early in exploration.

\vspace{8pt}

\noindent Video 2: \textbf{Visualization of Emergent Social Place Cells}. Here, we visualize the spontaneous emergence of social place cells that encode partner agent locations through predictive coding. We examine four scenarios with different movement patterns: self-movement with stationary peer, stationary self with moving peer, and synchronized movement. The analysis reveals distinct functional neuron types including pure place cells, pure social place cells, and special social place cells, combining both spatial references.

\vspace{8pt}

\noindent Video 3: \textbf{Causal Intervention on Emergent Communication}. Here, we demonstrate the functional significance of emergent communication through causal intervention experiments in collaborative search tasks. By injecting fake target information into the communication channel, we test whether communicated signals actually influence partner behavior. The intervention causes agents to navigate toward false targets, providing causal evidence that emergent communication mechanisms directly drive decision-making rather than being epiphenomenal.

\vspace{8pt}

\noindent Video 4: \textbf{Full Framework - Collaborative Mapping and Exploration}. Here, we present the complete demonstration of our predictive coding framework pipeline from egocentric vision to bird's-eye-view map reconstruction. We show three increasingly complex tasks involving multi-agent coordination: single-agent BEV reconstruction, two-agent collaboration in multi-room environments, and three-agent coordination in large mazes. The framework achieves emergent division of labor through bandwidth-efficient communication (4-128 bits/step).

\vspace{8pt}

\noindent Video 5: \textbf{Spontaneous Formation of Grid-Cell-like Representations}. Here, we visualize the developmental trajectory of hexagonal grid cell representations emerging through self-supervised motion prediction without explicit supervision. The video shows evolution from random neural activity (Epoch 0) to organized hexagonal tessellations with 60-degree rotational symmetry (Epochs 20-150). Grid cell emergence correlates with dramatically improved trajectory prediction accuracy, demonstrating that predictive uncertainty minimization principles sufficiently explain mammalian entorhinal cortex spatial coding.

\newpage

\clearpage

\renewcommand{\bibsection}{\section*{Supplementary References}}

\clearpage

\end{document}